\newcommand{\Rbb}{\mathbb{R}}
\newcommand{\be}{\begin{equation}}
\newcommand{\ee}{\end{equation}}
\newcommand{\bea}{\begin{eqnarray}}
\newcommand{\eea}{\end{eqnarray}}
\newcommand{\nn}{\nonumber}
\newtheorem{thm}{Theorem}
\newtheorem*{thm*}{Theorem}
\newtheorem{defn}[thm]{Definition}
\newtheorem{cor}[thm]{Corollary}
\newtheorem{lem}[thm]{Lemma}
\renewcommand{\vec}[1]{\ensuremath{\mathbf{#1}}}
\newcommand{\mat}[1]{\ensuremath{\mathbf{#1}}}
\newcommand{\ten}[1]{\mat{\ensuremath{\boldsymbol{\mathcal{#1}}}}}
\DeclareMathOperator*{\Prob}{\mathbb{P}}
\DeclareMathOperator*{\Exp }{\mathbb{E}}
\DeclareMathOperator*{\sign}{\mathrm{sign}}
\DeclareMathOperator*{\sgn}{\mathrm{sign}}
\DeclareMathOperator*{\rank}{\mathrm{rank}}
\newcommand{\guillaume}[1]{}
\newcommand{\behnoush}[1]{}
\newcommand{\paramssymbol}{N}
\newcommand{\params}[1]{\paramssymbol_{#1}}
\newcommand{\vcdim}{d_{\mathrm{VC}}}
\newcommand{\pdim}{\mathrm{Pdim}}
\newcommand{\tndim}{dim}
\newcommand{\Hreg}{\Hcal^{\text{regression}}}
\newcommand{\Hclass}{\Hcal^{\text{classif}}}
\newcommand{\Hcomp}{\Hcal^{\text{completion}}}
\renewcommand{\enspace}{}
\newcommand{\R}{\Rbb}
\xdef\csname \x bb\endcsname{\noexpand\ensuremath{\noexpand\mathbb{\x}}}
\xdef\csname \x cal\endcsname{\noexpand\ensuremath{\noexpand\mathcal{\x}}}
\xdef\csname \x t\endcsname{\noexpand\ensuremath{\noexpand\ten{\x}}}
\xdef\csname \x b\endcsname{\noexpand\ensuremath{\noexpand\mat{\x}}}
\xdef\csname \x b\endcsname{\noexpand\ensuremath{\noexpand\vec{\x}}}
\xdef\csname \x b\endcsname{\noexpand\ensuremath{\noexpand\vec{\x}}}
\newcommand{\A}{\mat{A}}
\newcommand{\T}{\ten{T}}
\newcommand{\W}{\ten{W}}
\newcommand{\G}{\ten{G}}
\newcommand{\X}{\mat{X}}
\newcommand{\I}{\mat{I}}
\newcommand{\M}{\mat{M}}
\renewcommand{\P}{\mat{P}}
\renewcommand{\v}{\vec{v}}
\renewcommand{\u}{\vec{u}}
\newcommand{\x}{\vec{x}}
\newcommand{\y}{\vec{y}}
\newcommand{\w}{\vec{w}}
\DeclareMathOperator*{\Tr}{Tr} %
\newcommand{\bigo}[1]{\mathcal{O}\left(#1\right)}
\newcommand{\eg}{e.g.\ }
\title{Lower and Upper Bounds on the VC-Dimension of Tensor Network Models}
\author{%
  Behnoush Khavari \\
  DIRO \& Mila \\
  Université de Montréal\\
  \texttt{behnoush.khavari@umontreal.ca} 
   \And
     Guillaume Rabusseau \\
  DIRO \& Mila,
  CIFAR AI Chair \\
  Université de Montréal\\
  \texttt{grabus@iro.umontreal.ca} 
}
\begin{document}

\maketitle

\guillaume{Potential extensions:

\begin{itemize}
    \item [0] Put on arXiv:
    \begin{itemize}
        \item clarify the $Y^X$ notation
        \item typeset differently e for edge and e for Euler constant
        \item Page 1, ‘the generalization bound we provide’ should be ‘the generalization bound that we provide’?
    \end{itemize}
    \item [1] lower bound for VC dim \cite{nickel2013analysis}
    \item [1] Numerical study
    \item [2] extend to regression (unified framework with pseudo-dimension)
    \item [3] real world experiments/application 
    \item [3] structured risk minimization \cite{michel2020learning}
    \item [10]((Open question: our results do not take into account the topology of the TN (e.g. cycles, hyper-edges, ...)))
    \item [5]((Other complexity measures: covering numbers and Rademacher complexity.))
\end{itemize}
}
\begin{abstract}
Tensor network~(TN) methods have been a key ingredient of advances in condensed matter physics and have recently sparked interest in the machine learning community for their ability to compactly represent very high-dimensional objects. TN methods can for example be used to efficiently learn linear models in exponentially large feature spaces~\cite{stoudenmire2016supervised}. In this work, we derive upper and lower bounds on the VC-dimension and pseudo-dimension of a large class of TN models for classification, regression and completion. Our upper bounds hold for linear models parameterized by arbitrary TN structures, and we derive lower bounds for common  tensor decomposition models~(CP, Tensor Train, Tensor Ring and Tucker) showing the tightness of our general upper bound. These results are used to derive a generalization bound which can be applied to classification with low-rank matrices as well as linear classifiers based on any of the commonly used tensor decomposition models. As a corollary of our results, we obtain a bound on the VC-dimension of the matrix product state classifier introduced in~\cite{stoudenmire2016supervised} as a function of the so-called bond dimension~(i.e. tensor train rank), which  answers an open problem listed by Cirac, Garre-Rubio and Pérez-García in~\cite{cirac2019mathematical}.
\end{abstract}

\section{Introduction}

Tensor networks~(TNs) have emerged in the quantum physics community as a mean to compactly represent wave functions of large quantum systems~\cite{orus2014practical,biamonte2017tensor,schollwock2011density}. Their introduction in physics can be traced back to the work of Penrose~\cite{penrose1971applications} and  Feynman~\cite{feynman1986quantum}. Akin to matrix factorization, TN methods rely on factorizing a high-order tensor into small factors and have recently gained interest from the machine learning community for their ability to efficiently represent and perform operations on very high-dimensional data and high-order tensors. They have been for example successfully used for compressing models~\cite{novikov2015tensorizing,yang_tensor-train_2017,novikov2014putting,izmailov2018scalable,yu2018tensor}, developing new insights on the expressiveness of deep neural
networks~\cite{cohen2016expressive,khrulkov2018expressive} and designing novel approaches to  supervised~\cite{stoudenmire2016supervised,glasser2020probabilistic} and unsupervised~\cite{stoudenmire2018learning,han_unsupervised_2018,miller2020tensor} learning. Most of these approaches leverage the fact that TN can be used to efficiently parameterize high-dimensional linear maps, which is appealing from two perspectives:  it makes it possible  to  learn  models  in  exponentially  large  feature spaces \emph{and} it acts as a regularizer, controlling the capacity of the class of hypotheses considered for learning.

While the expressive power of TN models has been studied recently~\cite{glasser2019expressive, srinivasan2020quantum}, the focus has mainly been on the representation capacity of TN models, but not on their ability to \emph{generalize} in the context of supervised learning tasks. 
In this work, we study the generalization ability of TN models by deriving lower and upper bounds on the VC-dimension and pseudo-dimension of TN models commonly used for classification, completion and regression, from which bounds on the generalization gap of TN models can be derived.
Using the general framework of tensor networks, we derive a general upper bound for models parameterized by \emph{arbitrary} TN structures, which applies to all commonly used tensor decomposition models~\cite{grasedyck2013literature} such as CP~\cite{hitchcock1927expression}, Tucker~\cite{tucker1966some} and tensor train~(TT))~\cite{oseledets2011tensor}, as well as more sophisticated structures including hierarchical Tucker~\cite{grasedyck2010hierarchical,hackbusch2009new}, tensor ring~(TR)~\cite{zhao2016tensor} and projected entangled state pairs~(PEPS)~\cite{verstraete2008matrix}.

Our analysis proceeds mainly in two steps. First, we formally define the notion of TN learning model by disentangling the underlying graph structure of a TN from its parameters~(the core tensors, or factors, involved in the decomposition). This allows us to define, in a conceptually simple way, the hypothesis class $\Hcal_G$ corresponding to the family of linear models whose weights are  represented using an arbitrary TN structure $G$. We then proceed to deriving upper bounds %
on the VC/pseudo-dimension and generalization error of the class $\Hcal_G$. These bounds follow from a classical result from Warren~\cite{warren1968lower} which  was previously used to obtain generalization bounds for neural networks~\cite{anthony2009neural}, matrix completion~\cite{srebro2005generalization} and tensor completion~\cite{nickel2013analysis}. 
The bounds we derive naturally relate the capacity of $\Hcal_G$ to  the underlying graph structure $G$ through the number of nodes and effective number of parameters of the TN. 
To assess the tightness of our general upper bound, we derive lower bounds for particular TN structures~(rank-one, CP, Tucker, TT and TR). These lower bounds show that, for completion, regression and classification, our general upper bound is tight up to a log  factor  for rank-one, TT and TR tensors, and is tight up to a constant for matrices. 
Lastly, as a corollary of our results, we obtain a bound on the VC-dimension of the tensor train classifier introduced in~\cite{stoudenmire2016supervised}, which answers one of the open problems listed by Cirac, Garre-Rubio and Pérez-García in~\cite{cirac2019mathematical}.

\paragraph{Related work} Machine learning models using low-rank parametrization of the weights have been investigated~(mainly from a practical perspective) for various decomposition models, including low-rank  matrices~\cite{luo2015support,pirsiavash2009bilinear,wolf2007modeling}, CP~\cite{acar2011scalable,makantasis2018tensor,cai2006support}, Tucker~\cite{liu2012tensor,gandy2011tensor, he2018low,rabusseau2016low}, tensor train~\cite{phien2016efficient, chen2021auto,novikov2016exponential,stoudenmire2016supervised,glasser2020probabilistic,selvan2020tensor,chen2017parallelized,wang2017support,xu2018whole} and PEPS~\cite{cheng2020supervised}.
From a more theoretical perspective, generalization bounds for matrix and tensor completion have been derived in~\cite{srebro2005generalization,nickel2013analysis}~(based on the Tucker format for the tensor case). A bound on the VC-dimension of low-rank matrix classifiers was derived in~\cite{wolf2007modeling} and a bound on the pseudo-dimension of regression functions whose weights have low Tucker rank was given in~\cite{rabusseau2016low}~(for both these cases, we show that our results improve over these previous bounds, see Section~\ref{subsec:special.cases}). To the best of our knowledge the VC-dimension of tensor train classifiers has not been studied in the past, but the statistical consistency of the convex relaxation of the tensor completion problem was studied in~\cite{tomioka2011statistical,tomioka2013convex} for the Tucker decomposition and in~\cite{imaizumi2017tensor} for the tensor train decomposition. 
Lastly, in~\cite{michel2020learning} the authors study the complexity of learning with tree tensor networks using the notion of metric entropy and covering numbers. They provide generalization bounds which are qualitatively similar to ours, but their results only hold for TN structures whose underlying graph is a tree~(thus excluding models such as CP, tensor ring and PEPS) and they do not provide lower bounds. 

\paragraph{Summary of contributions}  We introduce a \emph{unifying framework for TN-based learning models}, which generalizes a wide range of models based on tensor factorization for completion, classification and regression. This framework allows us to consider the class $\Hcal_G$ of low-rank TN models for a given \emph{arbitrary TN structure} $G$~(Section~\ref{sec:TN.for.classif}).
     We provide general \emph{upper bounds on the pseudo-dimension and VC-dimension} of the hypothesis class $\Hcal_G$ \emph{for  arbitrary TN  structure $G$} for regression, classification and completion.  Our results naturally relate the capacity of $\Hcal_G$ to the number of parameters of the underlying TN structure $G$~(Section~\ref{sec:Generalization Bound}). From these results, we derive a \emph{generalization bound for TN-based classifiers parameterized by arbitrary TN structures}~(Theorem~\ref{thm:generalization.bound.TN.classifier}).
     We compare our results to previous bounds for specific decomposition models and show that our  general upper bound is always of the same order and sometimes even improves on previous  bounds~(Section~\ref{subsec:special.cases}). We derive several lower bounds  showing that our general upper bound is tight up to a log factor for particular TN structures~(Section~\ref{sec:lower.bounds}). A summary of the lower bounds derived in this work, as well as upper bounds implied by  our general result  for particular TN structures, can be found in Table~\ref{tab:bounds.summary} at the end of the paper.

\section{Preliminaries}
In this section, we present basic notions of tensor algebra and tensor networks as well as generalization bounds based on combinatorial complexity measures.
We start by introducing some notations.
For any integer $k$ we use $[k]$ to denote the set of integers from $1$ to $k$. 
We use lower case bold letters  for vectors (\eg $\vec{v} \in \Rbb^{d_1}$),
upper case bold letters for matrices (\eg $\M \in \Rbb^{d_1 \times d_2}$) and
bold calligraphic letters for higher order tensors (\eg $\T \in \Rbb^{d_1\times d_2 \times d_3}$). 
The inner product of two $k$-th order tensors $\ten{S},\ten{T}\in\mathbb{R}^{d_1\times\dots\times d_k}$ is defined by
$\langle \ten{T}, \ten{S}\rangle = \sum_{i_1=1}^{d_1}\dots\sum_{i_k = 1}^{d_k}\ten{T}_{i_1\dots i_k}\ten{S}_{i_1\dots i_k}$.
The outer product of two vectors $\u\in\mathbb{R}^{d_1}$ and $\v\in\mathbb{R}^{d_2}$ is denoted by $\u\otimes \v\in\mathbb{R}^{d_1\times d_2}$ with elements $(\u\otimes \v)_{i,j}=\u_i\v_j$. The outer product generalizes to an arbitrary number of vectors. We use the notation $(\Rbb^d)^{\otimes p}$ to denote the space of $p$-th order hypercubic tensors of size $d\times d\times \cdots \times d$.  We denote by $\Ycal^{\Xcal}$ the space of functions  $f:\Xcal\mapsto\Ycal$. $\sign(\cdot)$ stands for the sign function.
Finally, given a graph $G=(V,E)$ and a vertex $v\in V$, we denote by $E_v = \{e\in E\mid v\in e\}$ the set of edges  incident to the vertex $v$.

\subsection{Tensors and Tensor Networks}\label{sec:prelim.TN.structures}

\paragraph{Tensor networks} A \emph{tensor} $\T\in \Rbb^{d_1\times\cdots \times d_p}$ can simply be seen
as a multidimensional array $(\T_{i_1,\cdots,i_p}\ : \ i_n\in [d_n], n\in [p])$. 
Complex operations on tensors can be intuitively represented using the graphical notation of
tensor network~(TN) diagrams~\cite{biamonte2017tensor,orus2014practical}. In tensor networks, a $p$-th order tensor is illustrated 
as a node with $p$ edges~(or \emph{legs}) in a graph \begin{tikzpicture}[baseline=-0.5ex]
\tikzset{tensor/.style = {minimum size = 0.4cm,shape = circle,thick,draw=black,fill=blue!60!green!40!white,inner sep = 0pt}, edge/.style   = {thick,line width=.4mm},every loop/.style={}}
\def\x{6}
\node[tensor] (U) at (\x,0) {$\ten{T}$};
\draw[edge] (U) -- (\x-0.6,0);
\draw[edge] (U) -- (\x+0.6,0);
\draw[edge] (U) -- (\x-0.5,-0.3);
\draw[dotted] (U) -- (\x,-0.6);
\draw[dotted] (U) -- (\x,-0.6);
\draw[dotted] (U) -- (\x+0.5,-0.5);
\draw[dotted] (U) -- (\x+0.3,-0.6);
\node[draw=none] () at (\x-0.4,0.2) {\textcolor{gray}{$d_1$}};
\node[draw=none] () at (\x-0.3,-0.4) {\textcolor{gray}{$d_2$}};
\node[draw=none] () at (\x+0.4,0.2) {\textcolor{gray}{$d_p$}};
\end{tikzpicture}.
An edge between
two nodes of a TN represents a contraction over the corresponding modes of the two tensors.
Consider the following simple TN with two nodes:
\begin{tikzpicture}[baseline=-0.5ex]
\tikzset{tensor/.style = {minimum size = 0.4cm,shape = circle,thick,draw=black,fill=blue!60!green!40!white,inner sep = 0pt}, edge/.style   = {thick,line width=.4mm},every loop/.style={}}
\def\x{0}
\def\y{0}
\node[tensor] (A) at (\x,\y) {$\Ab$};
\node[tensor] (x) at (\x+1,\y) {$\xb$};
\draw[edge] (A) -- (x); 
\draw[edge] (A) -- (\x-0.75,\y); 
\node[draw=none] () at (\x-0.5,\y+0.2) {\textcolor{gray}{$m$}};
\node[draw=none] () at (\x+0.5,\y+0.2) {\textcolor{gray}{$n$}};
\end{tikzpicture}.
The first node represents a matrix $\Ab\in\R^{m\times n}$ and the second one a vector $\x\in\R^{n}$. Since this TN has one dangling leg~(i.e. an edge which is not connected to any other node), it represents a first order tensor, i.e. a vector. The edge between the second leg of $\Ab$ and the leg of $\x$ corresponds to a contraction between the second mode of $\Ab$ and the first mode of $\x$. Hence, the resulting TN represents the classical matrix-product, which can be seen by calculating the $i$-th component of this TN:
\begin{tikzpicture}[baseline=-0.5ex]
\tikzset{tensor/.style = {minimum size = 0.4cm,shape = circle,thick,draw=black,fill=blue!60!green!40!white,inner sep = 0pt}, edge/.style   = {thick,line width=.4mm},every loop/.style={}}
\def\x{0}
\def\y{0}
\node[tensor] (A) at (\x,\y) {$\Ab$};
\node[tensor] (x) at (\x+0.75,\y) {$\xb$};
\draw[edge] (A) -- (x); 
\draw[edge] (A) -- (\x-0.5,\y); 
\node[draw=none] () at (\x-0.6,\y) {$i$};
\node[draw=none] at (\x+3,\y) {$=\sum_j\Ab_{ij}\xb_j = (\Ab\xb)_{i}$};
\end{tikzpicture}.
Other examples of TN representations of common operations on  matrices and vectors can be found in Figure~\ref{fig:tensor.network.commonoperations}.
A special case of TN is the tensor train decomposition~\cite{oseledets2011tensor} which factorizes a $n$-th order tensor $\ten{T}$ in the form 
   \scalebox{0.7}{\begin{tikzpicture}[baseline=-0.5ex]
\tikzset{tensor/.style = {minimum size = 0.4cm,shape = circle,thick,draw=black,fill=blue!60!green!40!white,inner sep = 1pt}, edge/.style   = {thick,line width=.4mm}}
\def\x{5}
\node[tensor] (T) at (\x,0) {$\ten{G}_1$};
\draw[edge] (T) -- (\x+0.8,0);
\draw[edge] (T) -- (\x,-0.6);
\node[draw=none] () at (\x+0.3,-0.6) {\textcolor{gray}{$d_1$}};
\node[draw=none] () at (\x+0.8,0.2) {\textcolor{gray}{$r_1$}};

\def\x{6.5}
\node[tensor] (S) at (\x,0) {$\ten{G}_2$};
\draw[edge] (S) -- (\x-0.8,0);
\draw[edge] (S) -- (\x+0.8,0);
\draw[edge] (S) -- (\x,-0.6);

\node[draw=none] () at (\x+0.3,-0.6) {\textcolor{gray}{$d_2$}};
\node[draw=none] () at (\x+0.5,0.2) {\textcolor{gray}{$r_2$}};

\def\x{8}
\node[draw=none] (g) at (\x,0)
{\textcolor{gray}{$ $}};
\draw[dotted] (g) -- (\x-0.6,0);
\draw[dotted] (g) -- (\x+0.6,0);
\draw[dotted] (g) -- (\x,-0.6);

\def\x{9.5}
\node[tensor] (f) at (\x,0) {$\ten{G}_{\text{\tiny{n-1}}}$};
\draw[edge] (f) -- (\x-0.8,0);
\draw[edge] (f) -- (\x+0.8,0);
\draw[edge] (f) -- (\x,-0.6);

\node[draw=none] () at (\x-0.7,0.2) {\textcolor{gray}{$r_{n-2}$}};
\node[draw=none] () at (\x+0.4,-0.6) {\textcolor{gray}{$d_{n-1}$}};
\node[draw=none] () at (\x+0.8,0.2) {\textcolor{gray}{$r_{n-1}$}};

\def\x{11}
\node[tensor] (h) at (\x,0) {$\ten{G}_n$};
\draw[edge] (h) -- (\x-0.8,0);
\draw[edge] (h) -- (\x,-0.6);

\node[draw=none] () at (\x+0.3,-0.6) {\textcolor{gray}{$d_n$}};

\end{tikzpicture}
}. This corresponds to 
\begin{align}\label{Utt}
    \ten{T}_{i_1 , i_2, \dots , i_n} = \sum_{\alpha_1=1}^{r_1}&\dots\sum_{\alpha_{n-1}=1}^{r_{n-1}}
    (\ten{G}_1)_{i_1,\alpha_1} (\ten{G}_2)_{ \alpha_1,i_2,\alpha_2}\dots  (\ten{G}_{n-1})_{\alpha_{n-2},i_{n-1},\alpha_{n-1}} (\ten{G}_{n})_{\alpha_{n-1},i_{n}}%
\end{align}
where  the tuple $(r_i)_{i=1}^{n-1}$  associated with the TT representation is called TT-rank.

\begin{figure}
    \centering
    \begin{tikzpicture}
\tikzset{tensor/.style = {minimum size = 0.4cm,shape = circle,thick,draw=black,fill=blue!60!green!40!white,inner sep = 0pt}, edge/.style   = {thick,line width=.4mm},every loop/.style={}}

\def\x{0}
\def\y{0}
\node[tensor] (A2) at (\x,\y) {$\Ab$};
\node[tensor] (x2) at (\x+0.75,\y) {$\Bb$};
\draw[edge] (A2) -- (x2); 
\draw[edge] (A2) -- (\x-0.5,\y); 
\draw[edge] (x2) -- (\x+1.3,\y); 
\node[draw=none] at (\x+1.8,\y) {$=\Ab\Bb$};

\def\x{3.5}
\def\y{0}
\node[tensor] (A) at (\x,\y) {$\Ab$};
\path  (A)  edge [loop above,edge,in=20,out=160,distance=1cm]  (A);
\node[draw=none] at (\x+1.25,\y) {$=\Tr(\Ab)$};

\def\x{6.5}
\def\y{0}
\node[tensor] (x3) at (\x,\y) {$\xb$};
\node[tensor] (M) at (\x+0.6,\y) {$\Mb$};
\node[tensor] (y) at (\x+1.2,\y) {$\yb$};
\draw[edge] (x3) -- (M);
\draw[edge] (y) -- (M);
\node[draw=none] at (\x+2.4,\y) {$=\xb^\top\M\yb$};

\end{tikzpicture}

    \caption{ \ Tensor network representation of common operations on matrices and vectors.}
    \label{fig:tensor.network.commonoperations}
\end{figure}

\paragraph{Tensor network structures} A tensor network~(TN) can be fundamentally decomposed in two constituent parts: a tensor network structure, which describes its graphical structure, and a set of core tensors assigned to each node. For example, the tensor  in $
 \Rbb^{d_1\times d_2 \times d_3 \times d_4}$ represented by the TN
\scalebox{0.8}{
 \begin{tikzpicture}[baseline=-0.5ex]
\tikzset{tensor/.style = {minimum size = 0.4cm,shape = circle,thick,draw=black,fill=blue!60!green!40!white,inner sep = 0pt}, edge/.style   = {thick,line width=.4mm},every loop/.style={}}

\def\x{0}
\node[tensor] (T) at (\x,0) {$\ten{T}$};
\draw[edge] (T) -- (\x-0.6,0);
\draw[edge] (T) -- (\x,-0.5);

\node[draw=none] () at (\x-0.4,0.2) {\textcolor{gray}{$d_1$}};
\node[draw=none] () at (\x-0.2,-0.3) {\textcolor{gray}{$d_2$}};
\node[draw=none] () at (\x+0.5,0.2) {\textcolor{gray}{$R$}};
\def\x{1}
\node[tensor] (S) at (\x,0) {$\ten{S}$};
\draw[edge] (S) -- (\x+0.6,0);
\draw[edge] (S) -- (\x,-0.5);
\draw[edge] (S) -- (T);

\node[draw=none] () at (\x+0.2,-0.3) {\textcolor{gray}{$d_3$}};
\node[draw=none] () at (\x+0.4,0.2) {\textcolor{gray}{$d_4$}};

\end{tikzpicture}}
is obtained by assigning the core tensors $\T\in\Rbb^{d_1\times d_2\times R}$ and $\St\in\Rbb^{R\times d_3\times d_4}$ to the nodes of the
TN structure  
\scalebox{0.7}{
\begin{tikzpicture}[baseline=-0.5ex]
\tikzset{tensor/.style = {minimum size = 0.4cm,shape = circle,thick,draw=black,fill=blue!60!green!40!white,inner sep = 0pt}, edge/.style   = {thick,line width=.4mm},every loop/.style={}}

\def\x{0}
\node[tensor] (T) at (\x,0) {};
\draw[edge] (T) -- (\x-0.6,0);
\draw[edge] (T) -- (\x,-0.5);

\node[draw=none] () at (\x-0.4,0.2) {\textcolor{gray}{$d_1$}};
\node[draw=none] () at (\x-0.2,-0.3) {\textcolor{gray}{$d_2$}};
\node[draw=none] () at (\x+0.5,0.2) {\textcolor{gray}{$R$}};
\def\x{1}
\node[tensor] (S) at (\x,0) {};
\draw[edge] (S) -- (\x+0.6,0);
\draw[edge] (S) -- (\x,-0.5);
\draw[edge] (S) -- (T);

\node[draw=none] () at (\x+0.2,-0.3) {\textcolor{gray}{$d_3$}};
\node[draw=none] () at (\x+0.4,0.2) {\textcolor{gray}{$d_4$}};

\end{tikzpicture}}.

Formally, a \emph{tensor network structure} is given by a graph $G=(V,E,\tndim)$ where edges are labeled by integers: $V$ is the set of vertices, $E\subset V\cup (V\times V)$ is a set of edges containing both classical edges~($e\in V\times V$) and singleton edges~($e\in V$) and $\tndim:E\to\Nbb$ assigns a dimension to each edge in the graph. The set of singleton edges $\delta_G=E\cap V$  %
corresponds to the dangling legs of a TN. Given a TN structure $G$, one obtains a tensor by assigning a core tensor $\T^v\in \bigotimes_{e\in E_v}\Rbb^{\tndim(e)}$ to each vertex $v$ in the graph, where  $E_v = \{e\in E\mid v\in e\}$. The resulting tensor, denoted by $TN(G, \{\T^v\}_{v\in V})$, is a tensor of order $|\delta_G|$ in the tensor product space $\bigotimes_{e\in \delta_G} \mathbb{R}^{\tndim(e)}$. Given a tensor structure $G=(V,E,\tndim)$, the set of all tensors that can be obtained by assigning core tensors to the vertices of $G$ is denoted by $\ten{T}(G)\subset \bigotimes_{e\in \delta_G} \mathbb{R}^{\tndim(e)}$:
\begin{equation}
    \label{eq:def.of.T(G)}
\ten{T}(G) = \{ TN(G, \{\T^v\}_{v\in V})\ :  \T^v\in \bigotimes_{e \in E_v}\Rbb^{\tndim(e)}, v\in V\}.
\end{equation}
As an illustration, one can check that the set of $m\times n$ matrices of rank at most $r$ is equal to 
$\T(
\scalebox{0.7}{
\begin{tikzpicture}[baseline=-0.5ex]
\tikzset{tensor/.style = {minimum size = 0.4cm,shape = circle,thick,draw=black,fill=blue!60!green!40!white,inner sep = 0pt}, edge/.style   = {thick,line width=.4mm},every loop/.style={}}

\def\x{0}
\node[tensor] (T) at (\x,0) {};
\draw[edge] (T) -- (\x-0.6,0);

\node[draw=none] () at (\x-0.4,0.2) {\textcolor{gray}{$m$}};
\node[draw=none] () at (\x+0.5,0.2) {\textcolor{gray}{$r$}};
\def\x{1}
\node[tensor] (S) at (\x,0) {};
\draw[edge] (S) -- (\x+0.6,0);
\draw[edge] (S) -- (T);

\node[draw=none] () at (\x+0.4,0.2) {\textcolor{gray}{$n$}};

\end{tikzpicture}
}
)$. 
Similarly, the set of 4th order $d$-dimensional tensors of TT rank at most $r$ is equal to 
$\T(
\scalebox{0.7}{
\begin{tikzpicture}[baseline=-0.5ex]
\tikzset{tensor/.style = {minimum size = 0.4cm,shape = circle,thick,draw=black,fill=blue!60!green!40!white,inner sep = 0pt}, edge/.style   = {thick,line width=.4mm},every loop/.style={}}

\def\x{0}
\node[tensor] (T) at (\x,0) {};
\draw[edge] (T) -- (\x,-0.5);

\node[draw=none] () at (\x-0.2,-0.3) {\textcolor{gray}{$d$}};
\node[draw=none] () at (\x+0.5,0.2) {\textcolor{gray}{$r$}};

\def\x{1}
\node[tensor] (S) at (\x,0) {};
\draw[edge] (S) -- (\x+0.6,0);
\draw[edge] (S) -- (\x,-0.5);
\draw[edge] (S) -- (T);
\node[draw=none] () at (\x+0.2,-0.3) {\textcolor{gray}{$d$}};
\node[draw=none] () at (\x+0.5,0.2) {\textcolor{gray}{$r$}};
\def\x{2}
\node[tensor] (U) at (\x,0) {};
\draw[edge] (U) -- (\x+0.6,0);
\draw[edge] (U) -- (\x,-0.5);
\draw[edge] (S) -- (U);
\node[draw=none] () at (\x+0.2,-0.3) {\textcolor{gray}{$d$}};
\node[draw=none] () at (\x+0.5,0.2) {\textcolor{gray}{$r$}};
\def\x{3}
\node[tensor] (V) at (\x,0) {};
\draw[edge] (V) -- (\x,-0.5);
\draw[edge] (V) -- (U);
\node[draw=none] () at (\x+0.2,-0.3) {\textcolor{gray}{$d$}};
\end{tikzpicture}
}
)$.

Finally, for a given graph structure $G$, the number of parameters of any member of the family $\ten{T}(G)$ in Equation~\eqref{eq:def.of.T(G)}~(which is the total number of entries of the core tensors $\{\T^v\}_{v\in V}$) is given by
\be\label{eq:number_parameters}
\params{G} = \sum_{v\in V}\prod_{e \in E_v }dim(e)
\ee
This will be a central quantity in the generalization bounds and bounds on the VC-dimension of TN models we derive in Section~\ref{sec:main.results}.

\paragraph{Common tensor network structures}
In Figure~\ref{fig:common.TN.structures}, we show the tensor network structures associated with classical tensor decomposition models such as  CP, Tucker~\cite{tucker1966some} and tensor train~(TT)~\cite{oseledets2011tensor}, also known as matrix product state~(MPS)~\cite{orus2014practical,schollwock2011density}. For the case of the Candecomp/Parafac~(CP) decomposition~\cite{hitchcock1927expression}, note that the TN structure is a hyper-graph rather than a graph. We introduced the notion of TN structure focusing on graphs for clarity of exposition in the previous paragraph, but our formalism and results can be straightforwardly extended to hyper-graph TN structures. 
In addition, we include the tensor ring~(TR)~\cite{zhao2016tensor}~(also known as periodic MPS) and PEPS decompositions which have initially emerged in quantum physics and recently gained interest in the machine learning community~(see e.g., \cite{cheng2020supervised,wang2017efficient,wang2018wide,yuan2018higher}). We also show the hierarchical Tucker decomposition initially introduced in~\cite{grasedyck2010hierarchical,hackbusch2009new}.
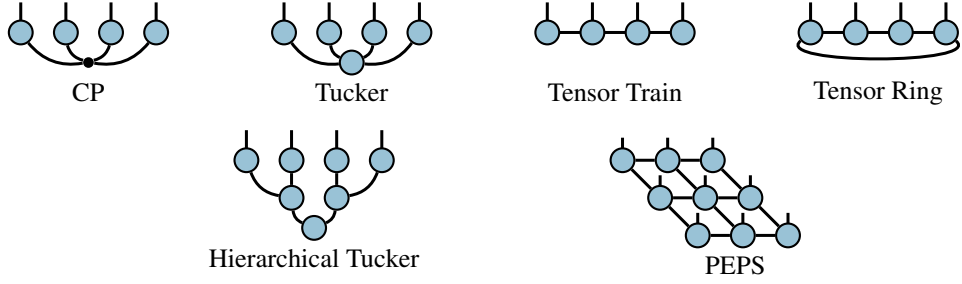
\begin{figure*}
\centering
\begin{tikzpicture}
    \tikzset{tensor/.style = {minimum size = 0.4cm,shape = circle,thick,draw=black,fill=blue!60!green!40!white,inner sep = 0pt}, edge/.style   = {thick,line width=.4mm},every loop/.style={}}
    
  \pgfmathsetmacro{\start}{0}
  \pgfmathsetmacro{\inc}{0.6}
  
  \foreach [count=\jj] \ii in {0,...,3} {
  \pgfmathsetmacro{\x}{\start+\ii*\inc}
   \draw[edge] (\x,0)  -- (\x,0.4)  {};
   \node[tensor, inner sep=0pt, minimum size=9pt] (cp\ii) at (\x ,0) {};
  }
 \node[fill=black,minimum size=0pt,inner sep=0.05cm,circle] (dot) at (\start+1.5*\inc,-0.4) {};
 \draw[thick,line width=.4mm,bend left] (dot) edge (cp0);
 \draw[thick,line width=.4mm,bend left] (dot) edge (cp1);
 \draw[thick,line width=.4mm,bend right] (dot) edge (cp2);
 \draw[thick,line width=.4mm,bend right] (dot) edge (cp3);
 \node[draw=none] (cptext) at (\start+1.5*\inc,-0.8) {CP};
  
  \pgfmathsetmacro{\start}{3.5}
  \pgfmathsetmacro{\inc}{0.6}
  
  \foreach [count=\jj] \ii in {0,...,3} {
  \pgfmathsetmacro{\x}{\start+\ii*\inc}
   \draw[edge] (\x,0)  -- (\x,0.4)  {};
   \node[tensor, inner sep=0pt, minimum size=9pt] (tt\ii) at (\x ,0) {};
  }
 \node[tensor, inner sep=0pt, minimum size=9pt] (core) at (\start+1.5*\inc,-0.4) {};
 \draw[thick,line width=.4mm,bend left] (core) edge (tt0);
 \draw[thick,line width=.4mm,bend left] (core) edge (tt1);
 \draw[thick,line width=.4mm,bend right] (core) edge (tt2);
 \draw[thick,line width=.4mm,bend right] (core) edge (tt3);
 \node[draw=none] (tuckertext) at (\start+1.5*\inc,-0.8) {Tucker};

  \pgfmathsetmacro{\start}{7}
  \pgfmathsetmacro{\inc}{0.6}
  \pgfmathsetmacro{\y}{0}
    \draw[edge] (\start,\y)  -- (\start+3*\inc,\y)  {};
  \foreach [count=\jj] \ii in {0,...,3} {
  \pgfmathsetmacro{\x}{\start+\ii*\inc}
   \draw[edge] (\x,\y)  -- (\x,\y+0.4)  {};
   \node[tensor, inner sep=0pt, minimum size=9pt] (tr\ii) at (\x ,\y) {};
  }
 \node[draw=none] (trtext) at (\start+1.5*\inc,-0.8) {Tensor Train};

  \pgfmathsetmacro{\start}{10.5}
  \pgfmathsetmacro{\inc}{0.6}
  \pgfmathsetmacro{\y}{0}
    \draw[edge] (\start,\y)  -- (\start+3*\inc,\y)  {};
  \foreach [count=\jj] \ii in {0,...,3} {
  \pgfmathsetmacro{\x}{\start+\ii*\inc}
   \draw[edge] (\x,\y)  -- (\x,\y+0.4)  {};
   \node[tensor, inner sep=0pt, minimum size=9pt] (tr\ii) at (\x ,\y) {};
  }
\path  (tr0) edge  [bend right,edge,in=320,out=220,distance=0.5cm]  (tr3);
 \node[draw=none] (trtext) at (\start+1.5*\inc,-0.8) {Tensor Ring};

\pgfmathsetmacro{\y}{-2.2}
\pgfmathsetmacro{\start}{3}
  \pgfmathsetmacro{\inc}{0.6}
  
  \foreach [count=\jj] \ii in {0,...,1} {
  \pgfmathsetmacro{\x}{\start+\ii*\inc + \inc}
   \node[tensor, inner sep=0pt, minimum size=9pt] (ht\ii) at (\x ,\y) {};
  }
    \foreach [count=\jj] \ii in {0,...,3} {
  \pgfmathsetmacro{\x}{\start+\ii*\inc}
   \draw[edge] (\x,\y+0.5)  -- (\x,\y+0.9)  {};
   \node[tensor, inner sep=0pt, minimum size=9pt] (ht2-\ii) at (\x ,\y+0.5) {};
  }
 \node[tensor, inner sep=0pt, minimum size=9pt] (core) at (\start+1.5*\inc,\y-0.4) {};
 \draw[thick,line width=.4mm,bend left] (core) edge (ht0);
 \draw[thick,line width=.4mm,bend right] (core) edge (ht1);
 \draw[thick,line width=.4mm,bend left] (ht0) edge (ht2-0);
 \draw[thick,line width=.4mm] (ht0) edge (ht2-1);
 \draw[thick,line width=.4mm] (ht1) edge (ht2-2);
 \draw[thick,line width=.4mm,bend right] (ht1) edge (ht2-3);

 \node[draw=none] (tuckertext) at (\start+1.5*\inc,\y-0.8) {Hierarchical Tucker};
  
  \pgfmathsetmacro{\y}{-3.2}
\pgfmathsetmacro{\start}{8}
  \pgfmathsetmacro{\inc}{0.6}
  
  \foreach [count=\jj] \ii in {0,...,1} {
  \pgfmathsetmacro{\x}{\start+\ii*\inc + \inc}
  }
    \foreach [count=\jj] \ii in {0,...,2} {
  \pgfmathsetmacro{\x}{\start+\ii*\inc}
   \draw[edge] (\x+1,\y+0.5)  -- (\x+1,\y+0.8)  {}; %
   \draw[edge] (\x+.5,\y+1)  -- (\x+.5,\y+1.3)  {}; %
   \draw[edge] (\x,\y+1.5)  -- (\x,\y+1.8)  {}; %

   \node[tensor, inner sep=0pt, minimum size=9pt] (ht2-\ii) at (\x+1 ,\y+0.5) {}; %
   \node[tensor, inner sep=0pt, minimum size=9pt] (ht3-\ii) at (\x+.5 ,\y+1) {}; %
    \node[tensor, inner sep=0pt, minimum size=9pt] (ht4-\ii) at (\x ,\y+1.5) {}; %
 }
 \draw[thick,line width=.4mm] (ht2-0) edge (ht2-1);
 \draw[thick,line width=.4mm] (ht2-1) edge (ht2-2);  
 \draw[thick,line width=.4mm] (ht3-0) edge (ht3-1);
 \draw[thick,line width=.4mm] (ht3-1) edge (ht3-2);  
 \draw[thick,line width=.4mm] (ht4-0) edge (ht4-1);
 \draw[thick,line width=.4mm] (ht4-1) edge (ht4-2);  
 \draw[thick,line width=.4mm] (ht2-0) edge (ht3-0);
 \draw[thick,line width=.4mm] (ht2-1) edge (ht3-1);
 \draw[thick,line width=.4mm] (ht2-2) edge (ht3-2);
\draw[thick,line width=.4mm] (ht3-0) edge (ht4-0);
 \draw[thick,line width=.4mm] (ht3-1) edge (ht4-1);
 \draw[thick,line width=.4mm] (ht3-2) edge (ht4-2);

  \node[draw=none] (tuckertext) at (\start+2.5*\inc,\y+.1) {PEPS  };

\end{tikzpicture}

\caption{TN representation of common decomposition models for  4th order and $9$th order tensors. For CP, the black dot represents a hyperedge corresponding to a joint contraction over 4 indices.} \label{fig:common.TN.structures}
\end{figure*}

\subsection{Generalization Bound and Complexity Measures }\label{section:vc}
The goal of supervised learning is to learn a function $f$ mapping inputs $x\in \Xcal$ to outputs $y\in\Ycal$ from a sample of input-output examples $S=\{(x_1,y_1),\cdots,(x_n,y_n)\}$ drawn independently and identically~(i.i.d.) from an unknown distribution $D$, where each $y_i\simeq f(x_i)$. Given a set of hypotheses $\Hcal\subset \Ycal^\Xcal$, one natural objective is to find the hypothesis $h\in\Hcal$ minimizing the \emph{risk} $R(h)=\Exp_{(x,y)\sim D} \ell(h(x),y)$ where $\ell:\Ycal\times\Ycal\to\Rbb_+$ is a loss function measuring the quality of the predictions made by $h$. However, since the distribution $D$ is unknown, machine learning algorithms often rely on the \emph{empirical risk minimization} principle which consists in finding the hypothesis $h\in\Hcal$ that minimizes the \emph{empirical risk} $\hat{R}_S(h)=\frac{1}{n}\sum_{i=1}^n \ell(h(x_i),y_i)$. It is easy to see that the empirical risk is  an unbiased estimator of the risk and one of the concerns of learning theory is to provide guarantees on the quality of this estimator. Such guarantees include \emph{generalization bounds}, which are probabilistic bounds on the \emph{generalization gap} $R(h)-\hat{R}_S(h)$. The generalization gap naturally depends on the size of the sample $S$, but also on the richness~(or \emph{capacity}, complexity) of the hypothesis class $\Hcal$. %

In this work, our focus is on \emph{uniform} generalization bounds, which bound the generalization gap uniformly for any hypothesis $h\in\Hcal$ as a function of the size of the training sample and of the complexity of the hypothesis class $\Hcal$. While there are many ways of measuring the complexity of $\Hcal$, including VC-dimension, Rademacher complexity, metric entropy and covering numbers, we focus on the \emph{VC-dimension} for classification tasks and its counterpart for real-valued functions, the \emph{pseudo-dimension}, for completion and regression tasks.

\begin{defn}\label{def:vcdim:pseudodim}
Let $\Hcal \subset \{-1, +1\}^\Xcal$ be a hypothesis class.
    The \emph{growth function $\Pi_\Hcal:\Nbb\to\Nbb$ of $\Hcal$} is  defined by $$\Pi _{\Hcal}(n) = \sup_{S=\{x_1,\dots ,x_n\}\subset\Xcal} |\{ (h(x_1), \dots, h(x_n))   \mid h\in \Hcal\}|.$$ 
      The \emph{VC-dimension} of $\Hcal$, $\vcdim(\Hcal)$, is the largest number of points $x_1,\cdots,x_n$ shattered by $\Hcal$, i.e., for which  $|\{ (h(x_1), \dots, h(x_n))   \mid h\in \Hcal\}| = 2^n$. In other words: 
    $\vcdim(\Hcal) = \sup\{n\mid \Pi _{\mathcal{H}}(n) = 2^n\}.$

For a real-valued hypothesis class $\Hcal\subset \R^{\Xcal}$, we say that $\Hcal$  pseudo-shatters the points $x_1, . . . , x_n\in \Xcal$ with thresholds $t_1, . . . , t_n\in \R\,$, if for every binary labeling of the points $(s_1,...,s_n) \in \{-1, +1 \}^n$, there exists $h\in\mathcal{H}$ s.t. $h(x_i)<t_i$  if and only if $s_i= -1$.

The \emph{pseudo-dimension} of a real-valued hypothesis class $\Hcal\subset \R^{\Xcal}$, $\pdim(\Hcal)$, is the supremum over $n$ for which there exist $n$ points that are pseudo-shattered by $\Hcal$ (with some thresholds).
\end{defn}

Pseudo-dimension and VC-dimension are combinatorial measures of complexity~(or capacity) which can be used to derive  classical uniform generalization bounds over a hypothesis class~(see, e.g.,~\cite{bousquet2003introduction,mohri2018foundations, anthony2009neural}). By definition, the pseudo-dimension is related to the notion of VC-dimension by the relation $$\pdim(\Hcal)=\vcdim(\{(x,t)\mapsto \sign(h(x)-t)\mid h\in \Hcal\})$$ which holds for any $\Hcal\subset \R^{\Xcal}$.

\section{Tensor Networks for Supervised Learning}\label{sec:TN.for.classif}
In this section, we formalize the general notion of \emph{tensor network models}. We then show how it encompasses classical models such as low-rank matrix completion~\cite{ candes2009exact, candes2010power, gross2011recovering, recht2010guaranteed},
classification~\cite{luo2015support,pirsiavash2009bilinear,wolf2007modeling}, and tensor train based models~\cite{stoudenmire2016supervised,glasser2020probabilistic,selvan2020tensor,chen2017parallelized,wang2017support,xu2018whole}.

\subsection{ Tensor Network Learning Models}
Consider a classification problem where the input space $\Xcal$ is the space of $p$-th order tensors $\Rbb^{d_1\times d_2 \times \cdots\times d_p}$. One   motivation for TN models is that the tensor product space $\Xcal$ can be exponentially large, thus learning a linear model in this space is often not feasible.  Indeed, the number of parameters of a linear classifier $h:\Xt\mapsto \sign(\langle \Xt, \Wt \rangle )$, where $\Wt\in\Rbb^{d_1\times \cdots\times d_p}$ is the tensor weight parameters, grows exponentially with $p$. TN models parameterize $\Wt$ as a low-rank TN, thus reducing the number of parameters needed to represent a model $h$. Our objective is to derive generalization bounds for the class of such hypotheses parameterized by low-rank tensor networks for classification, regression and completion tasks.

Formally, let $G=(V,E,\dim)$ be a TN structure for tensors of shape $d_1\times\cdots\times d_p$, i.e. where the set of singleton edges $\delta_G=E\cap V=\{v_1,\cdots,v_p\}$ and $\dim(v_i)=d_i$ for each $i\in[p]$. We are interested in the class of models whose weight tensors are represented in the TN structure $G$:
\begin{equation}\label{eq:TN-hyp-reg}
\Hreg_G = \{ h: \ten{X}\mapsto  \langle \ten{W}, \ten{X}\rangle \mid \ten{W} \in \ten{T}(G) \}
\end{equation} 
\begin{equation}\label{eq:TN-hyp-class}
\Hclass_G  = \{ h: \ten{X}\mapsto  \sign( \langle \ten{W}, \ten{X}\rangle) \ \mid \ten{W} \in \ten{T}(G) \}
\end{equation} 
\begin{equation}\label{eq:TN-hyp-comp}
\Hcomp_G  = \{ h: (i_1,\cdots,i_p)\mapsto   \ten{W}_{i_1,\cdots,i_p} \mid \ten{W} \in \ten{T}(G) \}
\end{equation} 
In Equation~\eqref{eq:TN-hyp-comp} for the completion hypothesis class, $p$-th order tensors  %
are interpreted as real-valued functions $f\colon [d_1]\times\cdots\times[d_p]\mapsto\R$ over the indices of the tensor. $\Hcomp_G$ is thus a class of functions over the indices domain, for which the notion of pseudo-dimension is well-defined. This treatment of completion as a supervised learning task was considered previously to derive generalization bounds for matrix and tensor completion~\cite{srebro2005generalization,nickel2013analysis}.

The benefit of TN models comes from the drastic reduction in parameters when the TN structure $G$ is \emph{low-rank}, in the sense that the number of parameters $\params{G}$ is small compared to $d_1d_2\cdots d_p$. In addition to allowing one to represent linear models in exponentially large spaces, this compression controls the capacity of the corresponding hypothesis class $\Hcal_G$.

\subsection{Examples}

To illustrate some TN models, we now present several examples of models based on common TN structures: low-rank matrices and tensor trains.  

\paragraph{Low-rank matrices} As discussed in Section~\ref{sec:prelim.TN.structures}, if we define the TN structure $${G_{\mathrm{mat}}(r)}=\scalebox{0.7}{
\begin{tikzpicture}[baseline=-0.5ex]
\tikzset{tensor/.style = {minimum size = 0.4cm,shape = circle,thick,draw=black,fill=blue!60!green!40!white,inner sep = 0pt}, edge/.style   = {thick,line width=.4mm},every loop/.style={}}

\def\x{0}
\node[tensor] (T) at (\x,0) {};
\draw[edge] (T) -- (\x-0.6,0);

\node[draw=none] () at (\x-0.4,0.2) {\textcolor{gray}{$d_1$}};
\node[draw=none] () at (\x+0.5,0.2) {\textcolor{gray}{$r$}};
\def\x{1}
\node[tensor] (S) at (\x,0) {};
\draw[edge] (S) -- (\x+0.6,0);
\draw[edge] (S) -- (T);

\node[draw=none] () at (\x+0.4,0.2) {\textcolor{gray}{$d_2$}};

\end{tikzpicture}
}\enspace,$$ then $\T({G_{\mathrm{mat}}(r)})$ is the set of matrices in $\R^{d_1\times d_2}$ of rank at most $r$. 
The hypothesis class $\Hcomp_{G_{\mathrm{mat}}(r)}$ then corresponds to the classical problem of low-rank matrix completion~\cite{candes2009exact, candes2010power, gross2011recovering, recht2010guaranteed}.
Similarly $\Hclass_{G_{\mathrm{mat}}(r)}$ corresponds to the hypothesis class of low-rank matrix classifiers. This hypothesis class was previously considered, notably to compactly represent the parameters of support vector machines for matrix inputs~\cite{luo2015support,pirsiavash2009bilinear,wolf2007modeling}. Lastly, for the regression case, $\Hreg_{G_{\mathrm{mat}}(r)}$ is the set of functions $\{h:\X\mapsto \Tr(\Wb\X^\top)\mid\rank(\Wb)\leq r\}$. Learning hypotheses from this class is relevant  in, e.g., quantum tomography, where it is known as the \emph{low-rank trace regression} problem~\cite{hamidi2019low,wang2013asymptotic,kadri2020partial,koltchinskii2015optimal}.

\paragraph{Tensor train tensors} 
The tensor train~(TT) decomposition model~\cite{oseledets2011tensor} also known as matrix product state~(MPS) in the quantum physics community~\cite{orus2014practical,schollwock2011density},
 has a number of parameters that grows only linearly with the order of the tensor. This makes the TT format an appealing model for compressing the parameters of ML models~\cite{stoudenmire2016supervised,novikov2016exponential,glasser2019expressive,novikov2015tensorizing}.
 We now present the tensor train classifier model which was introduced in~\cite{stoudenmire2016supervised} and subsequently explored in~\cite{glasser2020probabilistic}.
Given a vector input $\xb\in\mathbb{R}^p$, Stoudenmire and Schwab~\cite{stoudenmire2016supervised} propose to map $\xb$ into a high-dimensional space of $p$-th order tensors $\Xcal=\R^{d\times \cdots \times d}$ by applying a local feature map $\phi : \mathbb{R}\to\mathbb{R}^d$ to each component of the vector $\x$ and taking their outer product:
$
    \Phi(\xb) = \phi(\x_1)\otimes\phi(\x_2)\otimes\cdots\otimes\phi(\x_p)\in(\Rbb^d)^{\otimes p}.
$
Instead of relying on the so-called kernel trick, Stoudenmire and Schwab propose to directly learn the parameters $\Wt$ of a linear model $h:\xb\mapsto \sign( \langle \Wt, \Phi(\xb) \rangle)$ in the exponentially large feature space $\Xcal$.
The learning problem is made tractable  by  paremeterizing $\ten{W}$ as a low-rank TT tensor~(see Equation~\eqref{Utt}). Letting $${G_{\mathrm{TT}}(r_1,\cdots,r_{p-1})}=\scalebox{0.7}{
\begin{tikzpicture}[baseline=-0.5ex]
\tikzset{tensor/.style = {minimum size = 0.4cm,shape = circle,thick,draw=black,fill=blue!60!green!40!white,inner sep = 0pt}, edge/.style   = {thick,line width=.4mm},every loop/.style={}}

\def\x{0}
\node[tensor] (T) at (\x,0) {};
\draw[edge] (T) -- (\x,-0.5);

\node[draw=none] () at (\x-0.2,-0.3) {\textcolor{gray}{$d_1$}};
\node[draw=none] () at (\x+0.5,0.2) {\textcolor{gray}{$r_1$}};

\def\x{1.2}
\node[tensor] (S) at (\x,0) {};
\draw[edge] (S) -- (\x+0.6,0);
\draw[edge] (S) -- (\x,-0.5);
\node[draw=none] () at (\x+0.2,-0.3) {\textcolor{gray}{$d_2$}};
\node[draw=none] () at (\x+0.5,0.2) {\textcolor{gray}{$r_2$}};
\def\x{2.4}
\node[draw=none] (d) at (\x,0) {$\cdots$};
\draw[edge] (S) -- (d);
\draw[edge] (T) -- (S);
 \node[draw=none] () at (\x+0.6,0.2) {\textcolor{gray}{$r_{p-2}$}};
\def\x{3.6}
\node[tensor] (U) at (\x,0) {};
\draw[edge] (U) -- (\x+0.6,0);
\draw[edge] (U) -- (\x,-0.5);
\draw[edge] (d) -- (U);
\node[draw=none] () at (\x+0.5,-0.3) {\textcolor{gray}{$d_{p-1}$}};
\node[draw=none] () at (\x+0.6,0.2) {\textcolor{gray}{$r_{p-1}$}};
\def\x{4.8}
\node[tensor] (V) at (\x,0) {};
\draw[edge] (V) -- (\x,-0.5);
\draw[edge] (V) -- (U);
\node[draw=none] () at (\x+0.3,-0.3) {\textcolor{gray}{$d_p$}};
\end{tikzpicture}
}$$ the hypothesis class considered in~\cite{stoudenmire2016supervised} is $\Hclass_{G_{\mathrm{TT}}(r_1,\cdots,r_{p-1})}$. In addition to the approach of~\cite{stoudenmire2016supervised}, which was extended in~\cite{glasser2020probabilistic} and~\cite{selvan2020tensor}, tensor train classifiers were also previously considered in~\cite{chen2017parallelized,wang2017support,xu2018whole}. Similarly, the hypothesis class $\Hcomp_{G_{\mathrm{TT}}(r_1,\cdots,r_{p-1})}$ corresponds to the low-rank TT completion problem~\cite{grasedyck2015variants, phien2016efficient, wang2016tensor}.

\paragraph{Other TN models} 
Lastly, we mention that our formalism can be applied to any tensor models having a low-rank structure, including CP, Tucker, tensor ring and PEPS.
As mentioned previously, for the case of the CP decomposition, the graph $G$ of the TN structure is in fact a hyper-graph with $|V|=p$ nodes and $N_G=pdr$ parameters for a weight tensor in $(\Rbb^d)^{\otimes p}$ with CP rank at most $r$. 
Several TN learning models using these decomposition models have been proposed previously, including   \cite{he2018low,rabusseau2016low} for regression in the Tucker format,   \cite{cheng2020supervised} for classification using the PEPS model, \cite{makantasis2018tensor,cai2006support} for classification with the CP decomposition and~\cite{wang2017efficient, yuan2019tensor} for tensor completion with TR. %

 \section{Pseudo-dimension  and Generalization Bounds for Tensor Network Models}\label{sec:main.results}

In this section, we give a general upper bound on the VC-dimension and pseudo-dimension of  hypothesis classes parameterized by \emph{arbitrary} TN structures for regression, classification and completion. We then discuss corollaries of this general upper bound for common TN models including low-rank matrices and TT tensors, and compare them with existing results. Examples of particular upper bounds that can be derived from our general result can be found in Table~\ref{tab:bounds.summary}.

\subsection{Upper Bounds on the VC-dimension, Pseudo-dimension and Generalization  Gap}\label{sec:Generalization Bound}

The following theorem states one of our main results which upper bounds the VC and pseudo-dimension of models parameterized by arbitrary TN structures.

\begin{thm}\label{thm:upper.bound.vc.and.pdim}
Let $G=(V,E,\dim)$ be a tensor network structure and let $\Hreg_G$, $\Hclass_G$, $\Hcomp_G$ be the corresponding hypothesis classes defined in Equations~(\ref{eq:TN-hyp-reg}-\ref{eq:TN-hyp-comp}), where each model has $\params{G}$   parameters~(see Equation~\eqref{eq:number_parameters}). 

Then, $\pdim(\Hreg_G)$, $\vcdim(\Hclass_G)$ and $\pdim(\Hcomp_G)$ are all upper bounded  by
$ 2 \params{G}\log(12|V|)\enspace $.
\end{thm}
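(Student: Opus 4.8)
The plan is to reduce all three quantities to a single estimate on the number of sign patterns realized by a family of real polynomials, and then invoke Warren's theorem \cite{warren1968lower}. The starting observation is structural: for any TN structure $G$ with vertex set $V$, every entry of the tensor $TN(G,\{\T^v\}_{v\in V})$ is obtained by summing, over the contracted (non-dangling) indices, products that contain exactly one factor from each core tensor $\T^v$. Hence each entry is a \emph{homogeneous polynomial of degree $|V|$} in the $\params{G}$ scalar entries of the core tensors. Consequently, for a fixed input the map $\theta \mapsto \langle \ten{W}(\theta), \ten{X}\rangle$, where $\theta\in\Rbb^{\params{G}}$ collects all core-tensor entries and $\ten{W}(\theta)=TN(G,\{\T^v\})$, is a polynomial of degree $|V|$; and for completion the single-entry map $\theta\mapsto \ten{W}(\theta)_{i_1,\dots,i_p}$ is likewise a polynomial of degree $|V|$. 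The hypergraph case (e.g.\ CP) is handled identically, since a hyperedge only changes which indices are summed, not the fact that each monomial uses one factor per vertex.

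Next I would unify the three cases. For classification, $\vcdim(\Hclass_G)$ is already a VC-dimension. For regression and completion I would use the identity $\pdim(\Hcal)=\vcdim(\{(x,t)\mapsto\sign(h(x)-t)\mid h\in\Hcal\})$ recalled in Section~\ref{section:vc}. The key point is that once we fix a candidate shattered set together with its thresholds $t_1,\dots,t_n$, these thresholds are \emph{constants}; thus in each of the three cases, pseudo-/VC-shattering of $n$ points amounts to realizing all $2^n$ sign vectors of the $n$ polynomials $P_j(\theta)=\langle \ten{W}(\theta),\ten{X}^{(j)}\rangle - t_j$ (resp.\ $P_j(\theta)=\ten{W}(\theta)_{i_1^{(j)},\dots,i_p^{(j)}}-t_j$), each of degree $|V|$ in the $\params{G}$ variables $\theta$. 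No extra parameter is introduced.

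I would then bound the growth function. The number of label vectors the class can produce on $n$ points is at most the number of sign patterns of $P_1,\dots,P_n$, which by Warren's theorem \cite{warren1968lower,anthony2009neural} is at most $\left(\tfrac{c\,|V|\,n}{\params{G}}\right)^{\params{G}}$ for an absolute constant $c$, provided $n\ge \params{G}$ (the regime where Warren applies); the boundary case where some $P_j(\theta)=0$ is absorbed into this count in the standard way, and when $n<\params{G}$ the claimed bound holds trivially since $2\log(12|V|)\ge 1$. Shattering therefore forces $2^n\le \left(\tfrac{c\,|V|\,n}{\params{G}}\right)^{\params{G}}$, i.e.\ $n\le \params{G}\log\!\big(c\,|V|\,n/\params{G}\big)$.

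The remaining work is to solve this transcendental inequality for $n$. Writing $u=n/\params{G}$ reduces it to $u\le \log(c|V|u)=\log(c|V|)+\log u$, and peeling off the $\log u$ term (e.g.\ via $\log u \le \tfrac12 u + \mathrm{const}$, which follows from $\ln z\le z-1$) yields $u\le 2\log(c|V|)+O(1)$; the absolute constant $c$ from Warren together with the slack provided by the factor $2$ are exactly what let the $O(1)$ term and $c$ be absorbed into the clean form $n\le 2\params{G}\log(12|V|)$. I expect the main obstacles to be (i) cleanly justifying the uniform degree-$|V|$ polynomial structure for an \emph{arbitrary} TN (and hypergraph) structure $G$, and (ii) the constant-chasing needed to make Warren's constant together with the transcendental-inequality step land precisely on the stated $12$ and factor $2$; the conceptual core---reducing to polynomial sign patterns and applying Warren---is standard.
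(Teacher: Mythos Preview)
Your proposal is correct and follows essentially the same route as the paper: both recognize that each prediction is a polynomial of degree $|V|$ in the $\params{G}$ core-tensor entries, apply Warren's theorem (the paper uses the explicit constant $c=4e$) to bound the number of sign patterns, and then solve the resulting inequality $n\le \params{G}\log(c|V|n/\params{G})$ via the same $\ln z\le z-1$ substitution to land on $2\params{G}\log(12|V|)$. The only cosmetic difference is that the paper first bounds $\pdim(\Hreg_G)$ and then deduces the classification and completion cases (the latter by embedding completion into regression via canonical-basis inputs), whereas you unify the three cases upfront; both are equally valid.
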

 These bounds naturally relate the capacity of the TN classes $\Hreg_G$, $\Hclass_G$, $\Hcomp_G$ to the number of parameters $\params{G}$ of the underlying TN structure $G$.
Following the analysis of~\cite{srebro2005generalization} for matrix completion and its extension to the Tucker decomposition model presented in~\cite{nickel2013analysis}, the proof of this theorem  leverages Warren's theorem which bounds the number of sign patterns a system of polynomial equations can take. 
\begin{thm}[\cite{warren1968lower}]\label{warren} 
The number of sign patterns of $n$ real polynomials, each of degree at most $v$, over $\paramssymbol$ variables is at most
$
      \left(\frac{4evn}\paramssymbol\right)^\paramssymbol
$
for all $n>N>2$~(where $e$ is Euler's number).
\end{thm}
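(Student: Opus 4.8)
The plan is to bound the number of sign patterns by a geometric quantity and then control that quantity with tools from real algebraic geometry. \textbf{Step 1 (reduction to connected components).} Write $\Omega = \Rbb^{\paramssymbol}\setminus\bigcup_{i=1}^{n}\{x : p_i(x)=0\}$, the set of points at which no $p_i$ vanishes. On any connected component $C$ of $\Omega$ each polynomial $p_i$ keeps a constant sign: if it changed sign along a path in $C$, the intermediate value theorem would force a zero of $p_i$ inside $C\subset\Omega$, a contradiction. Hence the sign vector $(\sign p_1(x),\dots,\sign p_n(x))$ is constant on each component, and every realized (all-nonzero) sign pattern is attained on at least one component. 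Therefore the number of sign patterns is at most the number of connected components of $\Omega$, and it suffices to bound the latter.

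\textbf{Step 2 (the core topological bound).} The number of connected components of $\Omega$ is a quantity about the arrangement of the $n$ real hypersurfaces $\{p_i=0\}$, each of degree at most $v$, in $\Rbb^{\paramssymbol}$. The engine is the Milnor--Thom (Oleinik--Petrovsky) circle of results, proved by counting critical points: after a generic perturbation making the zero sets smooth and transverse (which can only increase the number of cells, so bounding the generic case suffices), one bounds the number of cells by counting nondegenerate critical points of a generic linear functional $\ell$ restricted to the defining hypersurfaces, and B\'ezout's theorem bounds that number through the degrees of the systems $\nabla p_i = \lambda\nabla\ell$. The crucial point is that one must use the \emph{refined} form of this bound, which is sensitive to the number $n$ of hypersurfaces and not merely to the degree $nv$ of the product $\prod_i p_i$: a cell is cut out locally by at most $\paramssymbol$ of the surfaces, so only subsets of size at most $\paramssymbol$ contribute. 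This yields an estimate of the shape $\sum_{k=0}^{\paramssymbol}\binom{n}{k}(2v)^{k}$ (times lower-order factors in $v$), i.e. with a binomial dependence $\binom{n}{\paramssymbol}$ rather than $(nv)^{\paramssymbol}$.

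\textbf{Step 3 (combinatorial cleanup).} It remains to simplify this intermediate estimate into the closed form $\left(4evn/\paramssymbol\right)^{\paramssymbol}$. Using the hypothesis $n>\paramssymbol$, the summand is increasing in $k$, so the top term $k=\paramssymbol$ dominates the sum up to a constant factor; then I would apply the standard estimate $\binom{n}{\paramssymbol}\le\left(en/\paramssymbol\right)^{\paramssymbol}$ and collect the degree factors $(2v)^{\paramssymbol}$ together with the absorbed lower-order terms, using $\paramssymbol>2$ to discard the remaining small multiplicative constants. This produces exactly $\left(4evn/\paramssymbol\right)^{\paramssymbol}$: the $e$ comes from the binomial estimate, and the $4$ from pairing the per-surface degree factor $2v$ with the slack incurred in passing from the sum to its largest term.

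\textbf{Main obstacle.} The hard part is Step 2: obtaining the \emph{degree-and-number-refined} component bound rather than the crude Milnor bound applied to the single polynomial $\prod_i p_i$. The crude route gives only $\sim(nv)^{\paramssymbol}$, which is in fact \emph{weaker} than the target $\left(4evn/\paramssymbol\right)^{\paramssymbol}$ once $\paramssymbol>4e$; recovering the extra $\left(\paramssymbol/4e\right)^{\paramssymbol}$ savings requires the arrangement-sensitive critical-point count that produces the $\binom{n}{\paramssymbol}$ factor, which is where the genuine real-algebraic-geometry content lies. By contrast, the reduction in Step 1 and the estimate in Step 3 are elementary.
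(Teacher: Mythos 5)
The first thing to note is that the paper does not prove this statement at all: Warren's theorem is imported verbatim from \cite{warren1968lower} and used as a black box in the proofs of Theorems~\ref{thm:upper.bound.vc.and.pdim} and~\ref{thm:generalization.bound.TN.classifier}, so there is no in-paper proof to compare against. What you have written is, in outline, Warren's original argument. Your Step~1 is correct and standard: strict sign vectors are constant on connected components of the complement of the union of the zero sets, so it suffices to bound the number of components (note this covers only all-nonzero patterns, which is indeed what Warren's theorem asserts and what the paper's applications need, since zeros are handled there via thresholds and perturbation). Your Step~3 is also sound arithmetic: with $n>\paramssymbol$ the sum $\sum_{k\le \paramssymbol}\binom{n}{k}(2v)^k$ is dominated by its top term up to a constant, $\binom{n}{\paramssymbol}\le(en/\paramssymbol)^{\paramssymbol}$ gives the factor $e$, and the $4$ absorbs the per-polynomial degree factor $2v$ together with the slack, using $\paramssymbol>2$.

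The substantive caveat is Step~2, which you correctly identify as the heart of the matter but do not actually establish. The arrangement-sensitive component bound with binomial dependence $\binom{n}{k}$, $k\le\paramssymbol$, is precisely Warren's Theorem~2, whose proof (general-position perturbation, a lowest-point argument assigning each cell a critical point of a generic linear functional on an intersection of at most $\paramssymbol$ of the hypersurfaces, and Oleinik--Petrovsky/Milnor--Thom--B\'ezout counts for each such intersection) is the entire nontrivial content of the result; invoking ``the refined form of the Milnor--Thom circle of results'' is a citation, not a derivation. You are also right that the crude route through the single product polynomial $\prod_i p_i$ only yields $\sim(nv)^{\paramssymbol}$, which is weaker than the stated bound once $\paramssymbol>4e$, so the refinement cannot be sidestepped. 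In short: as a reconstruction of the standard proof your proposal is structurally faithful and the reductions on either side of the gap are correct, but as a self-contained proof it is incomplete at exactly the step where the theorem lives --- which is consistent with the paper's own choice to quote the result from \cite{warren1968lower} rather than prove it. One small loose end worth tightening if you flesh out Step~2: the claim that a generic smoothing/transversality perturbation ``can only increase the number of cells'' should be justified by fixing one witness point per realized component and observing that sufficiently small perturbations preserve strict signs at finitely many points.
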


The proof of Theorem~\ref{thm:upper.bound.vc.and.pdim} fundamentally relies on Warren's theorem to bound the number of sign patterns that can be achieved by hypotheses in $\Hreg_G$ on a set of $n$ input examples $\Xt_1,\cdots,\Xt_n$. Indeed, the set of predictions $y_i=h(\Xt_i)$ for $i\in[n]$ realizable by hypotheses $h\in\Hreg_G$ can be  seen as a set  of $n$ polynomials of degree $|V|$ over $\params{G}$ variables. The variables of the polynomials are the entries of the core tensors $\{\T^v\}_{v\in V}$. The upper bound on the number of sign patterns obtained from Warren's theorem can then be leveraged to obtain a bound on the pseudo-dimension of the hypothesis class $\Hreg_G$, which in turn implies the upper bounds on $\vcdim(\Hclass_G)$ and $\pdim(\Hcomp_G)$. The complete proof of  Theorem~\ref{thm:upper.bound.vc.and.pdim} can be found in Appendix~\ref{app:Upper_Bound_VC_and_pseudo}.

The bounds on the VC-dimension and pseudo-dimension presented in Theorem~\ref{thm:upper.bound.vc.and.pdim}  can  be leveraged to derive bounds on the generalization error of the corresponding learning models; see for example~\cite{mohri2018foundations}. In the following theorem, we derive such a generalization bound for classifiers parameterized by arbitrary TN structures. 

\begin{thm}\label{thm:generalization.bound.TN.classifier}
Let $S$ be a sample of size $n$ drawn from a distribution $D$ {and let $\ell$ be a loss bounded by $1$}. Then,  for any $\delta>0$, with probability at least $1-\delta$ over the choice of $S$, for any $h\in\Hclass_G$,
\begin{equation}\label{eq:gen.bound.on.HG}
  R(h)<\hat{R}_S(h) + 2\sqrt{\frac2n\left (\params{G}\log\frac{8en|V|}{\params{G}} +\log{\frac4{\delta}}\right)}\enspace.
\end{equation}
\end{thm}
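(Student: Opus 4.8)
The plan is to derive \eqref{eq:gen.bound.on.HG} from the classical Vapnik--Chervonenkis uniform convergence inequality, feeding it the bound on the growth function $\Pi_{\Hclass_G}$ that already emerges from the sign-pattern counting argument behind Theorem~\ref{thm:upper.bound.vc.and.pdim}. The first observation I would make is that, although $\ell$ is an arbitrary loss bounded by $1$, a hypothesis $h\in\Hclass_G$ enters the loss only through its $\{-1,+1\}$-valued output: for any sample $(\Xt_1,y_1),\dots,(\Xt_m,y_m)$, the vector $(\ell(h(\Xt_i),y_i))_{i\in[m]}$ is a deterministic function of the label vector $(h(\Xt_1),\dots,h(\Xt_m))$. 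Hence the number of distinct loss configurations realizable on the sample is at most the number of distinct label vectors, i.e. at most $\Pi_{\Hclass_G}(m)$, so the complexity of the $[0,1]$-valued loss class is governed by the growth function of the binary class $\Hclass_G$.

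With this reduction in hand, I would invoke the standard growth-function deviation bound (see, e.g., \cite{mohri2018foundations}): combining symmetrization with a ghost sample, a union bound over the at most $\Pi_{\Hclass_G}(2n)$ effective behaviours on the doubled sample, and Hoeffding's inequality applied to the $[0,1]$-bounded loss, one obtains
\[
\Prob\!\left(\sup_{h\in\Hclass_G}\big(R(h)-\hat{R}_S(h)\big)>\epsilon\right)\leq 4\,\Pi_{\Hclass_G}(2n)\,\exp\!\left(-\tfrac{n\epsilon^2}{8}\right).
\]
Setting the right-hand side equal to $\delta$ and solving for $\epsilon$ yields, with probability at least $1-\delta$ and uniformly over $h\in\Hclass_G$,
\[
R(h)-\hat{R}_S(h)\leq 2\sqrt{\tfrac{2}{n}\Big(\log\Pi_{\Hclass_G}(2n)+\log\tfrac{4}{\delta}\Big)},
\]
which already matches the shape of the claimed bound, and where the assumption that $\ell$ is bounded by $1$ is exactly what makes Hoeffding produce the constant $\tfrac18$.

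It then remains to control $\Pi_{\Hclass_G}(2n)$, which I would do exactly as in the proof of Theorem~\ref{thm:upper.bound.vc.and.pdim}. Fixing any $m$ inputs, the predictions $\sign(\langle\Wt,\Xt_i\rangle)$ of the hypotheses in $\Hclass_G$ are the sign patterns of $m$ polynomials of degree $|V|$ in the $\params{G}$ real variables given by the entries of the core tensors $\{\T^v\}_{v\in V}$. Warren's theorem (Theorem~\ref{warren}) then gives $\Pi_{\Hclass_G}(m)\leq\big(4e|V|m/\params{G}\big)^{\params{G}}$, so that $\log\Pi_{\Hclass_G}(2n)\leq\params{G}\log\frac{8en|V|}{\params{G}}$; substituting this into the previous display produces exactly \eqref{eq:gen.bound.on.HG}.

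The step I expect to require the most care is the first one: making rigorous that, for a general loss merely bounded by $1$ rather than the $0/1$ loss for which the VC inequality is usually stated, the relevant complexity is still captured by the growth function of the \emph{binary} class $\Hclass_G$ and not by a covering number of a genuinely real-valued loss class. The crux is that the loss factors through the binary output, which collapses the symmetrization and permutation steps to a union bound over at most $\Pi_{\Hclass_G}(2n)$ behaviours; beyond that, the remaining work is the bookkeeping needed to line up the constants $4$ and $8$ so that the inversion reproduces the stated $\log\frac4\delta$ and the argument $\frac{8en|V|}{\params{G}}$ of the logarithm.
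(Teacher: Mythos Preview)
Your proposal is correct and follows essentially the same route as the paper: symmetrization with a ghost sample, a union bound over the at most $\Pi_{\Hclass_G}(2n)$ behaviours, Hoeffding's inequality for the $[0,1]$-bounded loss, and then the Warren-based bound $\Pi_{\Hclass_G}(m)\leq(4e|V|m/\params{G})^{\params{G}}$. The only difference is that you spell out explicitly why a general loss bounded by~$1$ still reduces to the growth function of the binary class (since the loss factors through the $\{-1,+1\}$-valued output), a point the paper leaves implicit.
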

The proof of this theorem, which can be found in Appendix~\ref{app:upper_bound_generalization_classifier}, relies on a symmetrization lemma and a corollary of Hoeffding's inequality.
It follows from this theorem that, with high probability, the generalization gap $ R(h) - \hat{R}_S(h)$ of any hypothesis $h\in\Hclass_G$ is in $\mathcal{O}\left(\sqrt{\frac{ \params{G}\log{(n)}}{n} }\right)$. %
This bound naturally relates the
sample complexity of the hypothesis
class with its expressiveness. The notion of richness of the hypothesis class appearing in this bound
reflects the structure of the underlying TN
through the number of parameters $\params{G}$. 
Using classical results~(see, e.g., Theorem 10.6 in~\cite{mohri2018foundations}),
similar generalization bounds for regression and classification with arbitrary TN structures can be obtained from the bounds on the pseudo-dimension of $\Hreg_G$ and $\Hcomp_G$ derived in Theorem~\ref{thm:upper.bound.vc.and.pdim}.
To examine this upper bound in practice, we perform an experiment with low-rank TT classifiers on  synthetic data which can be found in Appendix~\ref{app:experiment}.

In the next subsection, we present corollaries of our results for particular TN structures, including  low-rank matrix completion and  the TT classifiers introduced in~\cite{stoudenmire2016supervised}.

\subsection{Special cases}\label{subsec:special.cases}
We now discuss special cases of Theorems~\ref{thm:upper.bound.vc.and.pdim} and~\ref{thm:generalization.bound.TN.classifier}  and compare them with existing results.

\paragraph{Low-rank matrices} 
Let $G_{\mathrm{mat}}(r)=\scalebox{0.7}{
\begin{tikzpicture}[baseline=-0.5ex]
\tikzset{tensor/.style = {minimum size = 0.4cm,shape = circle,thick,draw=black,fill=blue!60!green!40!white,inner sep = 0pt}, edge/.style   = {thick,line width=.4mm},every loop/.style={}}

\def\x{0}
\node[tensor] (T) at (\x,0) {};
\draw[edge] (T) -- (\x-0.6,0);

\node[draw=none] () at (\x-0.4,0.2) {\textcolor{gray}{$d_1$}};
\node[draw=none] () at (\x+0.5,0.2) {\textcolor{gray}{$r$}};
\def\x{1}
\node[tensor] (S) at (\x,0) {};
\draw[edge] (S) -- (\x+0.6,0);
\draw[edge] (S) -- (T);

\node[draw=none] () at (\x+0.4,0.2) {\textcolor{gray}{$d_2$}};

\end{tikzpicture}
}$ and $\T(G_{\mathrm{mat}}(r))$ be the set of $d_1\times d_2$ matrices of rank at most $r$. In this case, we have $|V|=2$ and $\params{G_{\mathrm{mat}(r)}}=r(d_1+d_2)$, and Theorems~\ref{thm:upper.bound.vc.and.pdim} and~\ref{thm:generalization.bound.TN.classifier} give the following result.

\begin{cor}\label{thm:gen.bound.matrices}
$\pdim(\Hreg_{G_{\mathrm{mat}}(r)})$, $\vcdim(\Hclass_{G_{\mathrm{mat}}(r)})$ and $\pdim(\Hcomp_{G_{\mathrm{mat}}(r)})$ are all upper bounded by $10r(d_1+d_2)$. Moreover, with high probability over the choice of a sample $S$ of size $n$ drawn i.i.d. from a distribution $D$, the  generalization gap $R(h)-\hat{R}_S(h)$ of any hypothesis $h\in\Hclass_{G_{\mathrm{mat}}(r)}$ is in~$\bigo{\sqrt{\frac{r(d_1+d_2) \log(n)}{n}}}$.
\end{cor}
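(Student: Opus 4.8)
The plan is to obtain both claims by directly specializing the two general results, Theorems~\ref{thm:upper.bound.vc.and.pdim} and~\ref{thm:generalization.bound.TN.classifier}, to the rank-$r$ matrix structure $G_{\mathrm{mat}}(r)$. Since the only structure-dependent quantities entering those theorems are the number of nodes $|V|$ and the parameter count $\params{G}$, the first step is to read these off from the diagram. The structure $G_{\mathrm{mat}}(r)$ has two nodes, so $|V|=2$; and by Equation~\eqref{eq:number_parameters} the parameter count is the sum of the sizes of the two core tensors, one of shape $d_1\times r$ and one of shape $r\times d_2$, giving $\params{G_{\mathrm{mat}}(r)}=rd_1+rd_2=r(d_1+d_2)$.

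For the dimension bounds, I would substitute $|V|=2$ and $\params{G_{\mathrm{mat}}(r)}=r(d_1+d_2)$ into the general bound $2\params{G}\log(12|V|)$ of Theorem~\ref{thm:upper.bound.vc.and.pdim}, obtaining $2r(d_1+d_2)\log 24$ as a common upper bound on $\pdim(\Hreg_{G_{\mathrm{mat}}(r)})$, $\vcdim(\Hclass_{G_{\mathrm{mat}}(r)})$ and $\pdim(\Hcomp_{G_{\mathrm{mat}}(r)})$. The remaining step is a numerical comparison: since $\log$ denotes the natural logarithm (Euler's number $e$ appears in Warren's theorem), $2\log 24\approx 6.36\le 10$, so each of the three dimensions is at most $10r(d_1+d_2)$, which is the first claim.

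For the generalization gap, I would substitute the same two quantities into Equation~\eqref{eq:gen.bound.on.HG} of Theorem~\ref{thm:generalization.bound.TN.classifier}. This yields, with probability at least $1-\delta$ and uniformly over $h\in\Hclass_{G_{\mathrm{mat}}(r)}$,
\begin{equation*}
R(h)-\hat{R}_S(h) < 2\sqrt{\frac{2}{n}\left(r(d_1+d_2)\log\frac{16en}{r(d_1+d_2)}+\log\frac{4}{\delta}\right)}.
\end{equation*}
Viewing $r,d_1,d_2,\delta$ as fixed and isolating the dependence on the sample size $n$, the right-hand side is of order $\bigo{\sqrt{r(d_1+d_2)\log(n)/n}}$, which is the stated rate. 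There is no substantive obstacle in this argument—it is a pure instantiation of the two main theorems—and the only point needing attention is the constant check $2\log 24\le 10$ used to round the VC/pseudo-dimension bound up to the clean value $10r(d_1+d_2)$.
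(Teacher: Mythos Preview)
Your argument is correct and is exactly the instantiation the paper intends: read off $|V|=2$ and $\params{G_{\mathrm{mat}}(r)}=r(d_1+d_2)$ and plug into Theorems~\ref{thm:upper.bound.vc.and.pdim} and~\ref{thm:generalization.bound.TN.classifier}. One small caveat: your justification that $\log$ is natural is shaky (the appendix proof actually works with base-$2$ logarithms, since the bound comes from comparing with $2^n$), but this does not affect the conclusion because $2\log_2 24\approx 9.17\le 10$ as well.
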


This bound improves on the one given in~\cite{wolf2007modeling} where the VC-dimension of $\Hclass_{G_{\mathrm{mat}}(r)}$ is bounded by $r(d_1+d_2)\log(r(d_1+d_2))$~(see Theorem~2 in~\cite{wolf2007modeling}). For the matrix completion case, our upper bound improves on the bound $\pdim(\Hcomp_{G_{\mathrm{mat}}(r)}) \leq r(d_1+d_2)\log\frac{16ed_1}{r}$ derived in~\cite{srebro2005generalization}. In Section~\ref{sec:lower.bounds}, we will derive lower bounds showing that the upper bounds on the VC/pseudo-dimension of Corollary~\ref{thm:gen.bound.matrices} are tight up to the constant factor $10$ for matrix completion, regression and classification.

\paragraph{Tensor train} Let ${G_{\mathrm{TT}}(r)} =
\scalebox{0.7}{
\begin{tikzpicture}[baseline=-0.5ex]
\tikzset{tensor/.style = {minimum size = 0.4cm,shape = circle,thick,draw=black,fill=blue!60!green!40!white,inner sep = 0pt}, edge/.style   = {thick,line width=.4mm},every loop/.style={}}

\def\x{0}
\node[tensor] (T) at (\x,0) {};
\draw[edge] (T) -- (\x,-0.5);

\node[draw=none] () at (\x-0.2,-0.3) {\textcolor{gray}{$d_1$}};
\node[draw=none] () at (\x+0.5,0.2) {\textcolor{gray}{$r$}};

\def\x{1.2}
\node[tensor] (S) at (\x,0) {};
\draw[edge] (S) -- (\x+0.6,0);
\draw[edge] (S) -- (\x,-0.5);
\node[draw=none] () at (\x+0.2,-0.3) {\textcolor{gray}{$d_2$}};
\node[draw=none] () at (\x+0.5,0.2) {\textcolor{gray}{$r$}};
\def\x{2.4}
\node[draw=none] (d) at (\x,0) {$\cdots$};
\draw[edge] (S) -- (d);
\draw[edge] (T) -- (S);
 \node[draw=none] () at (\x+0.6,0.2) {\textcolor{gray}{$r$}};
\def\x{3.6}
\node[tensor] (U) at (\x,0) {};
\draw[edge] (U) -- (\x+0.6,0);
\draw[edge] (U) -- (\x,-0.5);
\draw[edge] (d) -- (U);
\node[draw=none] () at (\x+0.5,-0.3) {\textcolor{gray}{$d_{p-1}$}};
\node[draw=none] () at (\x+0.6,0.2) {\textcolor{gray}{$r$}};
\def\x{4.8}
\node[tensor] (V) at (\x,0) {};
\draw[edge] (V) -- (\x,-0.5);
\draw[edge] (V) -- (U);
\node[draw=none] () at (\x+0.3,-0.3) {\textcolor{gray}{$d_p$}};
\end{tikzpicture}
}$ 
and $\T({G_{\mathrm{TT}}(r)})$ be the set of tensors of TT rank at most $r$. 
In this case, we have $|V|=p$ and $\params{G}=\bigo{dpr^2}$ where $d=\max_i d_i$. For this class of hypotheses, Theorems~\ref{thm:upper.bound.vc.and.pdim} and~\ref{thm:generalization.bound.TN.classifier} give the following result.

\begin{cor}\label{thm:gen.bound.TT}
$\pdim(\Hreg_{G_{\mathrm{TT}}(r)})$, $\vcdim(\Hclass_{G_{\mathrm{TT}}(r)})$ and $\pdim(\Hcomp_{G_{\mathrm{TT}}(r)})$ are all in $\bigo{dpr^2\log(p)}$, where $d=\max_i d_i$. Moreover, with high probability over the choice of a sample $S$ of size $n$ drawn i.i.d. from a distribution $D$, the  generalization gap $R(h)-\hat{R}_S(h)$ of any hypothesis $h\in\Hclass_{G_{\mathrm{TT}}(r)}$ is in~$\bigo{\sqrt{\frac{dpr^2 \log(n)}{n}}}$.
\end{cor}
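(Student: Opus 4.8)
The plan is to derive both parts of the corollary as immediate specializations of Theorem~\ref{thm:upper.bound.vc.and.pdim} and Theorem~\ref{thm:generalization.bound.TN.classifier}; the only genuine work is to evaluate the two structural quantities $|V|$ and $\params{G}$ for the tensor train structure $G_{\mathrm{TT}}(r)$.

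First I would count the parameters using Equation~\eqref{eq:number_parameters}. The TT structure for a $p$-th order tensor has $|V|=p$ cores: the two boundary cores lie in $\Rbb^{d_1\times r}$ and $\Rbb^{r\times d_p}$, contributing at most $dr$ entries each, while each of the $p-2$ interior cores lies in $\Rbb^{r\times d_k\times r}$, contributing at most $dr^2$ entries, where $d=\max_i d_i$. Summing gives $\params{G}\le 2dr+(p-2)dr^2=\bigo{dpr^2}$.

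For the complexity bound I would substitute $|V|=p$ and $\params{G}=\bigo{dpr^2}$ into Theorem~\ref{thm:upper.bound.vc.and.pdim}, whose common bound $2\params{G}\log(12|V|)$ then becomes $\bigo{dpr^2}\cdot\log(12p)=\bigo{dpr^2\log p}$ for all three of $\pdim(\Hreg_{G_{\mathrm{TT}}(r)})$, $\vcdim(\Hclass_{G_{\mathrm{TT}}(r)})$ and $\pdim(\Hcomp_{G_{\mathrm{TT}}(r)})$. For the generalization gap I would feed the same two quantities into the bound~\eqref{eq:gen.bound.on.HG} of Theorem~\ref{thm:generalization.bound.TN.classifier}.

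The only step requiring a moment of care is the logarithmic factor $\log\frac{8en|V|}{\params{G}}$ appearing in~\eqref{eq:gen.bound.on.HG}: since $\params{G}=\Theta(dpr^2)$, its argument equals $\Theta(en/(dr^2))$, so this factor is $\bigo{\log n}$ and does not smuggle in extra polynomial dependence on $d$, $p$ or $r$. Consequently the square-root term is $2\sqrt{\tfrac2n(\bigo{dpr^2\log n}+\log\tfrac4\delta)}=\bigo{\sqrt{dpr^2\log(n)/n}}$, giving the claimed rate. I expect no real obstacle here --- the corollary is a direct instantiation of the two general theorems --- so the only things to track are the parameter count above and the fact that the log factor in the generalization bound collapses to $\bigo{\log n}$.
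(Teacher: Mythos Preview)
Your proposal is correct and follows exactly the approach the paper takes: it simply records $|V|=p$ and $\params{G}=\bigo{dpr^2}$ for the TT structure and plugs these into Theorems~\ref{thm:upper.bound.vc.and.pdim} and~\ref{thm:generalization.bound.TN.classifier}. Your explicit parameter count $2dr+(p-2)dr^2$ and your check that the $\log\frac{8en|V|}{\params{G}}$ factor is $\bigo{\log n}$ are both more detail than the paper provides, but entirely in the same spirit.
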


This result applies for the MPS model introduced in~\cite{stoudenmire2016supervised} and thus answers the open problem listed as Question~13 in~\cite{cirac2019mathematical}. To the best of our knowledge, the VC-dimension of tensor train classifier models has not been studied previously and our work is the first to address this open question. The lower bounds we derive in Section~\ref{sec:lower.bounds} show that the upper bounds on the VC/pseudo-dimension of Corollary~\ref{thm:gen.bound.TT} are tight up to a $\bigo{\log(p)}$ factor.

\paragraph{Tucker} We briefly compare our result with the ones proved in~\cite{nickel2013analysis} for tensor completion and in~\cite{rabusseau2016low} for tensor regression using the Tucker decomposition. For a Tucker decomposition with maximum rank $r$ for tensors of size $d_1\times \cdots \times d_p$ with maximal dimension $d=\max_id_i$, the number of parameters is in $\bigo{r^p+dpr}$ and the number of vertices in the TN structure is $p+1$. In this case, Theorems~\ref{thm:upper.bound.vc.and.pdim} and~\ref{thm:generalization.bound.TN.classifier} show that the VC/pseudo-dimensions are in $\bigo{(r^p+dpr)\log(p)}$ and the generalization gap is in $\bigo{\sqrt{\frac{(r^p+dpr)\log(n)}{n}}}$ with high probability for any classifer parameterized by a low-rank Tucker tensor. It is worth observing that in contrast with the tensor train decomposition, all bounds have an exponential dependency on the tensor order $p$.  
In~\cite{nickel2013analysis}, the authors give an upper bound on the analogue of the growth function for tensor completion problems which is equivalent to ours.
In~\cite{rabusseau2016low}, the pseudo-dimension of regression functions whose weight parameters have low Tucker rank is upper-bounded by $\bigo{(r^p+drp)\log(pd^{p-1})}$, which is looser than our bound due to the term $d^{p-1}$~(though a similar argument to the one we use in the proof of Theorem~\ref{thm:generalization.bound.TN.classifier} can be used to tighten the bound given in~\cite{rabusseau2016low}).

\paragraph{Tree tensor networks}  Lastly, we compare our result with the ones presented in~\cite{michel2020learning} where the authors study the complexity of learning with tree tensor networks using metric entropy and covering numbers. The results presented in~\cite{michel2020learning} only hold for TN structures whose underlying graph $G$ is a tree. Let $G$ be a tree and $\ell$ be a loss function which is both bounded and Lipschitz. Under these assumptions, it is shown in~\cite{michel2020learning}  that, for any $h\in\Hreg_G$, with high probability over the choice of a sample $S$ of size $n$ drawn i.i.d. from a distribution $D$, the generalization gap $R(h)-\hat{R}(h)$ is in $\tilde{\mathcal{O}}(\sqrt{\params{G}}/n)$. Theorem~\ref{thm:generalization.bound.TN.classifier} gives a similar upper bound in $\tilde{\mathcal{O}}(\sqrt{\params{G}}/n)$ on the generalization gap of low-rank tensor classifiers. However, our results hold for \emph{any} TN structure $G$.
Thus, in contrast with our general upper bound~(Theorem~\ref{thm:upper.bound.vc.and.pdim}), the bounds from~\cite{michel2020learning} cannot be applied to TN structures containing cycles such as tensor ring and PEPS. %

\section{Lower Bounds}\label{sec:lower.bounds}

\begin{table}
\caption{Summary of our results for common TN structures. Both lower and upper bounds hold for the VC/pseudo-dimension of $\Hclass_G$, $\Hcomp_G$ and $\Hreg_G$ for the corresponding TN structure $G$~(see Equations~(\ref{eq:TN-hyp-reg}-\ref{eq:TN-hyp-comp})). The upper bounds follow from applying our general upper bound~(Theorem~\ref{thm:upper.bound.vc.and.pdim}) to each TN structure. The lower bounds are proved for each TN structure specifically. Each lower bound is followed by the condition under which it holds in parenthesis~(small font). Note that the two bounds for TT and TR hold for both TN structures.} \label{tab:bounds.summary}
\renewcommand{\arraystretch}{1.5}
\begin{tabular}{|c|c|c|c|c|}
\hline
                                                        & rank one & CP & Tucker & {TT / TR} \\ 
Decomposition   &   
\hspace{-0.3cm}\scalebox{0.7}{
\begin{tikzpicture}[baseline=-0.5ex]
\tikzset{tensor/.style = {minimum size = 0.4cm,shape = circle,thick,draw=black,fill=blue!60!green!40!white,inner sep = 0pt}, edge/.style   = {thick,line width=.4mm},every loop/.style={}}

 \pgfmathsetmacro{\start}{0}
  \pgfmathsetmacro{\inc}{0.6}
  
  \foreach [count=\jj] \ii in {0,...,4} {
  \pgfmathsetmacro{\x}{\start+\ii*\inc}
  \ifthenelse{\ii=2 }%
  {
  \node[draw=none, inner sep=0pt, minimum size=9pt] (cp\ii) at (\x ,0) {$\cdots$};
  }%
  {
   \draw[edge] (\x,0)  -- (\x,0.4)  {};
   \node[tensor, inner sep=0pt, minimum size=9pt] (cp\ii) at (\x ,0) {};
   \node[draw=none] () at (\x-0.2 ,0.3) {\textcolor{gray}{$d$}};
  }
 }

\end{tikzpicture}}\hspace{-0.1cm}
&   
\hspace{-0.3cm}\scalebox{0.7}{
\begin{tikzpicture}[baseline=-0.5ex]
\tikzset{tensor/.style = {minimum size = 0.4cm,shape = circle,thick,draw=black,fill=blue!60!green!40!white,inner sep = 0pt}, edge/.style   = {thick,line width=.4mm},every loop/.style={}}

 \pgfmathsetmacro{\start}{0}
  \pgfmathsetmacro{\inc}{0.6}
  
  \foreach [count=\jj] \ii in {0,...,4} {
  \pgfmathsetmacro{\x}{\start+\ii*\inc}
  \ifthenelse{\ii=2 }%
  {
  \node[draw=none, inner sep=0pt, minimum size=9pt] (cp\ii) at (\x ,0) {$\cdots$};
  }%
  {
   \draw[edge] (\x,0)  -- (\x,0.4)  {};
   \node[tensor, inner sep=0pt, minimum size=9pt] (cp\ii) at (\x ,0) {};
   \node[draw=none] () at (\x-0.2 ,0.3) {\textcolor{gray}{$d$}};
  }
 }
 \node[fill=black,minimum size=0pt,inner sep=0.05cm,circle] (dot) at (\start+2*\inc,-0.4) {};
 \draw[thick,line width=.4mm,bend left] (dot) edge (cp0);
 \draw[thick,line width=.4mm,bend left] (dot) edge (cp1);
 \draw[thick,line width=.4mm,bend right] (dot) edge (cp3);
 \draw[thick,line width=.4mm,bend right] (dot) edge (cp4);
 \node[draw=none] () at (0.1 ,-0.4) {\textcolor{gray}{$r$}};
 \node[draw=none] () at (0.9 ,-0.2) {\textcolor{gray}{$r$}};
 \node[draw=none] () at (1.5 ,-0.2) {\textcolor{gray}{$r$}};
 \node[draw=none] () at (2.2 ,-0.4) {\textcolor{gray}{$r$}};
  
\end{tikzpicture}
}\hspace{-0.1cm}
&     
\hspace{-0.3cm}\scalebox{0.7}{
\begin{tikzpicture}[baseline=-0.5ex]
\tikzset{tensor/.style = {minimum size = 0.4cm,shape = circle,thick,draw=black,fill=blue!60!green!40!white,inner sep = 0pt}, edge/.style   = {thick,line width=.4mm},every loop/.style={}}

 \pgfmathsetmacro{\start}{0}
  \pgfmathsetmacro{\inc}{0.6}
  
  \foreach [count=\jj] \ii in {0,...,4} {
  \pgfmathsetmacro{\x}{\start+\ii*\inc}
  \ifthenelse{\ii=2 }%
  {
  \node[draw=none, inner sep=0pt, minimum size=9pt] (cp\ii) at (\x ,0) {$\cdots$};
  }%
  {
   \draw[edge] (\x,0)  -- (\x,0.4)  {};
   \node[tensor, inner sep=0pt, minimum size=9pt] (cp\ii) at (\x ,0) {};
   \node[draw=none] () at (\x-0.2 ,0.3) {\textcolor{gray}{$d$}};
  }
 }
 \node[tensor,inner sep=0pt,, minimum size=9pt] (dot) at (\start+2*\inc,-0.4) {};
 \draw[thick,line width=.4mm,bend left] (dot) edge (cp0);
 \draw[thick,line width=.4mm,bend left] (dot) edge (cp1);
 \draw[thick,line width=.4mm,bend right] (dot) edge (cp3);
 \draw[thick,line width=.4mm,bend right] (dot) edge (cp4);
 \node[draw=none] () at (0.1 ,-0.4) {\textcolor{gray}{$r$}};
 \node[draw=none] () at (0.9 ,-0.2) {\textcolor{gray}{$r$}};
 \node[draw=none] () at (1.5 ,-0.2) {\textcolor{gray}{$r$}};
 \node[draw=none] () at (2.2 ,-0.4) {\textcolor{gray}{$r$}};
  
\end{tikzpicture}
}\hspace{-0.3cm}
& \hspace{-0.3cm}{
\scalebox{0.5}{
\begin{tikzpicture}[baseline=-0.5ex]
\tikzset{tensor/.style = {minimum size = 0.4cm,shape = circle,thick,draw=black,fill=blue!60!green!40!white,inner sep = 0pt}, edge/.style   = {thick,line width=.4mm},every loop/.style={}} 
 \pgfmathsetmacro{\x}{0}
  \pgfmathsetmacro{\inc}{0.7}
\node[tensor] (T) at (\x,0) {};
\draw[edge] (T) -- (\x,-0.5);

\node[draw=none] () at (\x+0.2,-0.3) {\textcolor{gray}{$d$}};
\node[draw=none] () at (\x+0.35,0.2) {\textcolor{gray}{$r$}};

\pgfmathsetmacro{\x}{\x+\inc}
\node[tensor] (S) at (\x,0) {};
\draw[edge] (S) -- (\x,-0.5);
\node[draw=none] () at (\x+0.2,-0.3) {\textcolor{gray}{$d$}};
\node[draw=none] () at (\x+0.35,0.2) {\textcolor{gray}{$r$}};

\pgfmathsetmacro{\x}{\x+\inc}
\node[draw=none] (d) at (\x,0) {${\scriptstyle \cdots}$};
\draw[edge] (S) -- (d);
\draw[edge] (T) -- (S);
 \node[draw=none] () at (\x+0.35,0.2) {\textcolor{gray}{$r$}};

\pgfmathsetmacro{\x}{\x+\inc}
\node[tensor] (U) at (\x,0) {};
\draw[edge] (U) -- (\x,-0.5);
\draw[edge] (d) -- (U);
\node[draw=none] () at (\x-0.2,-0.3) {\textcolor{gray}{$d$}};
\node[draw=none] () at (\x+0.35,0.2) {\textcolor{gray}{$r$}};

\pgfmathsetmacro{\x}{\x+\inc}
\node[tensor] (V) at (\x,0) {};
\draw[edge] (V) -- (\x,-0.5);
\draw[edge] (V) -- (U);
\node[draw=none] () at (\x-0.2,-0.3) {\textcolor{gray}{$d$}};
\end{tikzpicture}
}
/
\hspace{-0.4cm}\scalebox{0.5}{
\begin{tikzpicture}[baseline=-0.5ex]
\tikzset{tensor/.style = {minimum size = 0.4cm,shape = circle,thick,draw=black,fill=blue!60!green!40!white,inner sep = 0pt}, edge/.style   = {thick,line width=.4mm},every loop/.style={}}
 \pgfmathsetmacro{\x}{0}
  \pgfmathsetmacro{\inc}{0.7}
\node[tensor] (T) at (\x,0) {};
\draw[edge] (T) -- (\x,-0.5);

\node[draw=none] () at (\x+0.2,-0.3) {\textcolor{gray}{$d$}};
\node[draw=none] () at (\x+0.35,0.2) {\textcolor{gray}{$r$}};

\pgfmathsetmacro{\x}{\x+\inc}
\node[tensor] (S) at (\x,0) {};
\draw[edge] (S) -- (\x,-0.5);
\node[draw=none] () at (\x+0.2,-0.3) {\textcolor{gray}{$d$}};
\node[draw=none] () at (\x+0.35,0.2) {\textcolor{gray}{$r$}};

\pgfmathsetmacro{\x}{\x+\inc}
\node[draw=none] (d) at (\x,0) {${\scriptstyle \cdots}$};
\node[draw=none] () at (\x,0.325) {\textcolor{gray}{$r$}};
\draw[edge] (S) -- (d);
\draw[edge] (T) -- (S);
 \node[draw=none] () at (\x+0.35,0.2) {\textcolor{gray}{$r$}};

\pgfmathsetmacro{\x}{\x+\inc}
\node[tensor] (U) at (\x,0) {};
\draw[edge] (U) -- (\x,-0.5);
\draw[edge] (d) -- (U);
\node[draw=none] () at (\x-0.2,-0.3) {\textcolor{gray}{$d$}};
\node[draw=none] () at (\x+0.35,0.2) {\textcolor{gray}{$r$}};

\pgfmathsetmacro{\x}{\x+\inc}
\node[tensor] (V) at (\x,0) {};
\draw[edge] (V) -- (\x,-0.5);
\draw[edge] (V) -- (U);
\node[draw=none] () at (\x-0.2,-0.3) {\textcolor{gray}{$d$}};
\path  (T) edge  [bend left,edge,in=30,out=150,distance=0.95cm]  (V);
\end{tikzpicture}
}\hspace{-0.55cm}
}    
\\ 
\hline
Lower Bound                                                              &     $(d-1)p $     & $rd$ \ \ \scalebox{0.7}{$ (r\leq d^{p-1})$}   &   $r^p$\ \ \scalebox{0.7}{$ (r\leq d)$}     &  $r^2d$ \ \  \scalebox{0.7}{$(r\leq d^{\lfloor \frac{p-1}{2}\rfloor},p\geq3)$}                          \\ 
 \scalebox{0.7}{ (condition)}                                                       &        &   &         &       {$\frac{p(r^2d-1)}{3}$} \ \ \scalebox{0.7}{$(r=d, \sfrac{p}{3}\in\mathbb{N})$}                    \\ \hline
Upper bound                                                            &  \hspace{-0.2cm}  $2dp\log(12p)$ \hspace{-0.3cm}     & \hspace{-0.2cm} $2prd\log(12p)$ \hspace{-0.3cm} &\hspace{-0.2cm}  2($r^p\! +\! prd)\log(24p)$ \hspace{-0.3cm}      &{ $2pr^2d\log(12p)$  }       \\ \hline
\end{tabular}
\end{table}

We now present lower bounds on the VC and pseudo-dimensions of standard TN models: rank-one, CP, Tucker, TT and TR. 

\begin{thm}\label{thm:lower.bounds}
The VC-dimension and pseudo-dimension of the classification, regression and completion hypothesis classes defined in Equations~(\ref{eq:TN-hyp-reg}-\ref{eq:TN-hyp-comp}) for the rank-one, CP, Tucker, TT and TR tensor network structures satisfy the lower bounds presented in Table~\ref{tab:bounds.summary}.

These lower bounds show that the general upper bound of Theorem~\ref{thm:upper.bound.vc.and.pdim} is tight up to a $\bigo{\log(p)}$ factor for rank-one, TT and TR tensors and is tight up to a constant for low-rank matrices.  
\end{thm}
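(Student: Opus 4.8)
The plan is to prove each entry of Table~\ref{tab:bounds.summary} by exhibiting, for the TN structure at hand, a set of $N$ points that is shattered by $\Hclass_G$ and pseudo-shattered by $\Hreg_G$ and $\Hcomp_G$, where $N$ is the claimed lower bound. The first step is a reduction unifying the three cases. Suppose that, for a given structure, we can find $N$ index tuples $t_1,\dots,t_N$ together with a fixing of all core tensors except a designated free subset, such that the evaluation map $\W\mapsto(\W_{t_1},\dots,\W_{t_N})$ is surjective onto $\R^N$ as the free cores vary. Then these $N$ locations are both shattered and pseudo-shattered at once: for completion the tuples $t_i$ are themselves the inputs and $\Hcomp_G$ returns $\W_{t_i}$, while for regression and classification we take the inputs $\Xt_i$ to be the indicator tensors of the $t_i$, so that $\langle\W,\Xt_i\rangle=\W_{t_i}$. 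Surjectivity realizes every value vector, hence (with thresholds $0$) every sign pattern in $\{-1,+1\}^N$, yielding the lower bound $N$ on $\pdim(\Hreg_G)$, $\vcdim(\Hclass_G)$ and $\pdim(\Hcomp_G)$ simultaneously.

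It then remains to construct the free-entry set for each structure. For \emph{rank one} tensors $\W=\u_1\otimes\cdots\otimes\u_p$, I would fix the first coordinate of each factor to $1$ and read off the $(d-1)p$ entries having a single coordinate $k\ge 2$ in mode $j$, which equal the free scalars $(\u_j)_k$. For \emph{CP}, I would fix the factors of modes $2,\dots,p$ so that the $r$ rank-one tensors they define are linearly independent in $\R^{d^{p-1}}$—possible exactly because $r\le d^{p-1}$—then choose $r$ multi-indices on which these are linearly independent, making the associated $r\times r$ selection matrix invertible; the $rd$ free first-mode parameters then realize arbitrary values at $rd$ entries. For \emph{Tucker}, I would choose each factor matrix to select $r$ coordinates (possible because $r\le d$), so that the $r^p$ entries $\W_{i_1,\dots,i_p}$ with every $i_j\le r$ coincide with the $r^p$ free core entries $\G_{i_1,\dots,i_p}$. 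For \emph{TT}, I would isolate one middle core $\G_m$ and fix the cores on either side so that the left and right contractions realize the standard basis of $\R^r$, whence the selected entries of $\W$ equal the $r^2d$ free entries of $\G_m$.

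The main obstacle is the TT and TR construction. For TT one has $\W_{i_1,\dots,i_p}=\sum_{a,b}L_a\,(\G_m)_{a,i_m,b}\,R_b$, where the left vector $L\in\R^r$ depends on $(i_1,\dots,i_{m-1})$ and the right vector $R\in\R^r$ on $(i_{m+1},\dots,i_p)$; to read off $(\G_m)_{a,i_m,b}$ one needs $L=e_a$ and $R=e_b$, so the task reduces to showing that a TT of length $m-1$ with uniform rank $r$ and physical dimension $d$ can realize all standard basis vectors at its boundary, and symmetrically on the right. This is possible when $d^{m-1}\ge r$ and $d^{p-m}\ge r$; balancing the two arcs by taking $m\approx\tfrac{p+1}{2}$ forces exactly the stated condition $r\le d^{\lfloor(p-1)/2\rfloor}$ with $p\ge 3$ (so a genuine middle core exists). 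For the tensor ring, isolating $\G_m$ gives $\W_{i_1,\dots,i_p}=\sum_{a,b}(\G_m)_{a,i_m,b}\,P_{b,a}$ with $P$ the product of the remaining cores around the loop; cutting the loop splits these into two arcs playing the roles of the left and right parts above, and making $P$ range over all rank-one matrices $e_b e_a^\top$ again reads off the $r^2d$ entries under the same condition. The second TR bound $\tfrac{p(r^2d-1)}{3}$ with $r=d$ is obtained by freeing roughly $p/3$ cores simultaneously, spaced so the intervening fixed cores act as independent readout arcs for each freed core, with the factor $\tfrac13$ counting the spacing overhead and the $-1$ a per-core gauge constraint. Finally, tightness up to the stated factors follows by comparing each $N$ with the general upper bound $2\params{G}\log(12|V|)$ of Theorem~\ref{thm:upper.bound.vc.and.pdim} for the corresponding structure.
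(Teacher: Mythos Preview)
Your reduction lemma and the constructions for rank-one, CP, Tucker, and the first TT/TR bound $r^2d$ are correct and essentially match the paper's approach (Lemma~\ref{lem:index.shattering.to.vc.lower.bounds} and Theorems~\ref{thm:lowerbound.rankone}--\ref{thm:lowerbound.CP}).

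The genuine gap is in the second TT/TR bound $\tfrac{p(r^2d-1)}{3}$. Your description---freeing $p/3$ cores with ``independent readout arcs'' and attributing the $-1$ to a ``per-core gauge constraint''---does not work as stated. Once several cores are free simultaneously, each tensor entry is a \emph{product} of entries from the different free cores, so these entries cannot be varied independently and you cannot simply add up $r^2d$ (or $r^2d-1$) shattered indices per free core. The paper's construction is different in spirit: with $r=d$ and $p=3k$, it frees the cores $\G^{(2)},\G^{(5)},\dots,\G^{(p-1)}$ and fixes the remaining cores so that
\[
\T_{i_1,\dots,i_p}=\G^{(2)}_{i_1,i_2,i_3}\,\G^{(5)}_{i_4,i_5,i_6}\cdots\G^{(p-1)}_{i_{p-2},i_{p-1},i_p},
\]
i.e., $\T=\G^{(2)}\otimes\G^{(5)}\otimes\cdots\otimes\G^{(p-1)}$ is exactly a \emph{rank-one} tensor in $(\R^{d^3})^{\otimes k}$. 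One then invokes the already-proved rank-one lower bound $(d'-1)p'$ with $d'=d^3$ and $p'=k=p/3$, yielding $(d^3-1)\tfrac{p}{3}=\tfrac{p(r^2d-1)}{3}$. So the $-1$ is inherited from the rank-one result (fixing one coordinate per factor), not from a separate gauge argument on TT cores, and the multiplicative coupling between free cores is handled by reducing to rank-one rather than by treating the cores as independent. Note also that this bound applies to TT as well as TR.
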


The proof of this theorem can be found in Appendix~\ref{app:proof.lower.bounds}.
These lower bounds show that our general upper bound is nearly optimal~(up to a $\log$ factor in $p$) for rank-one, TT and TR tensors.  Indeed, for rank-one tensors we have $(d-1)p\leq \mathcal{C}^{\mathrm{rank-one}} \leq 2dp\log(12p)$ and for TT and TR tensors of rank $r=d$ whose order $p$ is a multiple of $3$ we have $p(r^2d-1)/3\leq \mathcal{C}^{\mathrm{TT/TR}}_r\leq pr^2d\cdot 2\log(12p)$, where $\mathcal{C}^{\mathrm{rank-one}}$~(resp. $\mathcal{C}^{\mathrm{TT/TR}}_r$) denotes any of the VC/pseudo-dimension of the regression, classification and completion hypothesis classe associated with rank-one tensors~(resp. rank $r$ TT and TR tensors). In addition, the lower bound for the CP case shows that our general upper bounds are tight up to a constant for matrices. Indeed, for $p=2$ and $r\leq d$ the bounds for the CP case give $rd\leq \mathcal{C}^{\mathrm{matrix}}_r \leq 20rd$ where $\mathcal{C}^{\mathrm{matrix}}_r$ denotes the VC/pseudo-dimension of the hypothesis classes associated with $d\times d$ matrices of rank at most $r$.

\section{Conclusion}
We derived a general upper bound on the VC and pseudo-dimension of a large class of tensor models parameterized by \emph{arbitrary} tensor network structures for classification, regression and completion. We showed that this general bound can be applied to obtain bounds on the complexity of relevant machine learning models such as matrix and tensor completion, trace regression and TT-based linear classifiers. In particular, our result leads to an improved upper bound on the VC-dimension of low-rank matrices for completion tasks. As a corollary of our results, we answer the open question listed in~\cite{cirac2019mathematical} on the VC-dimension of the MPS classification model introduced in~\cite{stoudenmire2016supervised}. To demonstrate the tightness of our general upper bound, we derived a series of lower bounds for specific TN structures, notably showing that our bound is tight up to a constant for low-rank matrix models for completion, regression and classification.  

Future directions include deriving tighter upper bounds and/or lower bounds for the specific TN structures. This includes tightening up our general upper bound to remove the log factor in the number of vertices of the TN structure, deriving a stronger lower bound for CP~(we conjecture our lower bound can be improved by a factor $p$ for CP), and loosening the condition under which our stronger lower bound holds for TT and TR~(for TR, we conjecture that a lower bound of $\tilde{\Omega}(pr^2d)$ holds for any $p\geq 3$ and $r\leq d^k$ for some value of $k>1$). Lastly, studying other complexity measures of general TN models~(e.g. Rademacher complexity and covering numbers), and deriving a structural risk minimization framework for TN models from our results are interesting future directions.

\subsection*{Acknowledgements} 
This  research  is  supported  by  the  Canadian  Institute  for  Advanced  Research  (CIFAR  AI  chair program) and the Natural Sciences and Engineering Research Council of Canada (Discovery program, RGPIN-2019-05949).
\newpage
\bibliographystyle{plainnat}
\bibliography{references-fixed.bib}

\begin{thebibliography}{71}
\providecommand{\natexlab}[1]{#1}
\providecommand{\url}[1]{\texttt{#1}}
\expandafter\ifx\csname urlstyle\endcsname\relax
  \providecommand{\doi}[1]{doi: #1}\else
  \providecommand{\doi}{doi: \begingroup \urlstyle{rm}\Url}\fi

\bibitem[Acar et~al.(2011)Acar, Dunlavy, Kolda, and M{\o}rup]{acar2011scalable}
Evrim Acar, Daniel~M Dunlavy, Tamara~G Kolda, and Morten M{\o}rup.
\newblock Scalable tensor factorizations for incomplete data.
\newblock \emph{Chemometrics and Intelligent Laboratory Systems}, 106\penalty0
  (1):\penalty0 41--56, 2011.

\bibitem[Adhikary et~al.(2021)Adhikary, Srinivasan, Miller, Rabusseau, and
  Boots]{srinivasan2020quantum}
Sandesh Adhikary, Siddarth Srinivasan, Jacob Miller, Guillaume Rabusseau, and
  Byron Boots.
\newblock Quantum tensor networks, stochastic processes, and weighted automata.
\newblock In \emph{The 24th International Conference on Artificial Intelligence
  and Statistics}, volume 130, 2021.

\bibitem[Anthony and Bartlett(2009)]{anthony2009neural}
Martin Anthony and Peter~L Bartlett.
\newblock \emph{Neural network learning: Theoretical foundations}.
\newblock cambridge university press, 2009.

\bibitem[Biamonte and Bergholm(2017)]{biamonte2017tensor}
Jacob Biamonte and Ville Bergholm.
\newblock Tensor networks in a nutshell.
\newblock \emph{arXiv preprint arXiv:1708.00006}, 2017.

\bibitem[Bousquet et~al.(2003)Bousquet, Boucheron, and
  Lugosi]{bousquet2003introduction}
Olivier Bousquet, St{\'e}phane Boucheron, and G{\'a}bor Lugosi.
\newblock Introduction to statistical learning theory.
\newblock In \emph{Summer School on Machine Learning}, pages 169--207.
  Springer, 2003.

\bibitem[Cai et~al.(2006)Cai, He, Wen, Han, and Ma]{cai2006support}
Deng Cai, Xiaofei He, Ji-Rong Wen, Jiawei Han, and Wei-Ying Ma.
\newblock Support tensor machines for text categorization.
\newblock Technical report, 2006.

\bibitem[Cand{\`e}s and Recht(2009)]{candes2009exact}
Emmanuel~J Cand{\`e}s and Benjamin Recht.
\newblock Exact matrix completion via convex optimization.
\newblock \emph{Foundations of Computational mathematics}, 9\penalty0
  (6):\penalty0 717--772, 2009.

\bibitem[Cand{\`e}s and Tao(2010)]{candes2010power}
Emmanuel~J Cand{\`e}s and Terence Tao.
\newblock The power of convex relaxation: Near-optimal matrix completion.
\newblock \emph{IEEE Transactions on Information Theory}, 56\penalty0
  (5):\penalty0 2053--2080, 2010.

\bibitem[Chen et~al.(2021)Chen, Wu, Chen, Zheng, and Zhang]{chen2021auto}
Chuan Chen, Zhe-Bin Wu, Zi-Tai Chen, Zi-Bin Zheng, and Xiong-Jun Zhang.
\newblock Auto-weighted robust low-rank tensor completion via tensor-train.
\newblock \emph{Information Sciences}, 567:\penalty0 100--115, 2021.

\bibitem[Chen et~al.(2017)Chen, Batselier, Suykens, and
  Wong]{chen2017parallelized}
Zhongming Chen, Kim Batselier, Johan~AK Suykens, and Ngai Wong.
\newblock Parallelized tensor train learning of polynomial classifiers.
\newblock \emph{IEEE transactions on neural networks and learning systems},
  29\penalty0 (10):\penalty0 4621--4632, 2017.

\bibitem[Cheng et~al.(2020)Cheng, Wang, and Zhang]{cheng2020supervised}
Song Cheng, Lei Wang, and Pan Zhang.
\newblock Supervised learning with projected entangled pair states.
\newblock \emph{arXiv preprint arXiv:2009.09932}, 2020.

\bibitem[Cirac et~al.(2019)Cirac, Garre-Rubio, and
  P{\'e}rez-Garc{\'\i}a]{cirac2019mathematical}
Juan~Ignacio Cirac, Jos{\'e} Garre-Rubio, and David P{\'e}rez-Garc{\'\i}a.
\newblock Mathematical open problems in projected entangled pair states.
\newblock \emph{Revista Matem{\'a}tica Complutense}, 32\penalty0 (3):\penalty0
  579--599, 2019.

\bibitem[Cohen et~al.(2016)Cohen, Sharir, and Shashua]{cohen2016expressive}
Nadav Cohen, Or~Sharir, and Amnon Shashua.
\newblock On the expressive power of deep learning: {A} tensor analysis.
\newblock In \emph{Proceedings of the 29th Conference on Learning Theory},
  volume~49, pages 698--728, 2016.

\bibitem[Feynman(1986)]{feynman1986quantum}
Richard~P Feynman.
\newblock Quantum mechanical computers.
\newblock \emph{Foundations of physics}, 16\penalty0 (6):\penalty0 507--531,
  1986.

\bibitem[Gandy et~al.(2011)Gandy, Recht, and Yamada]{gandy2011tensor}
Silvia Gandy, Benjamin Recht, and Isao Yamada.
\newblock Tensor completion and low-n-rank tensor recovery via convex
  optimization.
\newblock \emph{Inverse Problems}, 27\penalty0 (2):\penalty0 025010, 2011.

\bibitem[Glasser et~al.(2019)Glasser, Sweke, Pancotti, Eisert, and
  Cirac]{glasser2019expressive}
Ivan Glasser, Ryan Sweke, Nicola Pancotti, Jens Eisert, and J.~Ignacio Cirac.
\newblock Expressive power of tensor-network factorizations for probabilistic
  modeling.
\newblock In \emph{Advances in Neural Information Processing Systems}, pages
  1496--1508, 2019.

\bibitem[Glasser et~al.(2020)Glasser, Pancotti, and
  Cirac]{glasser2020probabilistic}
Ivan Glasser, Nicola Pancotti, and J~Ignacio Cirac.
\newblock From probabilistic graphical models to generalized tensor networks
  for supervised learning.
\newblock \emph{IEEE Access}, 8:\penalty0 68169--68182, 2020.

\bibitem[Grasedyck(2010)]{grasedyck2010hierarchical}
Lars Grasedyck.
\newblock Hierarchical singular value decomposition of tensors.
\newblock \emph{SIAM Journal on Matrix Analysis and Applications}, 31\penalty0
  (4):\penalty0 2029--2054, 2010.

\bibitem[Grasedyck et~al.(2013)Grasedyck, Kressner, and
  Tobler]{grasedyck2013literature}
Lars Grasedyck, Daniel Kressner, and Christine Tobler.
\newblock A literature survey of low-rank tensor approximation techniques.
\newblock \emph{GAMM-Mitteilungen}, 36\penalty0 (1):\penalty0 53--78, 2013.

\bibitem[Grasedyck et~al.(2015)Grasedyck, Kluge, and
  Kramer]{grasedyck2015variants}
Lars Grasedyck, Melanie Kluge, and Sebastian Kramer.
\newblock Variants of alternating least squares tensor completion in the tensor
  train format.
\newblock \emph{SIAM Journal on Scientific Computing}, 37\penalty0
  (5):\penalty0 A2424--A2450, 2015.

\bibitem[Gross(2011)]{gross2011recovering}
David Gross.
\newblock Recovering low-rank matrices from few coefficients in any basis.
\newblock \emph{IEEE Transactions on Information Theory}, 57\penalty0
  (3):\penalty0 1548--1566, 2011.

\bibitem[Hackbusch and K{\"u}hn(2009)]{hackbusch2009new}
Wolfgang Hackbusch and Stefan K{\"u}hn.
\newblock A new scheme for the tensor representation.
\newblock \emph{Journal of Fourier analysis and applications}, 15\penalty0
  (5):\penalty0 706--722, 2009.

\bibitem[Hamidi and Bayati(2019)]{hamidi2019low}
Nima Hamidi and Mohsen Bayati.
\newblock On low-rank trace regression under general sampling distribution.
\newblock \emph{arXiv preprint arXiv:1904.08576}, 2019.

\bibitem[Han et~al.(2018)Han, Wang, Fan, Wang, and
  Zhang]{han_unsupervised_2018}
Zhao-Yu Han, Jun Wang, Heng Fan, Lei Wang, and Pan Zhang.
\newblock Unsupervised generative modeling using matrix product states.
\newblock \emph{Physical Review X}, 8\penalty0 (3):\penalty0 031012, 2018.

\bibitem[He et~al.(2018)He, Hu, and Wang]{he2018low}
Zhi He, Jie Hu, and Yiwen Wang.
\newblock Low-rank tensor learning for classification of hyperspectral image
  with limited labeled samples.
\newblock \emph{Signal Processing}, 145:\penalty0 12--25, 2018.

\bibitem[Hitchcock(1927)]{hitchcock1927expression}
Frank~L Hitchcock.
\newblock The expression of a tensor or a polyadic as a sum of products.
\newblock \emph{Journal of Mathematics and Physics}, 6\penalty0 (1-4):\penalty0
  164--189, 1927.

\bibitem[Imaizumi et~al.(2017)Imaizumi, Maehara, and
  Hayashi]{imaizumi2017tensor}
Masaaki Imaizumi, Takanori Maehara, and Kohei Hayashi.
\newblock On tensor train rank minimization : Statistical efficiency and
  scalable algorithm.
\newblock In \emph{Advances in Neural Information Processing Systems}, pages
  3930--3939, 2017.

\bibitem[Izmailov et~al.(2018)Izmailov, Novikov, and
  Kropotov]{izmailov2018scalable}
Pavel Izmailov, Alexander Novikov, and Dmitry Kropotov.
\newblock Scalable gaussian processes with billions of inducing inputs via
  tensor train decomposition.
\newblock In \emph{International Conference on Artificial Intelligence and
  Statistics}, volume~84, 2018.

\bibitem[Kadri et~al.(2020)Kadri, Ayache, Huusari, Rakotomamonjy, and
  Ralaivola]{kadri2020partial}
Hachem Kadri, St{\'{e}}phane Ayache, Riikka Huusari, Alain Rakotomamonjy, and
  Liva Ralaivola.
\newblock Partial trace regression and low-rank kraus decomposition.
\newblock In \emph{Proceedings of the 37th International Conference on Machine
  Learning}, volume 119, 2020.

\bibitem[Khrulkov et~al.(2018)Khrulkov, Novikov, and
  Oseledets]{khrulkov2018expressive}
Valentin Khrulkov, Alexander Novikov, and Ivan~V. Oseledets.
\newblock Expressive power of recurrent neural networks.
\newblock In \emph{Proc. of ICLR}, 2018.

\bibitem[Kolda and Bader(2009)]{kolda2009tensor}
Tamara~G Kolda and Brett~W Bader.
\newblock Tensor decompositions and applications.
\newblock \emph{SIAM review}, 51\penalty0 (3):\penalty0 455--500, 2009.

\bibitem[Koltchinskii and Xia(2015)]{koltchinskii2015optimal}
Vladimir Koltchinskii and Dong Xia.
\newblock Optimal estimation of low rank density matrices.
\newblock \emph{J. Mach. Learn. Res.}, 16\penalty0 (53):\penalty0 1757--1792,
  2015.

\bibitem[Liu et~al.(2009)Liu, Musialski, Wonka, and Ye]{liu2012tensor}
Ji~Liu, Przemyslaw Musialski, Peter Wonka, and Jieping Ye.
\newblock Tensor completion for estimating missing values in visual data.
\newblock In \emph{{IEEE} 12th International Conference on Computer Vision},
  2009.

\bibitem[Luo et~al.(2015)Luo, Xie, Zhang, and Li]{luo2015support}
Luo Luo, Yubo Xie, Zhihua Zhang, and Wu{-}Jun Li.
\newblock Support matrix machines.
\newblock In \emph{Proceedings of the 32nd International Conference on Machine
  Learning}, volume~37, 2015.

\bibitem[Makantasis et~al.(2018)Makantasis, Doulamis, Doulamis, and
  Nikitakis]{makantasis2018tensor}
Konstantinos Makantasis, Anastasios~D Doulamis, Nikolaos~D Doulamis, and
  Antonis Nikitakis.
\newblock Tensor-based classification models for hyperspectral data analysis.
\newblock \emph{IEEE Transactions on Geoscience and Remote Sensing},
  56\penalty0 (12):\penalty0 6884--6898, 2018.

\bibitem[Michel and Nouy(2020)]{michel2020learning}
Bertrand Michel and Anthony Nouy.
\newblock Learning with tree tensor networks: complexity estimates and model
  selection.
\newblock \emph{arXiv preprint arXiv:2007.01165}, 2020.

\bibitem[Miller et~al.(2021)Miller, Rabusseau, and Terilla]{miller2020tensor}
Jacob Miller, Guillaume Rabusseau, and John Terilla.
\newblock Tensor networks for probabilistic sequence modeling.
\newblock In \emph{The 24th International Conference on Artificial Intelligence
  and Statistics}, volume 130, 2021.

\bibitem[Mohri et~al.(2018)Mohri, Rostamizadeh, and
  Talwalkar]{mohri2018foundations}
Mehryar Mohri, Afshin Rostamizadeh, and Ameet Talwalkar.
\newblock \emph{Foundations of machine learning}.
\newblock MIT press, 2018.

\bibitem[Nickel and Tresp(2013)]{nickel2013analysis}
Maximilian Nickel and Volker Tresp.
\newblock An analysis of tensor models for learning on structured data.
\newblock In \emph{Joint European Conference on Machine Learning and Knowledge
  Discovery in Databases}, pages 272--287. Springer, 2013.

\bibitem[Novikov et~al.(2014)Novikov, Rodomanov, Osokin, and
  Vetrov]{novikov2014putting}
Alexander Novikov, Anton Rodomanov, Anton Osokin, and Dmitry~P. Vetrov.
\newblock Putting mrfs on a tensor train.
\newblock In \emph{Proceedings of the 31th International Conference on Machine
  Learning}, volume~32, 2014.

\bibitem[Novikov et~al.(2015)Novikov, Podoprikhin, Osokin, and
  Vetrov]{novikov2015tensorizing}
Alexander Novikov, Dmitry Podoprikhin, Anton Osokin, and Dmitry~P. Vetrov.
\newblock Tensorizing neural networks.
\newblock In \emph{Advances in Neural Information Processing Systems 28}, 2015.

\bibitem[Novikov et~al.(2016)Novikov, Trofimov, and
  Oseledets]{novikov2016exponential}
Alexander Novikov, Mikhail Trofimov, and Ivan Oseledets.
\newblock Exponential machines.
\newblock \emph{arXiv preprint arXiv:1605.03795}, 2016.

\bibitem[Or{\'u}s(2014)]{orus2014practical}
Rom{\'a}n Or{\'u}s.
\newblock A practical introduction to tensor networks: Matrix product states
  and projected entangled pair states.
\newblock \emph{Annals of Physics}, 349:\penalty0 117--158, 2014.

\bibitem[Oseledets(2011)]{oseledets2011tensor}
Ivan~V Oseledets.
\newblock Tensor-train decomposition.
\newblock \emph{SIAM Journal on Scientific Computing}, 33\penalty0
  (5):\penalty0 2295--2317, 2011.

\bibitem[Penrose(1971)]{penrose1971applications}
Roger Penrose.
\newblock Applications of negative dimensional tensors.
\newblock \emph{Combinatorial mathematics and its applications}, 1:\penalty0
  221--244, 1971.

\bibitem[Phien et~al.(2016)Phien, Tuan, Bengua, and Do]{phien2016efficient}
Ho~N Phien, Hoang~D Tuan, Johann~A Bengua, and Minh~N Do.
\newblock Efficient tensor completion: Low-rank tensor train.
\newblock \emph{arXiv preprint arXiv:1601.01083}, 2016.

\bibitem[Pirsiavash et~al.(2009)Pirsiavash, Ramanan, and
  Fowlkes]{pirsiavash2009bilinear}
Hamed Pirsiavash, Deva Ramanan, and Charless~C. Fowlkes.
\newblock Bilinear classifiers for visual recognition.
\newblock In \emph{Advances in Neural Information Processing Systems}, 2009.

\bibitem[Rabusseau and Kadri(2016)]{rabusseau2016low}
Guillaume Rabusseau and Hachem Kadri.
\newblock Low-rank regression with tensor responses.
\newblock In \emph{Advances in Neural Information Processing Systems}, 2016.

\bibitem[Recht et~al.(2010)Recht, Fazel, and Parrilo]{recht2010guaranteed}
Benjamin Recht, Maryam Fazel, and Pablo~A Parrilo.
\newblock Guaranteed minimum-rank solutions of linear matrix equations via
  nuclear norm minimization.
\newblock \emph{SIAM review}, 52\penalty0 (3):\penalty0 471--501, 2010.

\bibitem[Schollw{\"o}ck(2011)]{schollwock2011density}
Ulrich Schollw{\"o}ck.
\newblock The density-matrix renormalization group in the age of matrix product
  states.
\newblock \emph{Annals of physics}, 326\penalty0 (1):\penalty0 96--192, 2011.

\bibitem[Selvan and Dam(2020)]{selvan2020tensor}
Raghavendra Selvan and Erik~B Dam.
\newblock Tensor networks for medical image classification.
\newblock \emph{arXiv preprint arXiv:2004.10076}, 2020.

\bibitem[Srebro et~al.(2004)Srebro, Alon, and
  Jaakkola]{srebro2005generalization}
Nathan Srebro, Noga Alon, and Tommi~S. Jaakkola.
\newblock Generalization error bounds for collaborative prediction with
  low-rank matrices.
\newblock In \emph{Advances in Neural Information Processing Systems}, 2004.

\bibitem[Stoudenmire(2018)]{stoudenmire2018learning}
E.~Miles Stoudenmire.
\newblock Learning relevant features of data with multi-scale tensor networks.
\newblock \emph{Quantum Science and Technology}, 3\penalty0 (3):\penalty0
  034003, 2018.

\bibitem[Stoudenmire and Schwab(2016)]{stoudenmire2016supervised}
Edwin~Miles Stoudenmire and David~J. Schwab.
\newblock Supervised learning with tensor networks.
\newblock In \emph{Advances in Neural Information Processing Systems}, 2016.

\bibitem[Tomioka and Suzuki(2013)]{tomioka2013convex}
Ryota Tomioka and Taiji Suzuki.
\newblock Convex tensor decomposition via structured schatten norm
  regularization.
\newblock In \emph{Advances in Neural Information Processing Systems}, 2013.

\bibitem[Tomioka et~al.(2011)Tomioka, Suzuki, Hayashi, and
  Kashima]{tomioka2011statistical}
Ryota Tomioka, Taiji Suzuki, Kohei Hayashi, and Hisashi Kashima.
\newblock Statistical performance of convex tensor decomposition.
\newblock In \emph{Advances in Neural Information Processing Systems}, 2011.

\bibitem[Tucker(1966)]{tucker1966some}
Ledyard~R Tucker.
\newblock Some mathematical notes on three-mode factor analysis.
\newblock \emph{Psychometrika}, 31\penalty0 (3):\penalty0 279--311, 1966.

\bibitem[Verstraete et~al.(2008)Verstraete, Murg, and
  Cirac]{verstraete2008matrix}
Frank Verstraete, Valentin Murg, and J~Ignacio Cirac.
\newblock Matrix product states, projected entangled pair states, and
  variational renormalization group methods for quantum spin systems.
\newblock \emph{Advances in Physics}, 57\penalty0 (2):\penalty0 143--224, 2008.

\bibitem[Wang et~al.(2016)Wang, Aggarwal, and Aeron]{wang2016tensor}
Wenqi Wang, Vaneet Aggarwal, and Shuchin Aeron.
\newblock Tensor completion by alternating minimization under the tensor train
  (tt) model.
\newblock \emph{arXiv preprint arXiv:1609.05587}, 2016.

\bibitem[Wang et~al.(2017{\natexlab{a}})Wang, Aggarwal, and
  Aeron]{wang2017efficient}
Wenqi Wang, Vaneet Aggarwal, and Shuchin Aeron.
\newblock Efficient low rank tensor ring completion.
\newblock In \emph{{IEEE} International Conference on Computer Vision},
  2017{\natexlab{a}}.

\bibitem[Wang et~al.(2018)Wang, Sun, Eriksson, Wang, and
  Aggarwal]{wang2018wide}
Wenqi Wang, Yifan Sun, Brian Eriksson, Wenlin Wang, and Vaneet Aggarwal.
\newblock Wide compression: Tensor ring nets.
\newblock In \emph{2018 {IEEE} Conference on Computer Vision and Pattern
  Recognition}, 2018.

\bibitem[Wang et~al.(2013)]{wang2013asymptotic}
Yazhen Wang et~al.
\newblock Asymptotic equivalence of quantum state tomography and noisy matrix
  completion.
\newblock \emph{The Annals of Statistics}, 41\penalty0 (5):\penalty0
  2462--2504, 2013.

\bibitem[Wang et~al.(2017{\natexlab{b}})Wang, Zhang, Yu, Gu, Liu, Cai, Wang,
  and Gao]{wang2017support}
Yongkang Wang, Weicheng Zhang, Zhuliang Yu, Zhenghui Gu, Hao Liu, Zhaoquan Cai,
  Congjun Wang, and Shihan Gao.
\newblock Support vector machine based on low-rank tensor train decomposition
  for big data applications.
\newblock In \emph{2017 12th IEEE Conference on Industrial Electronics and
  Applications (ICIEA)}, pages 850--853. IEEE, 2017{\natexlab{b}}.

\bibitem[Warren(1968)]{warren1968lower}
Hugh~E Warren.
\newblock Lower bounds for approximation by nonlinear manifolds.
\newblock \emph{Transactions of the American Mathematical Society},
  133\penalty0 (1):\penalty0 167--178, 1968.

\bibitem[Wolf et~al.(2007)Wolf, Jhuang, and Hazan]{wolf2007modeling}
Lior Wolf, Hueihan Jhuang, and Tamir Hazan.
\newblock Modeling appearances with low-rank {SVM}.
\newblock In \emph{{IEEE} Computer Society Conference on Computer Vision and
  Pattern Recognition}, 2007.

\bibitem[Xu et~al.(2018)Xu, Wu, Wang, Liu, Sun, and Cichocki]{xu2018whole}
Xiaowen Xu, Qiang Wu, Shuo Wang, Ju~Liu, Jiande Sun, and Andrzej Cichocki.
\newblock Whole brain fmri pattern analysis based on tensor neural network.
\newblock \emph{IEEE Access}, 6:\penalty0 29297--29305, 2018.

\bibitem[Yang et~al.(2017)Yang, Krompass, and Tresp]{yang_tensor-train_2017}
Yinchong Yang, Denis Krompass, and Volker Tresp.
\newblock Tensor-train recurrent neural networks for video classification.
\newblock In \emph{Proceedings of the 34th International Conference on Machine
  Learning}, volume~70, 2017.

\bibitem[Yu et~al.(2018)Yu, Li, and Liu]{yu2018tensor}
Rose Yu, Max~Guangyu Li, and Yan Liu.
\newblock Tensor regression meets gaussian processes.
\newblock In \emph{International Conference on Artificial Intelligence and
  Statistics}, volume~84, 2018.

\bibitem[Yuan et~al.(2018)Yuan, Cao, Zhao, Wu, and Zhao]{yuan2018higher}
Longhao Yuan, Jianting Cao, Xuyang Zhao, Qiang Wu, and Qibin Zhao.
\newblock Higher-dimension tensor completion via low-rank tensor ring
  decomposition.
\newblock In \emph{2018 Asia-Pacific Signal and Information Processing
  Association Annual Summit and Conference (APSIPA ASC)}, pages 1071--1076.
  IEEE, 2018.

\bibitem[Yuan et~al.(2019)Yuan, Li, Mandic, Cao, and Zhao]{yuan2019tensor}
Longhao Yuan, Chao Li, Danilo Mandic, Jianting Cao, and Qibin Zhao.
\newblock Tensor ring decomposition with rank minimization on latent space: An
  efficient approach for tensor completion.
\newblock In \emph{Proc. of AAAI}, volume~33, pages 9151--9158, 2019.

\bibitem[Zhao et~al.(2016)Zhao, Zhou, Xie, Zhang, and Cichocki]{zhao2016tensor}
Qibin Zhao, Guoxu Zhou, Shengli Xie, Liqing Zhang, and Andrzej Cichocki.
\newblock Tensor ring decomposition.
\newblock \emph{arXiv preprint arXiv:1606.05535}, 2016.

\end{thebibliography}

\newpage
\appendix

\begin{center}
    \Large \textbf{Lower and Upper Bounds on the Pseudo-Dimension of Tensor Network Models \\ 
    \medskip
    (Supplementary Material)}
\end{center}

\section{Proofs}

\subsection{Upper Bound and Generalization Bound}

\subsubsection{Proof of Theorem~\ref{thm:upper.bound.vc.and.pdim}}\label{app:Upper_Bound_VC_and_pseudo}
\begin{thm*}%
Let $G=(V,E,\dim)$ be a tensor network structure and let $\Hreg_G$, $\Hclass_G$, $\Hcomp_G$ be the corresponding hypothesis classes defined in Equations~\eqref{eq:TN-hyp-reg},~\eqref{eq:TN-hyp-class},~\eqref{eq:TN-hyp-comp} where each model has $\params{G}$   parameters~(see Equation~\eqref{eq:number_parameters}). 

Then, $\pdim(\Hreg_G)$, $\vcdim(\Hclass_G)$ and $\pdim(\Hcomp_G)$ are all upper bounded by
$ 2 \params{G}\log(12|V|)\enspace $.
\end{thm*}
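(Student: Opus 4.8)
The plan is to reduce all three quantities to the pseudo-dimension of a single class of bounded-degree polynomials over $\params{G}$ real variables, and then control that quantity with Warren's theorem (Theorem~\ref{warren}). The first step I would carry out is to make precise the remark following the theorem statement: fix the vector $\theta\in\Rbb^{\params{G}}$ collecting all entries of the core tensors $\{\T^v\}_{v\in V}$, and observe that each entry $\Wt_{i_1,\dots,i_p}$ of $\Wt=\mathrm{TN}(G,\{\T^v\}_{v\in V})$ is, by the definition of contraction, a sum over the internal indices of monomials each containing exactly one factor from every core tensor. Hence every entry of $\Wt$ is a homogeneous polynomial of degree $|V|$ in $\theta$, and for any fixed input $\Xt$ the map $\theta\mapsto\langle\Wt,\Xt\rangle$, being a fixed linear combination of these entries, is again a polynomial of degree at most $|V|$ in the $\params{G}$ variables (see Equation~\eqref{eq:number_parameters}). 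For the completion class the relevant map $\theta\mapsto \Wt_{i_1,\dots,i_p}$ is directly such a polynomial, one for each index tuple.

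Next I would reduce the three measures to one. Both $\pdim(\Hreg_G)$ and $\pdim(\Hcomp_G)$ are by construction the pseudo-dimension of a family of degree-$|V|$ polynomials in $\params{G}$ variables. For classification I would invoke the identity $\pdim(\Hcal)=\vcdim(\{(x,t)\mapsto\sign(h(x)-t)\})$ recalled in Section~\ref{section:vc}: any set shattered by $\Hclass_G$ is pseudo-shattered by $\Hreg_G$ with all thresholds equal to $0$, so $\vcdim(\Hclass_G)\le\pdim(\Hreg_G)$. It therefore suffices to prove the bound $2\params{G}\log(12|V|)$ for the pseudo-dimension of a class of polynomials of degree at most $|V|$ in $\params{G}$ variables.

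To bound this pseudo-dimension, suppose $x_1,\dots,x_n$ are pseudo-shattered with thresholds $t_1,\dots,t_n$. The functions $P_i(\theta)=h_\theta(x_i)-t_i$ are then $n$ polynomials of degree at most $|V|$ over $\params{G}$ variables, and pseudo-shattering means that all $2^n$ sign patterns in $\{-1,+1\}^n$ are realized by $(\sign P_1(\theta),\dots,\sign P_n(\theta))$ as $\theta$ ranges over $\Rbb^{\params{G}}$; the boundary convention $P_i=0$ is removed by replacing each $t_i$ with $t_i-\varepsilon_i$ for $\varepsilon_i>0$ smaller than the (finitely many, strictly positive) gaps $|P_i(\theta)|$ arising at the witnessing parameters, which turns every realized pattern into a strict one without destroying the others. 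Warren's theorem then yields $2^n\le\left(\frac{4e|V|n}{\params{G}}\right)^{\params{G}}$; here Warren's hypothesis $n>\params{G}$ is free, since a pseudo-shattered set of size $n\le\params{G}$ already obeys the claimed bound (as $2\log(12|V|)\ge1$), and the degenerate case $\params{G}\le2$ is handled by an elementary count of sign patterns of univariate/bivariate polynomials.

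The only genuinely computational step---and the main obstacle---is converting the implicit estimate $2^n\le(4e|V|n/\params{G})^{\params{G}}$ into the clean closed form $n\le 2\params{G}\log(12|V|)$. Writing $N=\params{G}$ and $v=|V|$, I would take logarithms and study $g(n)=n\ln 2-N\ln(4evn/N)$, whose derivative $\ln 2-N/n$ is positive once $n>N/\ln2$; the shattering inequality is exactly $g(n)\le0$. I would then verify that at $n_0=2N\log(12v)$, with $\log$ the base-$2$ logarithm as required to match the stated constant, one has $g(n_0)\ge0$, which after simplification reduces to the elementary inequality $2\ln(12v)\ge\ln\!\big(8ev\ln(12v)/\ln2\big)$, valid for all $v\ge1$; the constant $12$ is tuned precisely so that this holds in the tightest case $v=2$. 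Since $n_0>N/\ln2$ and $g$ is increasing there, any $n>n_0$ forces $g(n)>0$, contradicting shattering, so the pseudo-dimension is at most $n_0=2N\log(12v)$. Combining this with the two reductions of the previous step gives the stated bound on $\pdim(\Hreg_G)$, $\vcdim(\Hclass_G)$ and $\pdim(\Hcomp_G)$.
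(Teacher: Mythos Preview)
Your proposal is correct and follows essentially the same route as the paper: identify each hypothesis as a polynomial of degree $|V|$ in $\params{G}$ variables, apply Warren's theorem to bound the number of sign patterns, and then convert the implicit inequality $2^n\le(4e|V|n/\params{G})^{\params{G}}$ into the explicit bound $2\params{G}\log(12|V|)$; the paper carries out this last step via the inequality $\ln n\le nb+\ln(1/b)-1$ with $b=\ln 2/(2\params{G})$ rather than your direct monotonicity analysis of $g$, but the two arguments are equivalent. Your reductions of $\vcdim(\Hclass_G)$ and $\pdim(\Hcomp_G)$ to $\pdim(\Hreg_G)$ mirror the paper's exactly, and you are in fact a bit more careful than the paper about edge cases (the $P_i=0$ boundary convention and Warren's hypothesis $n>\params{G}$).
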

\begin{proof}
We start with the pseudo-dimension introduced in Definition~\ref{def:vcdim:pseudodim}. Consider $n$ input tensors $\ten{X}_1,\cdots,\ten{X}_n$ and arbitrary threshold values  $t_{1},\cdots,t_n$. To upper-bound $\pdim(\Hreg_G)$, it is enough to show that for any set $S=\{\ten{X}_1,\cdots,\ten{X}_n\}$ and threshold values  $t_{1}\cdots,t_n$, the number of \emph{relative sign patterns} realized by the class of functions $\Hreg_G$, is bounded by a value depending only on $n$ and tensor network structure $G$. Formally, we define the maximal number of sign patterns as follows: %
\begin{equation}\label{eq:pseudo_dim}
    f(n , G) := \sup_{\Xt_1,\cdots\Xt_n\in\Xcal \atop t_1,\cdots,t_n\in\Rbb } \left|\left\{ \begin{pmatrix} \sgn(h(\ten{X}_1) - t_1)\\ \vdots\\ \sgn(h(\ten{X}_n) - t_n)\end{pmatrix}\bigm| h\in\Hreg_G  \right\}\right|
\end{equation}

Let $G=(V,E)$ be an arbitrary TN structure. For $h\in\Hcal_G^{\text{regression}}$, by definition, $h:\ten{X}\mapsto  \langle \ten{W}, \ten{X}\rangle$ for some weight tensor $\ten{W}\in\Tt(G)$. Consequently, there exists a collection of core tensors $\T^v\in \bigotimes_{e\in E_v}\Rbb^{\tndim(e)}$ such that $\Wt=TN(G, \{\T^v\}_{v\in V})$~(see Equation~\eqref{eq:def.of.T(G)}) and it follows that $h(\Xt)$ is a polynomial of degree $|V|$ over $\params{G}$ variables. The variables of the polynomial are the entries of the core tensors $\{\T^v\}_{v\in V}$.

Now, given a set of input tensors $S=\{\ten{X}_1,\cdots,\ten{X}_n\}$, the value $f(n, \params{G})$ in Equation~\eqref{eq:pseudo_dim}   is thus bounded by the number of sign patterns that a system of $n$ polynomial equations~(one for each input data point) of order $|V|$ over $\params{G}$ variables can take. It then follows from Warren's theorem~(Theorem~\ref{warren}) that
\begin{equation}\label{growth1}
f(n,G) \leq \left(\frac{4en|V|}{\params{G}}\right)^{\params{G}}.  
\end{equation}

\paragraph{Bound on the pseudo-dimension} 
To extract a bound on the pseudo-dimension from the above bound on the number of \emph{relative sign patterns}, we follow the line of the proof of Theorem 8.3 in~\cite{anthony2009neural}.
First observe that by the definition  of the pseudo-dimension, if $f(n , \params{G})<2^n$ for some $n$, then $\pdim(\Hreg_G) < n$. Using the bound  on $f(n , \params{G})$, we have $f(n , \params{G})\leq \left(\frac{4en|V|}{\params{G}}\right)^{\params{G}}
<2^n$ if and only if
\begin{equation}\label{eq:proof.pdim}
\params{G}\left(\log n + \log\frac{4e|V|}{\params{G}}\right)<n.
\end{equation} 
Using the classical inequality $\ln n \leq nb + \ln\frac{1}{b}-1$, or equivalently $\log n \leq \frac{nb}{\ln 2} + \log\frac{1}{eb}\,$, it follows that $$\log n \leq \frac{n}{2\params{G}} + \log\frac{2\params{G}}{e\ln 2}.$$ Consequently, Equation~\eqref{eq:proof.pdim} is implied by $n > 2\params{G} \log \frac{8|V|}{\ln 2}$, which is in turn implied by $n > 2\params{G}\log(12|V|)$.

We thus have shown that $\pdim(\Hreg_G) \leq 2 \params{G} \log(12 |V|)$. Since $\pdim(\Hcal)=\vcdim(\{(x,t)\mapsto \sign(h(x)-t)\mid h\in \Hcal\})$ for any hypothesis class $\Hcal$, this upper bound implies that there exists no set of $k\geq  2 \params{G} \log(12 |V|)$ points that are shattered by the hypothesis class 
$$\{(\Xt,t)\mapsto \sign(h(\Xt)-t)\mid h\in \Hreg_G\} = \{(\Xt,t)\mapsto \sign(\langle \W,\Xt\rangle -t)\mid \W\in\T(G)\}.$$
In particular, no set of $k$ points with thresholds $t_1=\cdots=t_k=0$ is shattered by $\Hreg_G$, which is equivalent to no set of $k$ points being shattered by $\Hclass_G$, hence $\pdim(\Hclass_G) \leq 2 \params{G} \log(12 |V|)$.

Similarly, for the completion case we argue that the maximum number of multiples of indices shattered by the function class $\Hcomp_G$ is bounded by the same value as $\pdim(\Hreg_G)$.
The \emph{Pseudo-dimension} of $\Hcomp_G$ is by definition, the maximum number of indices, i.e., the maximum number of the entries of the tensor, that could be pseudo-shattered~(with thresholds zero) by the the class of tensors $\Hcomp_G$. Each component of the tensor $\ten T_{i_1,\cdots i_p}$ can be written as the following inner product $$ \ten T_{i_1,\cdots i_p} = \langle\ten T , \vec e^{(1)}_{i_1}\otimes\vec e^{(2)}_{i_2}\otimes\cdots\otimes\vec e^{(p)}_{i_p}\rangle  $$
where each $\vec e^{(j)}_i\in\R^{d_j}$ is the $j$th vector of the canonical basis of $\R^{d_j}$. Thus, no set of more than $2 \params{G} \log(12 |V|)$ indices can be shattered by $\Hcomp_G$, since this would mean that the corresponding set of points $ \vec e^{(1)}_{i_1}\otimes \cdots\otimes\vec e^{(p)}_{i_p}$ would be shattered by $\Hreg_G$. Hence,
$\pdim(\Hcomp_G) \leq 2 \params{G} \log(12 |V|)$.
\end{proof}

\subsubsection{Proof of Theorem~\ref{thm:generalization.bound.TN.classifier}}\label{app:upper_bound_generalization_classifier}
\begin{thm*}%
Let $S$ be a sample of size $n$ drawn from a distribution $D$. Then,  for any $\delta>0$, with probability at least $1-\delta$ over the choice of $S$, we have for any $h\in\Hclass_G$
\begin{equation}\label{eq:gen.bound.on.HG}
  R(h)<\hat{R}_S(h) + 2\sqrt{\frac2n\left (\params{G}\log\frac{8en|V|}{\params{G}} +\log{\frac4{\delta}}\right)}\enspace.
\end{equation}
\end{thm*}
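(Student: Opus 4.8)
The plan is to follow the classical route from the growth function to a uniform generalization bound, feeding in the sign‑pattern count already established in the proof of Theorem~\ref{thm:upper.bound.vc.and.pdim}. Since any hypothesis $h\in\Hclass_G$ takes values in $\{-1,+1\}$, a loss $\ell$ bounded by $1$ is determined by the prediction pattern $(h(x_i))_i$; hence $\hat{R}_S(h)$ is an average of $[0,1]$‑valued random variables and the loss class inherits the growth function of $\Hclass_G$. The first step is therefore to record the growth function bound for the classification class: specializing the count $f(m,G)$ from the proof of Theorem~\ref{thm:upper.bound.vc.and.pdim} to the thresholds $t_1=\cdots=t_m=0$, Warren's theorem (Theorem~\ref{warren}) yields
\[
\Pi_{\Hclass_G}(m)\leq \left(\frac{4em|V|}{\params{G}}\right)^{\params{G}}.
\]

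Next I would invoke the standard VC‑style uniform convergence inequality, obtained via a symmetrization lemma (introducing a ghost sample $S'$ of size $n$) followed by a permutation/union‑bound argument in which Hoeffding's inequality controls the deviation over the at most $\Pi_{\Hclass_G}(2n)$ distinct behaviours realizable on $S\cup S'$. This gives, for every $\epsilon>0$,
\[
\Prob\!\left[\sup_{h\in\Hclass_G}\big(R(h)-\hat{R}_S(h)\big)>\epsilon\right]\leq 4\,\Pi_{\Hclass_G}(2n)\,e^{-n\epsilon^2/8}.
\]

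Finally I would invert this tail bound. Setting the right‑hand side equal to $\delta$ and solving for $\epsilon$ gives $\epsilon=\sqrt{\tfrac{8}{n}\big(\ln\Pi_{\Hclass_G}(2n)+\ln\tfrac{4}{\delta}\big)}$; substituting the growth function bound at $2n$ points, namely $\Pi_{\Hclass_G}(2n)\leq\big(\tfrac{8en|V|}{\params{G}}\big)^{\params{G}}$, and using $\ln x\leq\log x$ to pass from natural to base‑$2$ logarithms, I obtain
\[
\epsilon\leq\sqrt{\tfrac{8}{n}\Big(\params{G}\log\tfrac{8en|V|}{\params{G}}+\log\tfrac{4}{\delta}\Big)}=2\sqrt{\tfrac{2}{n}\Big(\params{G}\log\tfrac{8en|V|}{\params{G}}+\log\tfrac{4}{\delta}\Big)},
\]
which, holding with probability at least $1-\delta$ over the choice of $S$, is exactly the claimed bound~\eqref{eq:gen.bound.on.HG}. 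The substantive content—the sign‑pattern count through Warren's theorem—is already in hand, so the main obstacle is bookkeeping: verifying that composing with a loss bounded by $1$ does not inflate the growth function beyond $\Pi_{\Hclass_G}$, and carefully tracking the constant $4$ and the exponent $\tfrac18$ through the symmetrization step so that the inversion produces precisely $\log\tfrac{4}{\delta}$ and the prefactor $2\sqrt{2/n}$ rather than a different constant.
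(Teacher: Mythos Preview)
Your proposal is correct and follows essentially the same route as the paper: bound the growth function of $\Hclass_G$ via Warren's theorem, apply the symmetrization lemma together with Hoeffding's inequality to obtain the tail bound $4\,\Pi_{\Hclass_G}(2n)\,e^{-n\epsilon^2/8}$, and then invert. The constants you track ($4$ in front, exponent $1/8$, and the $2n$ inside the growth function) match the paper's derivation exactly.
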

\begin{proof}
We use a  symmetrization lemma and a corollary of Hoeffding's inequality. For this part, let $\Hcal=\Hclass_G$. 
\begin{lem}(Symmetrization Lemma)\label{lem1}~\cite[Lemma 2]{bousquet2003introduction}
Let $S$ and $S'$ be two random samples of size $n$ drawn from a distribution $D$. Then for any $t>0$ such that $nt^2\geq2$, we have 
\begin{align}
& \mathbb{P}_{S\sim D}\left[\sup_{h\in\mathcal{H}}\left(  R(h) -  \hat R_S(h)      \right)\geq t\right]
\leq 2~\mathbb{P}_{S,S'\sim D}\left[\sup_{h\in\mathcal{H}}\left(\hat{R}_{S'}(h) - \hat{R}_{S}(h) \right)\geq \frac t2\right],    
\end{align}
\end{lem}

\begin{cor}\label{corGhost}%
If $Z_1 ,\dots , Z_n , Z'_1 , \dots  , Z'_n$
are $2n$ i.i.d. random variables with values in $[0,1]$, then for all $\varepsilon > 0$ we have
\begin{align}\label{eq:corollary_hoeffding}
\mathbb{P}\left[\frac1n\sum_{i=1}^nZ_i  - \frac1n\sum_{i=1}^nZ'_i>\varepsilon\right]\leq 2 \exp{\left(-\frac{n\varepsilon^2}2\right)}
\end{align}
\end{cor}
This statement is proved by rewriting $\mathbb P[\frac1n\sum_{i=1}^nZ_i  - \frac1n\sum_{i=1}^nZ'_i>\varepsilon]$ as $\mathbb P[\frac1n\sum_{i=1}^n(Z_i - \mathbb E[Z_i])  - \frac1n\sum_{i=1}^n(Z'_i - \mathbb E[Z'_i])>\varepsilon] \leq \mathbb P[\frac1n\sum_{i=1}^n(Z_i - \mathbb E[Z_i])>\frac{\varepsilon}{2}] +\mathbb P[  \frac1n\sum_{i=1}^n(-Z'_i + \mathbb E[Z'_i])>\frac{\varepsilon}2]$. Then by using Hoeffding's inequality, Equation~\eqref{eq:corollary_hoeffding} is proved.

From Lemma~\ref{lem1} we have
\begin{align}\label{eq}
\mathbb{P}_{S\sim D}\left[\sup_{h\in\mathcal{H}}\big(  R(h)-\hat{R}_S(h)\big)\geq 2\varepsilon\right]\leq\,&
2\mathbb{P}_{S\sim D,S'\sim D}\left[\max_{h\in\mathcal{H}_{S,S'}}(\hat{R}_{S'}(h) - \hat{R}_{S}(h) )\geq\varepsilon\right]\nn\\
\leq\,&2~ \mathbb{P}_{S,S'\sim D} \left[\exists{h\in\mathcal{H}_{S,S'}} \mid (\hat{R}_{S'}(h) - \hat{R}_{S}(h) )\geq\varepsilon\right],
\end{align}
where  $\mathcal{H}_{S,S'}$  is  the projection of the hypothesis class $\Hcal$ onto the subset $S\cup S'$. Then, by applying the union bound followed by Corollary~\ref{corGhost}~(by taking the bounded loss as the random variable $Z$) and recalling the notion of the growth function from Definition~\ref{def:vcdim:pseudodim}, we get
\begin{align}\label{eq:comparison}
\mathbb{P}_{S\sim D}[\sup_{h\in\mathcal{H}}\big(  R(h)-\hat{R}_S(h)\big)\geq 2\varepsilon]\leq\,&
 2\Pi_{\mathcal{H}}(2n)  \left(\mathbb{P}_{S,S'\sim D}\left[\hat{R}_{S'}(h) - \hat{R}_{S}(h)\geq\varepsilon\right]\right)\nonumber\\
\leq\,&4~\Pi_{\mathcal{H}}(2n) \exp{(-\frac{n\varepsilon^2}2)},
\end{align}
 In order to upper bound the growth function of the hypothesis class $\Hclass_G$
 we can use the same argument as we did for the pseudo-dimension, which results in an upper bound similar to the one for the \emph{number of  relative sign patterns} in Equation~\eqref{growth1}
\begin{equation}\label{eq:bound:growth_function}
    \Pi_{\mathcal{\Hclass_G}}(n) \leq \left(\frac{4en|V|}{\params{G}}\right)^{\params{G}}
\end{equation}

Combining Equations~\eqref{eq:comparison} and~\eqref{eq:bound:growth_function}, we get
\begin{equation}\label{eq:prob.bound.on.HG}
\Prob_{S\sim D}\!\left[\sup_{h\in\mathcal{H}}\!\big(  R(h)-\hat{R}_S(h)\big)\geq \varepsilon\right]
\!\leq\!
4\!\left(\frac{8en|V|}{\params{G}}\right)^{\!\!\params{G}}\!\!\! e^{-\frac{n\varepsilon^2}8}
\end{equation}%
Equation~\eqref{eq:gen.bound.on.HG} then directly follows from setting the failure probability equal to $\delta$ and solving for $\varepsilon$.
\end{proof}

\subsection{Proof of Theorem~\ref{thm:lower.bounds}}\label{app:proof.lower.bounds}

In this section, we give the proofs of all the lower bounds appearing in Table~\ref{tab:bounds.summary}.
All the proofs rely on the following lemma which gives a useful way for jointly deriving lower bounds on the pseudo-dimension and VC-dimension of the hypothesis classes of linear models for regression, completion and classification defined in Eq.~(\ref{eq:TN-hyp-reg}-\ref{eq:TN-hyp-comp}). 

\begin{lem}\label{lem:index.shattering.to.vc.lower.bounds}
Let $V\subset \R^d$ and define the hypothesis classes
\begin{equation*}
\Hcomp= \{ h: i\mapsto  \w_i\mid \w \in V \}
\end{equation*} 
\begin{equation*}
\Hreg = \{ h: \x\mapsto  \langle \w, \x\rangle \mid \w \in V \}
\end{equation*} 
\begin{equation*}
\Hclass = \{ h: \x\mapsto  \sign(\langle \w, \x\rangle) \mid \w \in V \}\enspace.
\end{equation*} 
If there exist $k$ indices $i_1,\cdots,i_k\in[d]$ that are shattered by $V$, i.e., such that 
$$
\left|\{(\sign(\w_{i_1}), \sign(\w_{i_2}), \cdots,\sign(\w_{i_k}))\mid \w\in V\}\right| = 2^k\enspace,
$$
then $\vcdim(\Hclass)$, $\pdim(\Hreg)$ and $\pdim(\Hcomp)$ are all lower bounded by $k$.
\end{lem}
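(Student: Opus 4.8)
The plan is to exhibit, for each of the three hypothesis classes, an explicit family of $k$ points that is shattered (in the appropriate sense) and to reduce the required shattering directly to the hypothesis that the indices $i_1,\dots,i_k$ are sign-shattered by $V$. The natural choice of test points is the canonical basis vectors $\vec{e}_{i_1},\dots,\vec{e}_{i_k}\in\R^d$ for the regression and classification classes, and the indices $i_1,\dots,i_k$ themselves for the completion class; in both pseudo-dimension cases I would use the zero thresholds $t_1=\dots=t_k=0$. The key identity underlying everything is $\langle\w,\vec{e}_{i_j}\rangle=\w_{i_j}$, so that evaluation of any hypothesis at $\vec{e}_{i_j}$ (or, for completion, at the index $i_j$) simply reads off the $i_j$-th coordinate of $\w$.

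First, for the classification class, since $h(\vec{e}_{i_j})=\sign(\langle\w,\vec{e}_{i_j}\rangle)=\sign(\w_{i_j})$, the set of label vectors realizable on $\vec{e}_{i_1},\dots,\vec{e}_{i_k}$ by $\Hclass$ is exactly $\{(\sign(\w_{i_1}),\dots,\sign(\w_{i_k}))\mid\w\in V\}$, which has cardinality $2^k$ by hypothesis; hence all $2^k$ sign patterns are attained, these points are shattered, and $\vcdim(\Hclass)\geq k$. For regression and completion the verification of pseudo-shattering is identical, since $h(\vec{e}_{i_j})=\w_{i_j}$ for $\Hreg$ and $h(i_j)=\w_{i_j}$ for $\Hcomp$. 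Given any target labeling $(s_1,\dots,s_k)\in\{-1,+1\}^k$, the sign-shattering hypothesis supplies some $\w\in V$ with $\sign(\w_{i_j})=s_j$ for every $j$; this $\w$ satisfies $\w_{i_j}<0$ exactly when $s_j=-1$, which is precisely the requirement $h(\cdot)<t_j=0$ iff $s_j=-1$ from the definition of pseudo-shattering. As the labeling is arbitrary, the $k$ points are pseudo-shattered, giving $\pdim(\Hreg)\geq k$ and $\pdim(\Hcomp)\geq k$.

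The only delicate point—and it is minor—is the compatibility between the $\sign$ appearing in the hypothesis and the strict inequality in the definition of pseudo-shattering, i.e. the treatment of the boundary case $\w_{i_j}=0$. Since $\Hclass\subset\{-1,+1\}^\Xcal$, the relevant $\sign$ is $\{-1,+1\}$-valued, and with the convention $\sign(0)=+1$ one has $\sign(a)=-1\iff a<0$, making the translation from the sign condition to the threshold-$0$ inequality exact; a pattern coordinate $s_j=+1$ then corresponds to $\w_{i_j}\geq 0$ and $s_j=-1$ to $\w_{i_j}<0$, as needed. Once this bookkeeping is pinned down, all three lower bounds follow uniformly, and I expect no further obstacle: the entire content of the argument is the observation that sign-shattering of indices by $V$ transfers, through the basis vectors, into shattering and pseudo-shattering of the associated linear and coordinate-evaluation hypothesis classes.
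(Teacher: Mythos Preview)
Your proposal is correct and follows essentially the same approach as the paper: both use the canonical basis vectors $\vec{e}_{i_1},\dots,\vec{e}_{i_k}$ (respectively the indices $i_1,\dots,i_k$) together with zero thresholds, reducing everything to the identity $\langle\w,\vec{e}_{i_j}\rangle=\w_{i_j}$. Your explicit discussion of the $\sign(0)$ convention is a bit more careful than the paper's proof, but the argument is otherwise identical.
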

\begin{proof}
Let $\vec{e}_1,\cdots,\vec{e}_d$ be the canonical basis of $\R^d$ and let  $i_1,\cdots,i_k\in[d]$ be a set of indices  shattered by $V$. Since $\langle\w,\vec{e}_i\rangle = \w_i$ for all $i \in [d]$, the points $\vec{e}_{i_1},\cdots,\vec{e}_{i_k}$ are shattered by $\Hclass$ and thus $\vcdim(\Hclass)\geq k$.

Similarly, since $\pdim(\Hcal)=\vcdim(\{(x,t)\mapsto \sign(h(x)-t)\mid h\in \Hcal\})$ for any hypothesis class $\Hcal$, the set of points $\vec{e}_{i_1},\cdots,\vec{e}_{i_k}$ with thresholds $t_1=t_2=\cdots=t_k=0$ is shattered by the hypothesis class $\{  (\x,t) \mapsto  \sign(\langle \w, \x\rangle - t) \mid \w \in V \}$, and thus $\pdim(\Hreg)\geq k$.

Lastly, the set of indices $i_1,\cdots,i_k$ with thresholds $t_1=t_2=\cdots=t_k=0$ is shattered by the class $\{ (i,t) \mapsto  \sign(\w_i-t)\mid \w \in V \}$, and thus $\pdim(\Hcomp)\geq k$.
\end{proof}

\subsubsection{Rank One Tensors}
\begin{thm}\label{thm:lowerbound.rankone}
Let $G_{\text{rank-one}}=
\scalebox{0.7}{
\begin{tikzpicture}[baseline=-0.5ex]
\tikzset{tensor/.style = {minimum size = 0.4cm,shape = circle,thick,draw=black,fill=blue!60!green!40!white,inner sep = 0pt}, edge/.style   = {thick,line width=.4mm},every loop/.style={}}

 \pgfmathsetmacro{\start}{0}
  \pgfmathsetmacro{\inc}{0.6}
  
  \foreach [count=\jj] \ii in {0,...,4} {
  \pgfmathsetmacro{\x}{\start+\ii*\inc}
  \ifthenelse{\ii=2 }%
  {
  \node[draw=none, inner sep=0pt, minimum size=9pt] (cp\ii) at (\x ,0) {$\cdots$};
  }%
  {
   \draw[edge] (\x,0)  -- (\x,0.4)  {};
   \node[tensor, inner sep=0pt, minimum size=9pt] (cp\ii) at (\x ,0) {};
   \node[draw=none] () at (\x-0.2 ,0.3) {\textcolor{gray}{$d$}};
  }
 }

\end{tikzpicture}
}$ be the tensor network structure corresponding to $p$th order rank one tensors, i.e., $\T(G_{\text{rank-one}})=\{\u_1\otimes\u_2\otimes\cdots\otimes \u_p \mid \u_1,\u_2,\cdots, \u_p \in\R^{d}\}$.

The VC-dimension and pseudo-dimensions  $\vcdim(\Hclass_{G_{\text{rank-one}}})$, $\pdim(\Hreg_{G_{\text{rank-one}}})$, $\pdim(\Hcomp_{G_{\text{rank-one}}})$  are all lower bounded by $(d-1)p$.
\end{thm}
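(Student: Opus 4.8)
The plan is to apply Lemma~\ref{lem:index.shattering.to.vc.lower.bounds} with $V=\T(G_{\text{rank-one}})$ regarded as a subset of $\R^{d^p}$ via vectorization. Under this identification, the ``indices'' of the lemma are the $d^p$ tensor positions $(i_1,\cdots,i_p)\in[d]^p$, so it suffices to exhibit $(d-1)p$ such positions whose joint sign pattern takes every value in $\{-1,+1\}^{(d-1)p}$ as $\W$ ranges over rank-one tensors. The key observation is that for $\W=\u_1\otimes\cdots\otimes\u_p$ every entry factorizes as $\W_{i_1,\cdots,i_p}=\prod_{j=1}^p(\u_j)_{i_j}$, so its sign is the product $\prod_{j=1}^p\sign((\u_j)_{i_j})$ of the signs of the individual factor components.

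First I would designate the first coordinate of each mode as a reference. For each mode $j\in[p]$ and each index $i\in\{2,\cdots,d\}$, consider the entry $\W_{1,\cdots,1,i,1,\cdots,1}$ with $i$ in position $j$ and $1$ in every other position; there are exactly $(d-1)p$ such positions, and they are pairwise distinct. By the factorization above, the sign of this entry equals $\sign((\u_j)_i)\prod_{j'\neq j}\sign((\u_{j'})_1)$.

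Next, to realize an arbitrary target pattern $(s_{j,i})_{j\in[p],\,i\in\{2,\cdots,d\}}\in\{-1,+1\}^{(d-1)p}$, I would pin every reference component to be positive by setting $(\u_j)_1=1$ for all $j\in[p]$, and set $(\u_j)_i=s_{j,i}\in\{-1,+1\}$ for $i\geq 2$. Because all reference signs are $+1$, the product $\prod_{j'\neq j}\sign((\u_{j'})_1)$ collapses to $1$, so the sign of the $(j,i)$-th selected entry is exactly $s_{j,i}$. Hence the corresponding rank-one tensor realizes the prescribed pattern, the $(d-1)p$ chosen positions are shattered by $V$, and Lemma~\ref{lem:index.shattering.to.vc.lower.bounds} yields the lower bound $(d-1)p$ on $\vcdim(\Hclass_{G_{\text{rank-one}}})$, $\pdim(\Hreg_{G_{\text{rank-one}}})$ and $\pdim(\Hcomp_{G_{\text{rank-one}}})$ simultaneously.

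The computation is elementary and I do not expect a substantive obstacle; the only real design choice is fixing the reference component of each factor to a positive value, which \emph{decouples} the otherwise entangled products of component signs so that each selected entry is governed by a single free sign. The remaining checks are routine: the $(d-1)p$ index tuples are distinct, and since every chosen component lies in $\{-1,+1\}$, each selected entry is nonzero and so has an unambiguous sign.
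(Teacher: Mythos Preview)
Your proof is correct and essentially identical to the paper's: both fix one coordinate of each factor to $1$ (you use coordinate $1$, the paper uses coordinate $d$), select the $(d-1)p$ positions that differ from the reference in exactly one mode, and apply Lemma~\ref{lem:index.shattering.to.vc.lower.bounds}. The paper phrases the construction as showing these entries can take \emph{arbitrary real values} rather than arbitrary signs, but this is a cosmetic difference.
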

\begin{proof}
We show that the set of indices 
$$S=\{(\underbrace{d,\cdots,d}_{i-1\text{ times}},j,\underbrace{d,\cdots,d}_{p-i\text{ times}})\mid i\in [p],j\in[d-1]\}$$ 
is shattered by $\T(G_{\text{rank-one}})$, the result then follows from Lemma~\ref{lem:index.shattering.to.vc.lower.bounds}. More precisely, we show that $S$ is shattered by the set of rank one tensors 
$$
A=\left\{ \begin{pmatrix} \v_1\\1 \end{pmatrix} \otimes  \begin{pmatrix} \v_2\\1 \end{pmatrix}\otimes \cdots \otimes  \begin{pmatrix} \v_p\\1 \end{pmatrix} \mid \v_1,\v_2,\cdots,\v_p\in\R^{d-1}\right\}\subset \T(G_{\text{rank-one}})\enspace .
$$
Indeed, for any multi-index $({\underbrace{d,\cdots,d}_{i-1\text{ times}},j,\underbrace{d,\cdots,d}_{p-i\text{ times}}})\in S$ and any rank one tensor $\T=\begin{pmatrix} \v_1\\1 \end{pmatrix} \otimes  \begin{pmatrix} \v_2\\1 \end{pmatrix}\otimes \cdots \otimes  \begin{pmatrix} \v_p\\1 \end{pmatrix}\in A$, we have
\begin{align*}
\T_{d,\cdots,d,j,d,\cdots,d} 
=  \left(\begin{pmatrix} \v_1\\1 \end{pmatrix} \otimes  \begin{pmatrix} \v_2\\1 \end{pmatrix}\otimes \cdots \otimes  \begin{pmatrix} \v_p\\1 \end{pmatrix} \right)_{d,\cdots,d,j,d,\cdots,d} 
=
(\v_i)_j\enspace.
\end{align*}
It follows that the $(d-1)p$ components $\T_{i_1,\cdots,i_k}$ for $\T\in A$ and $(i_1,\cdots,i_k)\in S$ can take any arbitrary values~(the entries of the vectors $\v_1,\cdots,\v_p\in\R^{d-1}$ and thus that $S$ is shattered by $A$. The result then directly follows from Lemma~\ref{lem:index.shattering.to.vc.lower.bounds}.
\end{proof}

\subsubsection{Tensor Train and Tensor Ring}

\newcommand{\multiidx}[1]{[\![ #1 ]\!]}%
 \begin{thm}\label{thm:lowerbound.TT}
Let $r\leq d^{\lfloor\frac{p-1}{2}\rfloor}$, let $G_{\text{TT}}(r)=
\scalebox{0.7}{
\begin{tikzpicture}[baseline=-0.5ex]
\tikzset{tensor/.style = {minimum size = 0.4cm,shape = circle,thick,draw=black,fill=blue!60!green!40!white,inner sep = 0pt}, edge/.style   = {thick,line width=.4mm},every loop/.style={}}
 \pgfmathsetmacro{\x}{0}
  \pgfmathsetmacro{\inc}{0.7}
\node[tensor] (T) at (\x,0) {};
\draw[edge] (T) -- (\x,-0.5);

\node[draw=none] () at (\x+0.2,-0.3) {\textcolor{gray}{$d$}};
\node[draw=none] () at (\x+0.35,0.2) {\textcolor{gray}{$r$}};

\pgfmathsetmacro{\x}{\x+\inc}
\node[tensor] (S) at (\x,0) {};
\draw[edge] (S) -- (\x,-0.5);
\node[draw=none] () at (\x+0.2,-0.3) {\textcolor{gray}{$d$}};
\node[draw=none] () at (\x+0.35,0.2) {\textcolor{gray}{$r$}};

\pgfmathsetmacro{\x}{\x+\inc}
\node[draw=none] (d) at (\x,0) {${\scriptstyle \cdots}$};
\draw[edge] (S) -- (d);
\draw[edge] (T) -- (S);
 \node[draw=none] () at (\x+0.35,0.2) {\textcolor{gray}{$r$}};

\pgfmathsetmacro{\x}{\x+\inc}
\node[tensor] (U) at (\x,0) {};
\draw[edge] (U) -- (\x,-0.5);
\draw[edge] (d) -- (U);
\node[draw=none] () at (\x-0.2,-0.3) {\textcolor{gray}{$d$}};
\node[draw=none] () at (\x+0.35,0.2) {\textcolor{gray}{$r$}};

\pgfmathsetmacro{\x}{\x+\inc}
\node[tensor] (V) at (\x,0) {};
\draw[edge] (V) -- (\x,-0.5);
\draw[edge] (V) -- (U);
\node[draw=none] () at (\x-0.2,-0.3) {\textcolor{gray}{$d$}};
\end{tikzpicture}
}$ be the tensor network structure corresponding to  $p$th order tensors of tensor train rank at most $r$, and let $G_{\text{TR}}(r)=
\scalebox{0.7}{
\begin{tikzpicture}[baseline=-0.5ex]
\tikzset{tensor/.style = {minimum size = 0.4cm,shape = circle,thick,draw=black,fill=blue!60!green!40!white,inner sep = 0pt}, edge/.style   = {thick,line width=.4mm},every loop/.style={}}
 \pgfmathsetmacro{\x}{0}
  \pgfmathsetmacro{\inc}{0.7}
\node[tensor] (T) at (\x,0) {};
\draw[edge] (T) -- (\x,-0.5);

\node[draw=none] () at (\x+0.2,-0.3) {\textcolor{gray}{$d$}};
\node[draw=none] () at (\x+0.35,0.2) {\textcolor{gray}{$r$}};

\pgfmathsetmacro{\x}{\x+\inc}
\node[tensor] (S) at (\x,0) {};
\draw[edge] (S) -- (\x,-0.5);
\node[draw=none] () at (\x+0.2,-0.3) {\textcolor{gray}{$d$}};
\node[draw=none] () at (\x+0.35,0.2) {\textcolor{gray}{$r$}};

\pgfmathsetmacro{\x}{\x+\inc}
\node[draw=none] (d) at (\x,0) {${\scriptstyle \cdots}$};
\node[draw=none] () at (\x,0.325) {\textcolor{gray}{$r$}};
\draw[edge] (S) -- (d);
\draw[edge] (T) -- (S);
 \node[draw=none] () at (\x+0.35,0.2) {\textcolor{gray}{$r$}};

\pgfmathsetmacro{\x}{\x+\inc}
\node[tensor] (U) at (\x,0) {};
\draw[edge] (U) -- (\x,-0.5);
\draw[edge] (d) -- (U);
\node[draw=none] () at (\x-0.2,-0.3) {\textcolor{gray}{$d$}};
\node[draw=none] () at (\x+0.35,0.2) {\textcolor{gray}{$r$}};

\pgfmathsetmacro{\x}{\x+\inc}
\node[tensor] (V) at (\x,0) {};
\draw[edge] (V) -- (\x,-0.5);
\draw[edge] (V) -- (U);
\node[draw=none] () at (\x-0.2,-0.3) {\textcolor{gray}{$d$}};
\path  (T) edge  [bend left,edge,in=30,out=150,distance=0.95cm]  (V);
\end{tikzpicture}
}$ be the tensor network structure corresponding to $p$th order tensors of tensor ring rank at most $r$. 

Then, the VC-dimension and pseudo-dimensions  $\vcdim(\Hclass_{G_{\text{TT}}(r)})$, $\vcdim(\Hclass_{G_{\text{TR}}(r)})$, $\pdim(\Hreg_{G_{\text{TT}}(r)})$, $\pdim(\Hreg_{G_{\text{TR}}(r)})$, $\pdim(\Hcomp_{G_{\text{TT}}(r)})$ and $\pdim(\Hcomp_{G_{\text{TR}}(r)})$  are all lower bounded by $r^2d$.

Moreover, in the particular case where $r=d$ and $p=3k$ for some $k\in \mathbb{N}$, the VC-dimension and pseudo-dimensions  $\vcdim(\Hclass_{G_{\text{TT}}(r)})$, $\vcdim(\Hclass_{G_{\text{TR}}(r)})$, $\pdim(\Hreg_{G_{\text{TT}}(r)})$, $\pdim(\Hreg_{G_{\text{TR}}(r)})$, $\pdim(\Hcomp_{G_{\text{TT}}(r)})$ and $\pdim(\Hcomp_{G_{\text{TR}}(r)})$  are all lower bounded by {$\frac{p(r^2d-1)}{3}$}.
\end{thm}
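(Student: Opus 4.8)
Both bounds are proved via Lemma~\ref{lem:index.shattering.to.vc.lower.bounds}: it suffices to exhibit a set of $N$ tensor indices shattered by $\T(G)$, i.e.\ such that the corresponding entries, viewed as functions of the core tensors, realize all $2^N$ sign patterns. The common strategy is to freeze all but a few cores so that a prescribed collection of tensor entries becomes a set of \emph{independent, freely signable} coordinates of the remaining free cores; the signing freedom then yields shattering.

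For the first bound ($r^2d$, under $r\le d^{\lfloor(p-1)/2\rfloor}$), I would write a TT tensor in matrix form $\T_{i_1\cdots i_p}=G_1[i_1]\cdots G_p[i_p]$ and single out one central core $G_m$ (of shape $r\times d\times r$, hence $r^2d$ entries). For fixed physical index $i_m$, the entry equals $\mathbf a^\top G_m[i_m]\,\mathbf b$, where $\mathbf a=G_1[i_1]\cdots G_{m-1}[i_{m-1}]$ and $\mathbf b=G_{m+1}[i_{m+1}]\cdots G_p[i_p]$ depend only on the surrounding indices. The key step is to choose the surrounding cores so that, as those indices vary, $\mathbf a$ and $\mathbf b$ each sweep out a basis of $\R^r$; this is possible precisely because $d^{m-1}\ge r$ and $d^{p-m}\ge r$, and the hypothesis $r\le d^{\lfloor(p-1)/2\rfloor}$ guarantees a central choice of $m$ (namely $m=\lfloor(p-1)/2\rfloor+1$) meeting both. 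Fixing bases $\{\mathbf a^{(\alpha)}\}_{\alpha}$ and $\{\mathbf b^{(\beta)}\}_{\beta}$, the map $G_m[i_m]\mapsto\big((\mathbf a^{(\alpha)})^\top G_m[i_m]\,\mathbf b^{(\beta)}\big)_{\alpha,\beta}$ is a linear bijection of $\R^{r\times r}$, so the $r^2d$ indices $(L_\alpha,i_m,R_\beta)$ can be assigned arbitrary values, in particular arbitrary signs, by choosing the entries of $G_m$. For TR I would argue identically using the cyclic form $\T_{i_1\cdots i_p}=\Tr(G_1[i_1]\cdots G_p[i_p])$: the product of the $p-1$ non-central cores plays the role of a matrix $M$, and $r\le d^{\lfloor(p-1)/2\rfloor}$ gives $d^{p-1}\ge r^2$, enough for $M$ to sweep a spanning set of $\R^{r\times r}$, whence each entry of $G_m$ is isolated.

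For the second bound ($\tfrac{p(r^2d-1)}{3}$, under $r=d$, $p=3k$), I would partition the $p$ cores into $k$ consecutive triples, designating the middle core of each triple as free and the two flanking cores as controls. To read the entry $(G_{3t-1}[i])_{\alpha\beta}$ of the $t$-th free core, I set the $t$-th triple to an \emph{active} configuration and every other triple to a \emph{neutral} one. The controls are built from the all-ones vector $\mathbf 1$: the active left control realizes $\mathbf 1^\top G[\alpha]=\mathbf e_\alpha^\top$ (via the slice $\tfrac1d\mathbf 1\mathbf e_\alpha^\top$) and the active right control realizes $G[\beta]\mathbf 1=\mathbf e_\beta$, while in the neutral configuration (reusing the $\alpha=\beta=1$ slices) every triple contracts to a rank-one matrix proportional to $\mathbf 1\mathbf 1^\top$ with a strictly positive scalar. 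Since products of $\mathbf 1\mathbf 1^\top$ telescope, the neutral triples collapse to a single constant $C>0$, and the active triple yields $\T=C\,\mathbf e_\alpha^\top G_{3t-1}[i]\,\mathbf e_\beta=C\,(G_{3t-1}[i])_{\alpha\beta}$ (for TR through the trace; for TT through $\mathbf 1^\top(\cdots)\mathbf 1$, with the two boundary cores absorbed into the end triples). Thus each read value is a positive multiple of exactly one free-core entry, so these entries are independently signable. The one positive scalar required from each neutral triple costs a single reserved entry per free core (I would fix, e.g., $(G_{3t-1}[j_0])_{11}=1$); the remaining $r^2d-1=d^3-1$ entries of each of the $k$ free cores are freely signable, giving $k(d^3-1)=\tfrac{p(r^2d-1)}{3}$ shattered indices and, via Lemma~\ref{lem:index.shattering.to.vc.lower.bounds}, the claim.

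The hard part is the simultaneous isolation in the second bound: one must design a \emph{single fixed} family of control cores that (i) in active mode extracts an arbitrary chosen matrix entry of the free core, and (ii) in neutral mode makes an entire triple act as a positive multiple of $\mathbf 1\mathbf 1^\top$, all while freezing only a single scalar degree of freedom per free core rather than a whole slice. The all-ones/rank-one device is exactly what makes the sharp ``$-1$'' (rather than ``$-d^2$'') attainable; the remaining work is bookkeeping for the TT boundary cores and verifying that the accumulated constant $C$ is strictly positive so that signs are preserved throughout.
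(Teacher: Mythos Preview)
Your argument for the first bound is essentially the paper's: single out a central core, fix the surrounding cores so the left and right partial products enumerate a basis of $\R^r$, and read off all $r^2d$ entries of the free core. The paper does this with explicit ``base-$d$'' cores built from canonical basis vectors; you phrase it as a linear-bijection argument, but the content is identical. One minor difference: for TR you argue abstractly that the product of the $p-1$ non-central cores can be made to span $\R^{r\times r}$ since $d^{p-1}\ge r^2$, whereas the paper simply replaces the two end cores by $\G^{(1)}_{:,j,:}=\vec e_1\vec e_j^\top$ and $\G^{(p)}_{:,j,:}=\vec e_j\vec e_1^\top$ so that the trace collapses to the TT expression verbatim. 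Your spanning argument is correct but would need one extra line to be fully explicit; the paper's $\vec e_1$-trick avoids that.

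For the second bound your route is genuinely different. The paper does \emph{not} build an ``active/neutral'' reading mechanism. Instead it fixes the flanking cores of each triple to $\vec e_j\vec e_1^\top$ and $\vec e_1\vec e_j^\top$ so that the resulting TT tensor \emph{factors exactly} as an outer product
\[
\T=\G^{(2)}\otimes\G^{(5)}\otimes\cdots\otimes\G^{(p-1)}\in(\R^{d\times d\times d})^{\otimes k},
\]
i.e.\ a rank-one $k$th-order tensor in $(\R^{d^3})^{\otimes k}$. The bound $k(d^3-1)=\tfrac{p(r^2d-1)}{3}$ then follows immediately from the already-proved rank-one lower bound (Theorem~\ref{thm:lowerbound.rankone}), with no further index bookkeeping. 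Your all-ones construction is correct---the neutral triples collapse to positive multiples of $\mathbf 1\mathbf 1^\top$, the active triple isolates a single free-core entry, and reserving one scalar per free core is exactly the right price---but it is considerably more work: you must verify positivity of the accumulated constant, handle the TT boundary triples separately, and check that the $k(d^3-1)$ tensor indices are distinct. The paper's reduction to the rank-one case sidesteps all of this and also makes transparent \emph{why} the ``$-1$'' appears (it is inherited from the rank-one bound). What your approach buys is self-containment: it does not rely on Theorem~\ref{thm:lowerbound.rankone}, and the same device would adapt to settings where such a clean outer-product factorisation is unavailable.
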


\begin{proof}

We start with the tensor train case, the tensor ring case will be handled similarly.

Let $r\leq d^{\lfloor\frac{p-1}{2}\rfloor}$. We will show that there exists a set of $r^2d$ indices $(i_1,\cdots,j_1),\cdots,(i_{r^2d},\cdots,j_{r^2d})$ that is shattered by $\T(G_{\text{TT}}(r))$~(the set of tensors of tensor train rank at most $r$), i.e., such that 
$$
\left|\{(\sign(\W_{i_1,\cdots,j_1}), \sign(\W_{i_2,\cdots,j_2}), \cdots,\sign(\W_{i_{r^2d},\cdots,j_{r^2d}}))\mid \W\in \T(G_{\text{TT}}(r))\}\right| = 2^{r^2d}\enspace.
$$
In order to do so, we will consider a tensor train tensor $\T$ with cores $\G^{(1)},\cdots,\G^{(p)}$, where the $(k+1)$th core $\G^{(k+1)}$ will be free while the other cores are fixed in such a way that each component of $\G^{(k+1)}$ appears exactly once in the entries of $\T$.

Let $\vec{e}_1,\cdots,\vec{e}_r$ be the canonical basis of $\R^r$ and  let $\vec{e}_i=\vec{0}$ for any $i>r$. Let $k=\lfloor\frac{p}{2}\rfloor$ and let $\G^{(k)}$ be the $k$th core of the tensor train tensor $\T$~(i.e., the middle core). The other cores of $\T$ are defined as follows: for each $j\in[d]$,
\begin{align*}
    \G^{(1)}_{j,:}
    &= 
    \vec{e}_j^\top \\
    \G^{(s)}_{:,j,:}
    &= 
    \vec{e}_1\vec{e}_{(j-1)d^{s-1}+1}^\top + \vec{e}_2\vec{e}_{(j-1)d^{s-1}+2}^\top + \cdots + \vec{e}_r\vec{e}_{(j-1)d^{s-1}+r}^\top\ \ \ \  &\text{for } s=2,\cdots,k-1 \\
   \G^{(s)}_{:,j,:}
    &= 
    \vec{e}_{(j-1)d^{p-s}+1}\vec{e}_1^\top+ \vec{e}_{(j-1)d^{p-s}+2}\vec{e}_2^\top+\cdots+ \vec{e}_{(j-1)d^{p-s}+r}\vec{e}_r^\top\ \ \ \  &\text{for } s=k+1,\cdots,p-1 \\
    \G^{(p)}_{j,:}
    &=  
    \vec{e}_j\enspace.
\end{align*}
With these definitions, one can check that
$$
\G^{(1)}_{i_1,:}\G^{(2)}_{:,i_2,:}\G^{(3)}_{:,i_3,:}\cdots \G^{(k-1)}_{:,i_{k-1},:} = \vec{e}^\top_{i_1+(i_2-1)d+(i_3-1)d^2+\cdots+(i_{k-1}-1)d^{k-2}} 
$$
for any $i_1,\cdots,i_{k-1}\in[d]$ and
$$
\G^{(k+1)}_{:,i_{k+1},:}\G^{(k+2)}_{:,i_{k+2},:}\cdots \G^{(p-1)}_{:,i_{p-1},:}\G^{(p)}_{:,i_p} = \vec{e}_{i_p+(i_{p-1}-1)d+(i_{p-2}-1)d^2+\cdots+(i_{k+1}-1)d^{p-k-1}} 
$$%
for any $i_{k+1},\cdots,i_{p}\in[d]$. Letting $\multiidx{j_0,\cdots,j_{t}}=j_0 + \sum_{\tau=1}^t (j_\tau-1) d^\tau$ for any $j_0,\cdots,j_t\in[d]$, it follows that for any $i_1,\cdots,i_p\in[d]$, 
$$
\T_{i_1,\cdots,i_p} =
\begin{cases}
\G^{(k)}_{\multiidx{i_1,i_2\cdots,i_{k-1}},i_k,\multiidx{i_{p},i_{p-1},\cdots,i_{k+1}}}&\text{if } \multiidx{i_1,i_2\cdots,i_{k-1}} \leq r\text{ and } \multiidx{i_{p},i_{p-1},\cdots,i_{k+1}}\leq r\\
0&\text{otherwise.}
\end{cases}
$$

Since $r\leq d^{\lfloor \frac{p-1}{2}\rfloor}$ and $k=\lfloor\frac{p}{2}\rfloor$, this implies that for any $k$-th core $\G^{(k)}$, the tensor train tensor $\T$ contains all the $r^2d$ entries of $\G^{(k+1)}$. Thus,  the set of $r^2d$ indices $\{({i_1,\cdots, i_{p}}) \mid \multiidx{i_1,i_2\cdots,i_{k-1}}\leq r,\ i_k\in[d], \multiidx{i_{p},i_{p-1},\cdots,i_{k+1}}\leq r\}$ is shattered by $\T(G_{\text{TT}}(r))$ and the first part of the theorem follows from Lemma~\ref{lem:index.shattering.to.vc.lower.bounds}.

We now prove the second part of the theorem for the TT case, using a different construction. Let $r=d$ and $p=3k$ for some $k\in\mathbb{N}$. We will construct a family of tensors in $ \T(G_{\text{TT}}(r))$ where a third of the $p=3k$ cores will be free while the other cores are fixed in such a way that the resulting tensor $\T$ can be seen as the outer product of $k$ 3rd order tensor of size $d\times d\times d$. By observing that such tensors can be interpreted as rank one $k$th order tensors in $\R^{d^3\times d^3\times \cdots \times d^3}$, the second part of the theorem will follow from Theorem~\ref{thm:lowerbound.rankone}.

Let $\G^{(1)},\cdots,\G^{(p)}$ be the core tensors of the TT decomposition. The core tensors $\G^{(3s+2)}\in\R^{d\times d\times d}$ for $s=0,\cdots,k-1$ are free while the other cores are defined as follows: for any $j\in[d]$,
\begin{align*}
    \G^{(1)}_{j,:}
    &= 
    \vec{e}_j^\top \\
   \G^{(3s+3)}_{:,j,:}
    &= 
    \vec{e}_j\vec{e}_1^\top\ \ \ \  &\text{for } s=0,\cdots,k-2 \\
    \G^{(3s+1)}_{:,j,:}
    &= 
    \vec{e}_1\vec{e}_j^\top\ \ \ \  &\text{for } s=1,\cdots,k-1 \\
    \G^{(p)}_{j,:}
    &=  
    \vec{e}_j\enspace.
\end{align*} 
It follows that, for any $i_1,\cdots, i_p \in[d]$, we have
\begin{align*}
\T_{i_1,\cdots,i_p}
&=
\G^{(1)}_{i_1,:}\G^{(2)}_{:,i_2,:}\cdots \G^{(p-1)}_{:,i_{p-1},:}\G^{(p)}_{:,i_p} \\
&=
(\vec{e}_{i_1}^\top)(\G^{(2)}_{:,i_2,:})(\vec{e}_{i_3}\vec{e}_1^\top)\ (\vec{e}_1\vec{e}_{i_4}^\top)(\G^{(5)}_{:,i_5,:})(\vec{e}_{i_6}\vec{e}_1^\top)\cdots
(\vec{e}_1\vec{e}_{i_{p-2}}^\top)(\G^{(p-1)}_{:,i_{p-1},:})(\vec{e}_{i_p})\\
&= \G^{(2)}_{i_1,i_2,i_3}\G^{(5)}_{i_4,i_5,i_6}\cdots\G^{(p-1)}_{i_{p-2},i_{p-1},i_p}
\end{align*}
which  implies that $\T = \G^{(2)}\otimes\G^{(5)} \otimes \cdots \otimes \G^{(p-1)} = \bigotimes_{s=0}^{k-1} \G^{(3s+2)}$. By reshaping the set of tensors constructed in this way into $k$th order tensors in $\R^{d^3\times\cdots\times d^3}$, one can see that this set of tensors is exactly the set of rank one $k$th order tensors of size $d^3\times\cdots\times d^3$, for which the corresponding VC dimension and pseudo dimensions are lower bounded  by { $k(d^3-1)=p(r^2d-1)/3$} from Theorem~\ref{thm:lowerbound.rankone}.
\guillaume{There is a small mistake in Table 1...}

The proof for the tensor ring case uses the exact same constructions with the difference in the definition of the first and last core tensors which are defined by $\G^{(1)}_{:,j,:}= \vec{e}_1\vec{e}_j^\top $  and $\G^{(p)}_{:,j,:}=\vec{e}_j\vec{e}_1^\top$
for each $j\in[d]$.
With these definitions, one can check that 
$$
\G^{(1)}_{:,i_1,:}\G^{(2)}_{:,i_2,:}\G^{(3)}_{:,i_3,:}\cdots \G^{(k-1)}_{:,i_{k-1},:} = \vec{e}_1\vec{e}^\top_{\multiidx{i_1,i_2,\cdots i_{k-1}}} 
$$
for any $i_1,\cdots,i_{k-1}\in[d]$ and
$$
\G^{(k+1)}_{:,i_{k+1},:}\G^{(k+2)}_{:,i_{k+2},:}\cdots \G^{(p-1)}_{:,i_{p-1},:}\G^{(p)}_{:,i_p,:} = \vec{e}_{\multiidx{i_p,i_{p-1},i_{k+1}}}\vec{e}_1^\top 
$$%
for any $i_{k+1},\cdots,i_{p}\in[d]$.
It follows that for any $i_1,\cdots,i_p\in[d]$, 
\begin{align*}
\T_{i_1,\cdots,i_p} 
&=
\Tr\left( \G^{(1)}_{:,i_1,:}\G^{(2)}_{:,i_2,:}\G^{(3)}_{:,i_3,:}\cdots \G^{(k-1)}_{:,i_{k-1},:} \G^{(k)}\G^{(k+1)}_{:,i_{k+1},:}\G^{(k+2)}_{:,i_{k+2},:}\cdots \G^{(p-1)}_{:,i_{p-1},:}\G^{(p)}_{:,i_p,:}\right)\\
&=
\begin{cases}
\G^{(k)}_{\multiidx{i_1,i_2\cdots,i_{k-1}},i_k,\multiidx{i_{p},i_{p-1},\cdots,i_{k+1}}}&\text{if } \multiidx{i_1,i_2\cdots,i_{k-1}} \leq r\text{ and } \multiidx{i_{p},i_{p-1},\cdots,i_{k+1}}\leq r\\
0&\text{otherwise.}
\end{cases}
\end{align*}
The proof of the first part of the theorem then follows the exact same argument as for the TT case. The second part of the theorem for TR is proved exactly as the one for TT by replacing the first and last cores again by  $\G^{(1)}_{:,j,:}= \vec{e}_1\vec{e}_j^\top $  and $\G^{(p)}_{:,j,:}=\vec{e}_j\vec{e}_1^\top$
for each $j\in[d]$.

\end{proof}

\subsubsection{Tucker}

 \begin{thm}\label{thm:lowerbound.Tucker}
Let $r\leq d$ and let $G_{\text{Tucker}}(r)=
\scalebox{0.7}{
\begin{tikzpicture}[baseline=-0.5ex]
\tikzset{tensor/.style = {minimum size = 0.4cm,shape = circle,thick,draw=black,fill=blue!60!green!40!white,inner sep = 0pt}, edge/.style   = {thick,line width=.4mm},every loop/.style={}}

 \pgfmathsetmacro{\start}{0}
  \pgfmathsetmacro{\inc}{0.6}
  
  \foreach [count=\jj] \ii in {0,...,4} {
  \pgfmathsetmacro{\x}{\start+\ii*\inc}
  \ifthenelse{\ii=2 }%
  {
  \node[draw=none, inner sep=0pt, minimum size=9pt] (cp\ii) at (\x ,0) {$\cdots$};
  }%
  {
   \draw[edge] (\x,0)  -- (\x,0.4)  {};
   \node[tensor, inner sep=0pt, minimum size=9pt] (cp\ii) at (\x ,0) {};
   \node[draw=none] () at (\x-0.2 ,0.3) {\textcolor{gray}{$d$}};
  }
 }
 \node[tensor,inner sep=0pt,, minimum size=9pt] (dot) at (\start+2*\inc,-0.4) {};
 \draw[thick,line width=.4mm,bend left] (dot) edge (cp0);
 \draw[thick,line width=.4mm,bend left] (dot) edge (cp1);
 \draw[thick,line width=.4mm,bend right] (dot) edge (cp3);
 \draw[thick,line width=.4mm,bend right] (dot) edge (cp4);
 \node[draw=none] () at (0.1 ,-0.4) {\textcolor{gray}{$r$}};
 \node[draw=none] () at (0.9 ,-0.2) {\textcolor{gray}{$r$}};
 \node[draw=none] () at (1.5 ,-0.2) {\textcolor{gray}{$r$}};
 \node[draw=none] () at (2.2 ,-0.4) {\textcolor{gray}{$r$}};
  
\end{tikzpicture}
}$ be the tensor network structure corresponding to $p$th order tensors of Tucker rank at most $r$. 

Then, the VC-dimension and pseudo-dimensions  $\vcdim(\Hclass_{G_{\text{Tucker}}(r)})$, $\pdim(\Hreg_{G_{\text{Tucker}}(r)})$ and $\pdim(\Hcomp_{G_{\text{Tucker}}(r)})$  are all lower bounded by $r^p$
\end{thm}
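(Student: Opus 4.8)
The plan is to invoke Lemma~\ref{lem:index.shattering.to.vc.lower.bounds}: it suffices to exhibit a set of $r^p$ multi-indices that is shattered by $\T(G_{\text{Tucker}}(r))$, i.e. such that the corresponding tensor entries realize all $2^{r^p}$ sign patterns as the Tucker factors range over their admissible values. Recall that a member of $\T(G_{\text{Tucker}}(r))$ is parameterized by a core tensor $\G\in\R^{r\times\cdots\times r}$ together with $p$ factor matrices $\U^{(1)},\cdots,\U^{(p)}\in\R^{d\times r}$, via
\[
\T_{i_1,\cdots,i_p} = \sum_{\alpha_1=1}^r\cdots\sum_{\alpha_p=1}^r \G_{\alpha_1,\cdots,\alpha_p}\,\U^{(1)}_{i_1,\alpha_1}\cdots\U^{(p)}_{i_p,\alpha_p}.
\]

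First I would fix all the factor matrices and leave only the core free. Since $r\leq d$, each factor matrix can be taken to be the canonical inclusion of $\R^r$ into $\R^d$, namely $\U^{(m)}_{i,\alpha}=1$ if $i=\alpha$ and $0$ otherwise. With this choice each sum collapses, so that for any indices $i_1,\cdots,i_p\in[r]$ one obtains $\T_{i_1,\cdots,i_p}=\G_{i_1,\cdots,i_p}$. In other words, the $r\times\cdots\times r$ subtensor of $\T$ indexed by $[r]^p$ is exactly the free core $\G$.

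Since $\G$ ranges over all of $\R^{r\times\cdots\times r}$, the $r^p$ entries $\{\T_{i_1,\cdots,i_p} : i_1,\cdots,i_p\in[r]\}$ can take arbitrary real values and in particular realize every one of the $2^{r^p}$ sign patterns. Hence the set of $r^p$ indices $[r]^p$ is shattered by $\T(G_{\text{Tucker}}(r))$, and Lemma~\ref{lem:index.shattering.to.vc.lower.bounds} immediately yields the lower bound $r^p$ on each of $\vcdim(\Hclass_{G_{\text{Tucker}}(r)})$, $\pdim(\Hreg_{G_{\text{Tucker}}(r)})$ and $\pdim(\Hcomp_{G_{\text{Tucker}}(r)})$. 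I expect there to be no real obstacle here: unlike the TT/TR case, no delicate arrangement of the fixed cores is required, and the hypothesis $r\leq d$ enters only to guarantee that the canonical inclusion $\R^r\hookrightarrow\R^d$ exists, so that the free core can be read off directly as a subtensor of $\T$.
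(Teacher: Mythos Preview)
Your proposal is correct and matches the paper's proof essentially verbatim: the paper fixes each factor matrix to the canonical inclusion $\P=\begin{pmatrix}\I_{r\times r}\\ \vec{0}\end{pmatrix}\in\R^{d\times r}$, leaves the core $\G$ free, observes that $\T_{i_1,\cdots,i_p}=\G_{i_1,\cdots,i_p}$ for $i_1,\cdots,i_p\in[r]$, and concludes via Lemma~\ref{lem:index.shattering.to.vc.lower.bounds} that $[r]^p$ is shattered.
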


\begin{proof}
Let $r\leq d$. We  show that there exists a set of $r^p$ indices $(i_1,\cdots,j_1),\cdots,(i_{r^3},\cdots,j_{rd})$ that is shattered by $\T(G_{\text{Tucker}}(r))$~(the set of tensors of Tucker rank at most $r$), i.e., such that 
$$
\left|\{(\sign(\W_{i_1,\cdots,j_1}), \sign(\W_{i_2,\cdots,j_2}), \cdots,\sign(\W_{i_{r^p},\cdots,j_{r^p}}))\mid \W\in \T(G_{\text{Tucker}}(r))\}\right| = 2^{r^p}\enspace.
$$
Let $\P=\begin{pmatrix} \I_{r\times r}&\vec{0}_{(d-r)\times (d-r)}\end{pmatrix}^\top\in\R^{d\times r}$. We consider the following subset of $\T(G_{\text{Tucker}}(r))$:
$$
A =\{\G\times_1\P\times_2\P\times_3\cdots\times_p \P \mid \G\in\R^{r\times r\times\cdots\times r}\} \subset \T(G_{\text{Tucker}}(r))
$$
where $\times_k$ denotes the mode-$k$ product~(see, e.g., \cite{kolda2009tensor}).
It is easy to see that any tensor $\T=\G\times_1\P\times_2\P\times_3\cdots\times_p \P \in A$ will have entries $\T_{i_1,\cdots,i_p}=\G_{i_1,\cdots, i_p}$ for any $i_1,\cdots,i_p\in[r]$. Hence the set of $r^p$ indices $[r]\times [r] \times \cdots \times [r] \subset [d]\times [d] \times \cdots \times [d] $ is shattered by $\T(G_{\text{Tucker}}(r))$ and the result directly follows from Lemma~\ref{lem:index.shattering.to.vc.lower.bounds}.
\end{proof}

\subsubsection{CP}

 \begin{thm}\label{thm:lowerbound.CP}
Let $r\leq d^{p-1}$ and let $G_{\text{CP}}(r)=
\scalebox{0.7}{
\begin{tikzpicture}[baseline=-0.5ex]
\tikzset{tensor/.style = {minimum size = 0.4cm,shape = circle,thick,draw=black,fill=blue!60!green!40!white,inner sep = 0pt}, edge/.style   = {thick,line width=.4mm},every loop/.style={}}

 \pgfmathsetmacro{\start}{0}
  \pgfmathsetmacro{\inc}{0.6}
  
  \foreach [count=\jj] \ii in {0,...,4} {
  \pgfmathsetmacro{\x}{\start+\ii*\inc}
  \ifthenelse{\ii=2 }%
  {
  \node[draw=none, inner sep=0pt, minimum size=9pt] (cp\ii) at (\x ,0) {$\cdots$};
  }%
  {
   \draw[edge] (\x,0)  -- (\x,0.4)  {};
   \node[tensor, inner sep=0pt, minimum size=9pt] (cp\ii) at (\x ,0) {};
   \node[draw=none] () at (\x-0.2 ,0.3) {\textcolor{gray}{$d$}};
  }
 }
 \node[fill=black,minimum size=0pt,inner sep=0.05cm,circle] (dot) at (\start+2*\inc,-0.4) {};
 \draw[thick,line width=.4mm,bend left] (dot) edge (cp0);
 \draw[thick,line width=.4mm,bend left] (dot) edge (cp1);
 \draw[thick,line width=.4mm,bend right] (dot) edge (cp3);
 \draw[thick,line width=.4mm,bend right] (dot) edge (cp4);
 \node[draw=none] () at (0.1 ,-0.4) {\textcolor{gray}{$r$}};
 \node[draw=none] () at (0.9 ,-0.2) {\textcolor{gray}{$r$}};
 \node[draw=none] () at (1.5 ,-0.2) {\textcolor{gray}{$r$}};
 \node[draw=none] () at (2.2 ,-0.4) {\textcolor{gray}{$r$}};
  
\end{tikzpicture}
}$ be the tensor network structure corresponding to $p$th order tensors of CP rank at most $r$. 

Then, the VC-dimension and pseudo-dimensions  $\vcdim(\Hclass_{G_{\text{CP}}(r)})$, $\pdim(\Hreg_{G_{\text{CP}}(r)})$ and $\pdim(\Hcomp_{G_{\text{CP}}(r)})$  are all lower bounded by $rd$.

\end{thm}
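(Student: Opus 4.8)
The plan is to reduce to Lemma~\ref{lem:index.shattering.to.vc.lower.bounds}: it suffices to exhibit $rd$ indices in $[d]^p$ that are shattered by $\T(G_{\text{CP}}(r))$, the set of $p$th order tensors of CP rank at most $r$. Recall that any such tensor has the form $\W = \sum_{\ell=1}^r \u^{(\ell)}_1\otimes\u^{(\ell)}_2\otimes\cdots\otimes\u^{(\ell)}_p$ with each $\u^{(\ell)}_m\in\R^d$, so that $\W_{i_1,\cdots,i_p} = \sum_{\ell=1}^r (\u^{(\ell)}_1)_{i_1}\cdots(\u^{(\ell)}_p)_{i_p}$. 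The idea is to fix the factors in modes $2,\cdots,p$ so that the $r$ rank-one terms become decoupled across a carefully chosen set of indices, while leaving the first-mode factors entirely free; the free factors then supply $rd$ independent scalars that realize every sign pattern.

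Concretely, since $r\leq d^{p-1}$, I would pick $r$ \emph{distinct} multi-indices $\vec{j}^{(1)},\cdots,\vec{j}^{(r)}\in[d]^{p-1}$, writing $\vec{j}^{(\ell)}=(j^{(\ell)}_2,\cdots,j^{(\ell)}_p)$, and set $\u^{(\ell)}_m = \vec{e}_{j^{(\ell)}_m}$ for each $m\in\{2,\cdots,p\}$ (canonical basis vectors of $\R^d$), while leaving each $\u^{(\ell)}_1\in\R^d$ free. Evaluating $\W$ at the index $(i, j^{(\ell)}_2,\cdots,j^{(\ell)}_p)$, the product $\prod_{m=2}^p(\vec{e}_{j^{(\ell')}_m})_{j^{(\ell)}_m}$ equals $1$ when $\ell'=\ell$ and, by distinctness of the multi-indices, equals $0$ otherwise; hence all cross terms cancel and $\W_{i,\vec{j}^{(\ell)}} = (\u^{(\ell)}_1)_i$.

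I would then take the index set $S=\{(i,\vec{j}^{(\ell)})\mid i\in[d],\ \ell\in[r]\}$. Its elements are pairwise distinct (two indices with different $\ell$ differ in their last $p-1$ coordinates, while two with the same $\ell$ but different $i$ differ in the first coordinate), so $|S|=rd$. The corresponding entries of $\W$ are exactly the $rd$ scalars $(\u^{(\ell)}_1)_i$, which range freely over $\R^{rd}$ as the free factors $\u^{(\ell)}_1$ vary; in particular they realize all $2^{rd}$ sign patterns. Thus $S$ is shattered by $\T(G_{\text{CP}}(r))$, and Lemma~\ref{lem:index.shattering.to.vc.lower.bounds} delivers the lower bound $rd$ on $\vcdim(\Hclass_{G_{\text{CP}}(r)})$, $\pdim(\Hreg_{G_{\text{CP}}(r)})$ and $\pdim(\Hcomp_{G_{\text{CP}}(r)})$.

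The only real content is the decoupling step, so I do not anticipate a serious obstacle; the construction is elementary once one sees that assigning distinct canonical-basis vectors to the fixed modes isolates each rank-one term. The single point worth checking carefully is that the hypothesis $r\leq d^{p-1}$ is exactly what guarantees the existence of $r$ distinct multi-indices in $[d]^{p-1}$, confirming that the argument is valid throughout the full claimed regime.
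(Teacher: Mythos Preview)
Your proof is correct and follows essentially the same approach as the paper: fix the factors in modes $2,\dots,p$ to canonical basis vectors indexed by $r$ distinct multi-indices in $[d]^{p-1}$ so that the rank-one terms decouple, leave the first-mode factors free, and apply Lemma~\ref{lem:index.shattering.to.vc.lower.bounds}. The paper's version simply makes the choice of distinct multi-indices explicit via the linearization $\tau_1+(\tau_2-1)d+\cdots+(\tau_{p-1}-1)d^{p-2}\leq r$, but the underlying construction is identical.
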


\begin{proof}
Let $r\leq d^{p-1}$. We  show that there exists a set of $rd$ indices $(i_1,\cdots,j_1),\cdots,(i_{rd},\cdots,j_{rd})$ that is shattered by $\T(G_{\text{CP}}(r))$~(the set of tensors of CP rank at most $r$), i.e., such that 
$$
\left|\{(\sign(\W_{i_1,\cdots,j_1}), \sign(\W_{i_2,\cdots,j_2}), \cdots,\sign(\W_{i_{rd},\cdots,j_{rd}}))\mid \W\in \T(G_{\text{CP}}(r))\}\right| = 2^{rd}\enspace.
$$

We construct a tensor $\T$ of CP rank at most $r$ such that each component of a matrix $\A\in\R^{d\times r}$ appears at least once in the entries of $\T$. Similarly to the previous proofs, $\A$ will be a free parameter allowed to take any value while the other components of the parametrization of $\T$ will be fixed.

Let $\A\in\R^{d\times r}$, we define $p$ tensors $\At^{(1)},\cdots,\At^{(p)}\in\R^{d\times\cdots\times d}$ of order $p$ as follows: for all $i_1,\cdots,i_p,\tau_1,\cdots,\tau_{p-1}\in [d]$,
\begin{align*}
    \At^{(1)}_{i_1,\tau_1,\cdots\tau_{p-1}} &=\begin{cases}
    \A_{i_1,\tau_1+(\tau_2-1) d+ \cdots + (\tau_{p-1}-1)d^{p-2}}&\text{if }\tau_1+(\tau_2-1) d+ \cdots + (\tau_{p-1}-1)d^{p-2}\leq r\\
    0&\text{otherwise}
    \end{cases} \\
    \At^{(s)}_{i_s,\tau_1,\cdots\tau_{p-1}} &= \delta_{i_s,\tau_{s-1}}\ \ \ \ \  \text{for $s=2,\cdots,p$}\\
\end{align*}
where $\delta$ is the Kronecker symbol.
Let $S=\{(\tau_1,\cdots,\tau_{{p-1}})\in[d]\times\cdots\times [d]\mid \tau_1+(\tau_2-1) d+ \cdots + (\tau_{p-1}-1)d^{p-2}\leq r\}$. Let $\T\in\R^{d\times\cdots d}$ be the $p$th order tensor defined by
\begin{align*}
\T_{i_1,i_2,\cdots,i_p} 
&=
\sum_{\tau_1=1}^d  \sum_{\tau_2=1}^d \cdots \sum_{\tau_{p-1}=1}^d \At^{(1)}_{i_1,\tau_1,\tau_2,\cdots,\tau_{p-1}} \At^{(2)}_{i_2,\tau_1,\tau_2,\cdots,\tau_{p-1}} \cdots \At^{(p)}_{i_p,\tau_1,\tau_2,\cdots,\tau_{p-1}} 
\end{align*}
for all $i_1,\cdots,i_p\in [d]$. It can easily be checked that $\T$ is a tensor of CP rank at most $r$, i.e., $\T\in\T(G_{\text{CP}}(r))$. Indeed, from the definition of $\At^{(1)}$, we have
\begin{align*}
\T_{i_1,i_2,\cdots,i_p} 
&=
\sum_{\tau_1=1}^d  \sum_{\tau_2=1}^d \cdots \sum_{\tau_{p-1}=1}^d \At^{(1)}_{i_1,\tau_1,\tau_2,\cdots,\tau_{p-1}} \At^{(2)}_{i_2,\tau_1,\tau_2,\cdots,\tau_{p-1}} \cdots \At^{(p)}_{i_p,\tau_1,\tau_2,\cdots,\tau_{p-1}} \\
&=
\sum_{(\tau_1,\cdots,\tau_{p-1})\in S} \At^{(1)}_{i_1,\tau_1,\tau_2,\cdots,\tau_{p-1}} \At^{(2)}_{i_2,\tau_1,\tau_2,\cdots,\tau_{p-1}} \cdots \At^{(p)}_{i_p,\tau_1,\tau_2,\cdots,\tau_{p-1}}
\end{align*}
where the sum is over at most $r$ terms~(from the definition of $S$). At the same time, we have
\begin{align*}
\T_{i_1,i_2,\cdots,i_p} 
&=
\sum_{(\tau_1,\cdots,\tau_{p-1})\in S} \At^{(1)}_{i_1,\tau_1,\tau_2,\cdots,\tau_{p-1}} \At^{(2)}_{i_2,\tau_1,\tau_2,\cdots,\tau_{p-1}} \cdots \At^{(p)}_{i_p,\tau_1,\tau_2,\cdots,\tau_{p-1}}\\
&=
\sum_{(\tau_1,\cdots,\tau_{p-1})\in S} \At^{(1)}_{i_1,\tau_1,\tau_2,\cdots,\tau_{p-1}} \delta_{i_2,\tau_1}\delta_{i_3,\tau_2}\cdots\delta_{i_{p},\tau_{p-1}}\\
&=
\begin{cases}
\A_{i_1,i_2+(i_3-1) d+ \cdots + (i_{p}-1)d^{p-2}}&\text{ if }i_2+(i_3-1) d+ \cdots + (i_{p}-1)d^{p-2}\leq r\\
0&\text{ otherwise}
\end{cases}
\end{align*}
Hence, each one of the components of $\A$ appears exactly once in $\T$. In particular, this implies that the set of indices $$\{(i_1,\cdots,i_{p})\in [d]\times\cdots\times [d] \mid i_2+(i_3-1) d+ \cdots + (i_{p}-1)d^{p-2}\leq r\}$$ of size $rd$ is shattered by $\T(G_{\text{CP}}(r))$. The theorem then directly follows from Lemma~\ref{lem:index.shattering.to.vc.lower.bounds}.
\end{proof}

\section{Experiments}\label{app:experiment} To evaluate the theoretical upper bound provided in Theorem~\ref{thm:generalization.bound.TN.classifier}, we perform a simple binary classification experiment with synthetic data.
We draw a random low rank TT target tensor $\W\in\R^{4\times 4 \times 4\times 4}$ of rank $8$ by drawing the components of the cores of the TT decomposition i.i.d. from a uniform distribution between -1 and 1. Input-output data is generated with $y_i=\sign(\langle \W,\Xt_i\rangle)$ for training and testing, where the components of $\Xt_i$ are drawn i.i.d. from a normal distribution. Using the cross-entropy as loss function, we optimize the empirical risk using stochastic gradient descent with a learning rate of $10^{-2}$ to learn a TT hypothesis of rank $r$. 

In Figure~\ref{fig:genral_bound}, we report the log generalization gap of the learned hypothesis $h$, $\log(R(h)-\hat{R}_S(h))$, where the true risk $R(h)$ is estimated on a test set of size $4,000$ for different scenarios. In Figure~\ref{fig:genral_bound}~(left), we show how the sample size affects the generalization gap for learned hypothesis of rank $r=2$ and $r=4$. As expected, the generalization gap decreases as the sample size grows, and is smaller for $r=2$ than $r=4$ which is also expected from the Theorem~\ref{thm:generalization.bound.TN.classifier}. In Figure~\ref{fig:genral_bound}~(right), we show how the rank $r$ of the learned hypothesis affects the generalization for samples sizes 2,000 and 4,000. As expected, the higher the rank of the TT weight tensor, the larger the model complexity and hence the generalization gap. In both figures, we observe that the theoretical upper bound and the experimental results follow a similar trend as a function of sample size and hypothesis rank.

\begin{figure}[H]
\begin{center}

        \includegraphics[width= 6cm]{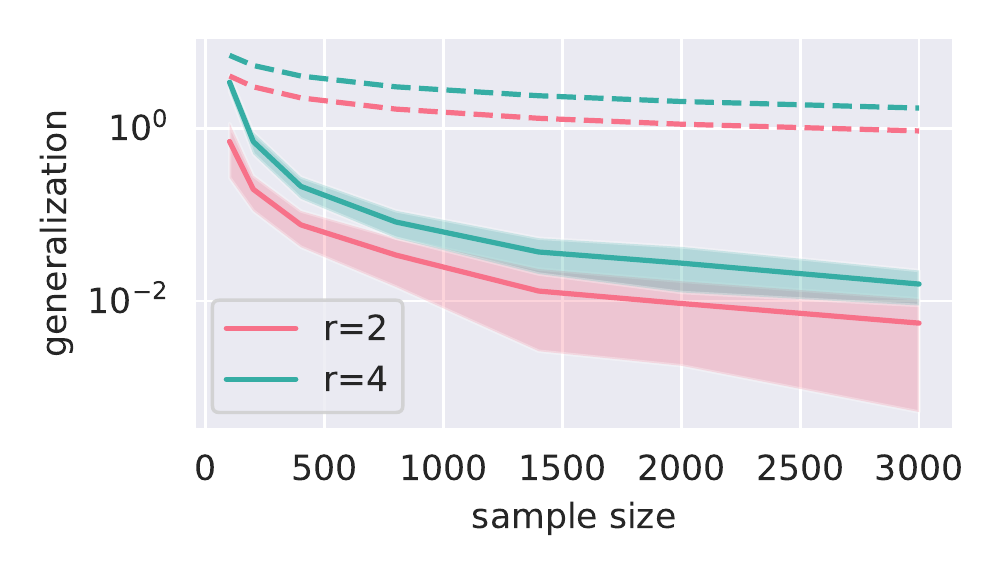}
 \quad
         \includegraphics[width= 6cm]{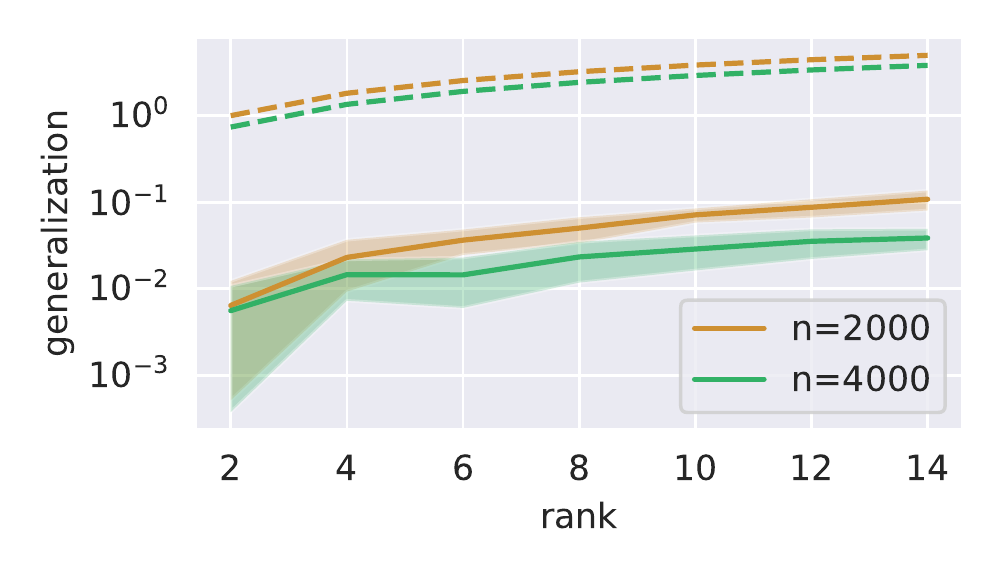}
         
\end{center}
   \caption{ Dashed lines represent the theoretical bound, full lines represent the log generalization gap (averaged over 20 runs for both experiments), and shaded areas show the standard deviation. (left) Generalization error for two models with ranks $r = 2$ and $r = 4$ as a function of training size.  (right)
      Generalization error for two sample sizes $n = 2000$ and $n = 4000$ as a function of the rank of the learned hypothesis. }
    \label{fig:genral_bound}
\end{figure}

\end{document}